\pgfplotsset{compat=1.5}
\newtheorem{lemma}{Lemma}
\newtheorem{proposition}{Proposition}
\newtheorem{corollary}{Corollary}
\newtheorem{theorem}{Theorem}
\newtheorem*{theorem*}{Theorem}
\newtheorem{definition}{Definition}
\newtheorem{remark}{Remark}
\newcommand {\beq}{\begin{equation}}
\newcommand {\eeq}{\end{equation}}
\newcommand {\beqn}{\begin{equation*}}
\newcommand {\eeqn}{\end{equation*}}
\newcommand {\bear}{\begin{eqnarray}}
\newcommand {\eear}{\end{eqnarray}}
\newcommand {\bearn}{\begin{eqnarray*}}
\newcommand {\eearn}{\end{eqnarray*}}
\DeclareMathOperator*{\argmin}{arg\,min}
\newcommand{\X}{\mathcal{X}}
\newcommand{\Y}{\mathcal{Y}}
\newcommand{\ddpol}{\pi(\mathbf{x}, \mathbf{y})}
\newcommand{\wreg}[2][]{\mathsf{Reg}_n(#1,#2)}
\newcommand{\actVr}{a}
\newcommand{\mesOut}{F_{0}}
\newcommand{\mesIn}[1]{F_{#1}}
\newcommand{\ERM}{\pi^{\mathrm{ERM}}}
\newcommand{\LERM}{k^*\text{-}\mathrm{ERM}^{\dagger}}
\newcommand{\ERMt}{\mathrm{ERM}^{\dagger}}
\newcommand{\rad}{\epsilon}
\newcommand{\numS}{n}
\newcommand{\cu}{c_u}
\newcommand{\co}{c_o}
\title{
From Contextual Data to Newsvendor Decisions:\\
On the Actual Performance of Data-Driven Algorithms
}
		\author[1]{Omar Besbes}
		\author[2]{Will Ma}
		\author[3]{Omar Mouchtaki}
		\affil[1]{Columbia University, Graduate School of Business, \texttt{obesbes@columbia.edu}}
		\affil[2]{Columbia University, Graduate School of Business, \texttt{wm2428@gsb.columbia.edu}}
		\affil[3]{NYU Stern School of Business, \texttt{om2166@stern.nyu.edu}}
\date{first version: February 16, 2023; last revised: September 21, 2025}
\begin{document}

\maketitle

\begin{abstract}%
In this work, we study how the relevance/quality and quantity of past data influence performance  by analyzing a contextual Newsvendor problem, in which a decision-maker trades off between underage and overage costs under uncertain demand. We consider a setting in which past demands observed under ``close by'' contexts come from   close by distributions and analyze the  performance of data-driven algorithms through a notion of  \textit{context-dependent}  worst-case expected regret. We analyze  the broad class of Weighted Empirical Risk Minimization (WERM) policies which weigh past data according to their similarity in the contextual space. This class includes classical policies such as ERM, $k$-Nearest Neighbors and kernel-based policies. 
Our main methodological contribution is to characterize \textit{exactly} the worst-case regret of any WERM policy on any given configuration of contexts. 
To the best of our knowledge, this provides the first understanding of tight performance guarantees in any contextual decision-making problem, with past literature focusing on upper bounds via concentration inequalities.
We instead take an optimization approach, and isolate a structure in the Newsvendor loss function that allows to reduce the infinite-dimensional optimization problem over worst-case distributions to a simple line search.
This in turn allows us to  unveil fundamental insights that  were obfuscated by previous general-purpose bounds. We characterize  \textit{actual} guaranteed performance as a function of the contexts, as well as granular insights on the learning curve of algorithms.

\textbf{keywords:} data-driven stochastic optimization, weighted empirical risk minimization, k-nearest neighbors, contextual newsvendor, worst-case regret, data quality

\end{abstract}

\section{Introduction}\label{sec:intro}

A fashion retailer is planning how many units to produce for a seasonal product (e.g.  pairs of speckled gray gloves), in anticipation of demand, for the next selling season. It has historical sales numbers from recent years for it, as well as similar SKUs (e.g., speckled navy gloves).
How should the retailer use this past data collected in contexts different from the one in which it is currently operating?
Motivated by this question, this work aims to broadly understand the impact of data size and data relevance on performance in the contextual Newsvendor. The Newsvendor problem is a prototypical model of decision-making in the face of uncertainty. This problem captures the common trade-off which emerges when making decisions to balance between an underage cost incurred when the resource is insufficient compared to the realized outcome and an overage cost which materializes when some resource is wasted, with this resource being the gloves that are produced and not sold in the example above. This model is widely applied across different areas such as inventory management, capacity planning, scheduling or overbooking. Beyond the operational aspect of the model, the Newsvendor loss is of particular interest for its statistical interpretation; it generalizes the $\ell_1$ loss function to allow for different weights on the error types (overestimating or underestimating) and coincides with the quantile loss used in quantile regression.

In this contextual Newsvendor problem, past data of the form $(x_i, y_i)_{i=1}^n$, where $x_i$ denotes the context and $y_i$ the demand observed under context $x_i$, are used to take an action $a$ for a future (``out-of-sample'') context $x_0$. The loss of the action $\ell(a,y_0)$ depends only on an unknown quantity $y_0$ associated with $x_0$.
In this example, $y_0$ is the demand of speckled gray gloves to materialize next winter, and the ideal production action $a=y_0$ would have loss $\ell(a,y_0)=0$. However, two main challenges are present: $i.)$  $y_0$ is not known in advance; and $ii.)$ it is drawn according to a distribution unknown to the decision-maker.
The observed $y_1,\ldots,y_n$ help to predict the distribution of $y_0$, and the contexts $x_0,x_1,\ldots,x_n$ are relevant only to the extent that they indicate how each $y_i$ may be informative to predict the distribution of $y_0$.
In particular, we assume a geometry (formalized later) where contexts $x_i$ that are closer to $x_0$ are likely to have $y_i$ values that are drawn from distributions ``closer'' to that of $y_0$.
In this example, the context $x_0$ could be described by the vector (speckled, gray, 2024), and closer $x_i$ would have more similar SKU features and come from more recent weeks.

It is generally presumed that the retailer can make a better production decision if it has more data and more contexts $x_i$ that are close to $x_0$, and should place greater emphasis on the $y_i$ values associated with these nearby $x_i$'s when making its decision.
However, despite the prevalence of contextual decision-making in theory and practice, many fundamental questions are still not well understood.
How do the dissimilarities between the available contexts $x_1,\ldots,x_n$ and $x_0$, or the quality of the historical data for the focal decision,  affect performance?
What levels of performance can be achieved as a function of the data at hand, and how many samples are needed to operate well?
In the present paper, we explore a framework to study these questions systematically for the Newsvendor problem.

\noindent
\subsection{Main Contributions}
\label{sec:contributions}

To analyze how the quantity and relevance of historical data affect the performance of data-driven policies, we adopt a context-dependent notion of performance. We model demand distributions as unknown and context-specific, with the distance between distributions bounded by the dissimilarity of their contexts, a condition we refer to as the local condition (\Cref{def:local_const}). Policy performance is measured through expected regret, defined as the excess loss relative to the best action under the out-of-sample distribution, and we seek guarantees that hold in the worst case over distributions satisfying the local condition. By characterizing worst-case regret for different configurations of past and out-of-sample contexts, this framework captures a range of practical scenarios and makes it possible to isolate the impact of data relevance and data quantity on performance. Within this setting, we study the widely used class of Weighted Empirical Risk Minimization (ERM) policies, which encompasses ERM, k-Nearest Neighbors, and Kernel methods.

\subsubsection{Performance characterization: learning without concentration}

Contrasting the typical approach of upper-bounding learning regret via concentration inequalities, we take a more precise approach of
trying to quantify the \textit{worst-case} learning regret.  
A priori, this optimization problem of identifying the hardest-to-learn distributions is difficult to solve, yet it is necessary for deriving insights about actual worst-case performance.
Our main methodological contribution is solving this optimization problem using the special structure of the Newsvendor loss function, a method we dub ``learning without concentration.''

In particular, we investigate in \Cref{sec:main_result} the special structure of the Newsvendor loss function, in conjunction with that of  Weighted ERM policies and the Kolmogorov distance to derive structural properties of the optimization problem at hand. In that setting, we characterize \textit{exactly} the structure of worst case distributions which maximize regret for any number of samples (i.e., any data quantity) and any past context (i.e., any data relevance). Our main result, formally presented later as \Cref{cor:regret_WERM}, is informally stated below.

\begin{theorem*}[Main result, informal version]
\label{thm:informal_main}
Assume the data-generation process satisfies the local condition (see \Cref{def:local_const}).
For any sample size and any historical contexts, the problem of determining the worst-case regret of any Weighted ERM policy can be reduced from a non-convex constrained infinite-dimensional optimization problem to a one-dimensional optimization problem. In turn, the worst-case regret can be exactly quantified.
\end{theorem*}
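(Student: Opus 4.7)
The plan is to tackle the infinite-dimensional non-convex problem through a sequence of reductions that follow the structural clues offered by the Newsvendor loss. The first move is to step back from Weighted ERM and work instead with a broader class of ``separable'' policies, since the machinery will rely on two abstract properties rather than the specific form of WERM: (i) the CDF of the prescribed action can be written as a polynomial in the CDFs $F_1,\ldots,F_n$ of the historical outcome distributions, and (ii) that polynomial is monotone in each $F_i$. Plugging this representation into the Newsvendor expected loss, which, via its quantile interpretation, reduces to an integral of a CDF against an affine step function, should turn the worst-case regret into a polynomial functional of $F_0, F_1,\ldots,F_n$ subject to the Kolmogorov-distance constraints imposed by the local condition.

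The second step is the reduction to Bernoulli laws. I would exploit the fact that the Newsvendor loss only ``sees'' the mass of a distribution on the two sides of the chosen action: combined with the polynomial representation above, one can argue via an exchange argument that pushing the mass of each $F_i$ toward a two-point support $\{0,M\}$ (for a sufficiently large $M$) weakly increases regret while still respecting the Kolmogorov ball around $F_0$. This collapses each $F_i$ and $F_0$ to a Bernoulli with a single unknown mean, yielding an $(\numS+1)$-dimensional program over $[0,1]^{\numS+1}$, which is precisely \Cref{prop:reduction_bern}. The third step fixes the out-of-sample mean $p_0$ and characterizes the worst-case historical means $p_1,\ldots,p_\numS$ as functions of $p_0$: using the monotonicity property (ii), the regret is monotone in each $p_i$, so the worst-case $p_i$ sits at the extreme of the interval carved out by the local condition around $p_0$. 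This delivers a closed form $p_i(p_0)$ (the content of \Cref{prop:worst_history}), which substituted back reduces the problem to a one-dimensional line search over $p_0\in[0,1]$.

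The final step is to establish that Weighted ERM policies lie inside the non-decreasing separable class; this will proceed in two layers, first reformulating WERM as a weighted counting / quantile rule (\Cref{prop:WEM_to_counting}) and then showing that any such counting rule is separable and monotone (\Cref{prop:counting_to_sep}). I expect the main obstacle to be the Bernoulli reduction. The Newsvendor loss is piecewise linear and the policy's output CDF is a multivariate polynomial in the historical CDFs with mixed signs, so the exchange argument cannot simply invoke convexity: one has to identify the right local variation of each $F_i$ that simultaneously preserves feasibility with respect to the Kolmogorov ball, weakly reduces the clairvoyant's minimum loss, and weakly increases the policy's expected loss. Pinning down that monotone-exchange lemma, and ensuring it composes consistently across all $n+1$ distributions, is the technical core; once in place, the remaining reductions are driven by monotonicity and elementary one-dimensional analysis.
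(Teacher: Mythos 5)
Your overall architecture coincides with the paper's: introduce separable policies, reduce to Bernoulli laws, use monotonicity to pin the historical means at the boundary of the local-condition interval around $\mu_0$, run a one-dimensional line search, and finally show Weighted ERM $\subseteq$ counting $\subseteq$ non-decreasing separable (the paper's \Cref{prop:WEM_to_counting} and \Cref{prop:counting_to_sep}, in that order of containment). The monotonicity step and the policy-class containments are exactly as you describe.

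There is, however, a genuine gap at the step you yourself single out as the technical core: the Bernoulli reduction. You propose a mass-exchange argument that pushes each $F_i$ toward a two-point support while checking feasibility, oracle loss, and policy loss simultaneously, and you correctly observe that this is delicate and that you do not see how to make the local variations compose across the $n+1$ coupled distributions. The paper does not do this, and the exchange route is likely unworkable as stated because there is no natural invariant (mean, quantile, or otherwise) to preserve under the perturbation, and the Kolmogorov ball ties every $F_i$ to $F_0$ at every point. The mechanism that actually closes the argument is pointwise optimization: once the separable representation gives
$\mathbb{E}[R] = (\cu+\co)\int_0^1 \Psi^{\pi}\left(F_{x_0}(z),\ldots,F_{x_\numS}(z)\right)dz$
(\Cref{lem:cdf_integral_reduction}), one observes that the Kolmogorov constraint is itself a pointwise constraint $|F_{x_0}(z)-F_{x_i}(z)|\le d(x_0,x_i)$ for every $z$, so the integral is bounded by $\sup\Psi^{\pi}$ over the feasible box $\{(z_0,\dots,z_\numS): |z_i-z_0|\le d(x_0,x_i)\}$, and this supremum is attained by CDF trajectories that are constant on $(0,1)$ --- precisely the Bernoulli distributions. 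This decoupling across $z$ is what makes the reduction a one-line bound rather than a hard exchange lemma; without it your plan stalls exactly where you predicted it would.
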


To prove our result, we move beyond the specific class of Weighted ERM policies and adopt a more general perspective by introducing the broader class of ``separable'' policies. This generalization allows us to clearly identify the essential factors driving our result: specifically, how the Newsvendor loss interacts with certain properties of these policies.
In particular, we show that, under the Newsvendor loss, the cumulative distribution function (cdf) of the decision prescribed by a separable policy can be expressed as a polynomial of the cumulative distribution functions of the historical samples observed. 
We then show that the worst-case performance for a given separable policy is achieved when the historical and out-of-sample distributions are Bernoulli distributions (Proposition \ref{prop:reduction_bern}). This allows to reduce the initial infinite dimensional problem to an $(\numS+1)$-dimensional optimization problem over the possible means of the Bernoulli distributions. 
Our second reduction establishes that the worst-case sequence of historical distributions admits a given structure as a function of the (unknown) out-of-sample distribution (Proposition \ref{prop:worst_history}). This result is obtained by exploiting the monotonic behavior of the regret of a non-decreasing separable policy with respect to the means of the Bernoulli distributions associated with historical data, allowing to characterize the sequence of worst-case historical means. In turn, the initial problem can be reduced to  a one-dimensional optimization problem over a bounded set which can be solved exactly through a line search.
This completes the reduction for the class of non-decreasing separable policies. Finally, a critical step of our approach consists in proving that the class of non-decreasing separable policies encompasses Weighted ERM (\Cref{prop:counting_to_sep,prop:WEM_to_counting}).

Our analysis provides the strongest results when the local condition is defined through the Kolmogorov distance (\Cref{def:local_const}). Importantly, our optimization-based approach is flexible and extends to other choices of local conditions. For example, in \Cref{sec:apx_Wasserstein} we consider the Wasserstein distance and show that our method yields considerably tighter performance guarantees than state-of-the-art concentration-based approaches, even when the exact worst-case distributions cannot be characterized.

\subsubsection{New insights on learning behavior}

Our results enable us to quantify key objects of interest and revisit the motivating questions: How does the dissimilarity between historical contexts $x_1,\ldots,x_n$ and the out-of-sample context $x_0$ impact performance? What performance levels are achievable given the available data, and how many samples are necessary to operate effectively?

To contextualize our findings, we compare them against the prior state of the art. As articulated earlier, previous general-purpose bounds, which are applicable to general loss functions, have been the workhorse of a large portion of machine learning and sample complexity results. Over the past decades, they enabled to provide an understanding (through upper bounds) of the impact of sample size on the the regret of central policies. Despite their wide applicability, we will show that such bounds may yield overly conservative estimates, failing to accurately reflect the true worst-case performance in specific contexts.

\begin{figure}[h!]
\centering
\begin{tikzpicture}[scale = 0.9]
\begin{axis}[
            title={},
            xmin=0,xmax=200,
            ymin=0,ymax=0.14,
            scaled y ticks={base 10:2},
            width=10cm,
            height=8cm,
            table/col sep=comma,
            xlabel = number of samples $\numS$,
            ylabel = worst-case regret,
            grid=both,
            skip coords between index={0}{1},
            legend pos=south east]

\addplot[gray, line width=.8mm, domain=1:200, samples=600] {1.7/x + 0.08};  
\addlegendentry{ERM (shape of previous bounds)}

\addplot [blue,mark=square,mark options={scale=.3}] table[x={Ns},y={epsilon=0.1}] {Data/paper_all_data_SAA_q9.csv};
\addlegendentry{ERM (our bound)}

\addplot [red,mark=square,mark options={scale=.3}] table[x={Ns},y={tweak}] {Data/paper_all_data_mix_q9.csv};
\addlegendentry{$\LERM$}
\end{axis}
\end{tikzpicture}
\caption{\textbf{Illustrative summary of our main insights.} We note that the gray curve depicts the \textit{shape} of ``previous bounds'' on the same scale as the bounds we derive; the actual previous bounds are in fact much higher.}
\label{fig:summary}
\end{figure}

\Cref{fig:summary} summarizes our main findings in a prototypical scenario where all past contexts are identical and differ from the out-of-sample context by a fixed dissimilarity~$\zeta$. We compare our exact results for ERM with the best-known general-purpose regret bound of \citet{mohri2012new}, based on Rademacher complexities and uniform convergence, and uncover three key insights. 

First, general-purpose bounds vastly overestimate the number of samples required to achieve a given performance, whereas our analysis reduces these requirements by 2 to 3 orders of magnitude (see \Cref{tab:detailed} in \Cref{sec:mmBound}). 
Second, our results expand the frontier of what is achievable: performance levels previously thought unattainable by ERM, even with infinitely many samples, can in fact be achieved with only tens of samples. Third, and most importantly, the actual \textit{shape of the learning curve}, defined as the policy's worst-case regret as a function of the sample size, is fundamentally different from what general-purpose bounds suggest. 
While the latter depict a smooth, monotonically decreasing curve approaching a non-zero limit and implying that more data always helps, our exact analysis shows that when $\zeta > 0$, the learning curve of ERM is oscillatory (a behavior previously observed and avoidable through randomization; see \Cref{sec:apx_improve_mix}). More importantly, it is \emph{globally non-monotonic}: performance improves initially, attains a global optimum at a finite sample size, and then deteriorates as additional samples are added. This reveals the existence of an \emph{effective sample size}, beyond which more data harms rather than helps. 
Motivated by this, we introduce the $\LERM$ policy, which uses a subset of available samples, and we show numerically that it achieves near-optimal performance across algorithms by comparing its performance to a universal lower bound on the worst-case regret. Taken together, these findings imply that \emph{after a few samples, data relevance, as opposed to data quantity, is the first-order limitation to improved decision-making.}

Beyond these key insights, we illustrate in \Cref{sec:many_dissimilarities} the generality of our approach by evaluating the worst-case performance of various subclasses of Weighted ERM policies when sample dissimilarity degrades linearly over time. This setting models an important special case of our framework in which the outcome distributions drift over time. 
We observe in this case that the effective sample size is even smaller than the one obtained when all past samples have the same dissimilarity.

Finally, we examine numerically the robustness of our insights beyond the worst-case setting. In \Cref{sec:beyond_worst_case}, we evaluate the performance of Weighted ERM policies on instances that are non-Bernoulli and where the distribution remains fixed as the sample size varies. Our results show that the key insights from the worst-case analysis remain broadly applicable even in these fixed-instance settings.

In summary, our paper provides new theoretical insights into data-driven decision-making, highlighting the limitations of general-purpose bounds and demonstrating the practical value of tailored analyses for understanding achievable performance as a function of data relevance and data size.

\subsection{Literature review}
\label{sec:review}

\noindent
\textbf{Data-driven stochastic optimization.}
Our work broadly falls under the area of data-driven stochastic optimization, where samples are used to infer an unknown out-of-sample distribution.
In the absence of contexts, pioneering works \citep{kleywegt2002sample,kim2015guide} analyze ERM, also referred to as Sample Average Approximation.
In the presence of contexts,
a wide range of statistical learning methods have been developed and analyzed
\citep{vapnik1999overview,bousquet2002stability,gyorfi2002distribution}.
Our paper differs by taking an optimization approach and identifying exact worst-case regret for a specific decision problem.

More specifically related
are papers on contextual decision problems \citep{kao2009directed,donti2017task,elmachtoub2021smart,bertsimas2022data,kannan2022data}. For a broader survey on data-driven decision-making in revenue management we refer the reader to the recent survey \cite{chen2023data}.
Closest to us is a line of work using ``local'' methods which approximate the empirical objective with Dirichlet processes   \citep{hannah2010nonparametric}, k-Nearest Neighbor and Random Forest in single stage \citep{bertsimas2020predictive} or multi-stage problems \citep{bertsimas2019predictions}, and Nadaraya-Watson (NW) kernels \citep{ban2019big,bertsimas2019predictions,fu2021data,srivastava2021data}. \cite{kallus2022stochastic}  introduce forest based weights that are tailored to the downstream optimization problem.
These papers consider methods similar in spirit to what we call Weighted ERM policies. They focus on consistency results or concentration-inequality based upper bounds for general stochastic optimization problem. In contrast, we develop a characterization of the performance of Weighted ERM policies which holds even in settings where consistency cannot be achieved by any data-driven policy. Our methodology is also different as it allows to quantify the impact of the geometry of past contexts on the performance of policies.

\noindent
\textbf{Newsvendor problem.} Our work also relates to the analysis of the Newsvendor problem under partial information about the demand distribution \citep{scarf1958min,gallego1993distribution,perakis2008regret,see2010robust}.
When there are no contexts in the Newsvendor problem, \cite{levi2007approximation,levi2015data} establish probabilistic bounds on the relative regret of ERM, and lower bounds with matching rate were derived in \cite{cheung2019sampling}. Recently, \cite{besbes2021big} characterize exactly the worst-case expected relative regret of ERM for any sample size, and derive a minimax optimal policy and associated performance. Our work significantly generalizes such analysis to the contextual setting, where historical distributions may be different and coupled through the context. We also analyse a broader class of policies to capture Weighted ERM policies. We compare in more detail our analysis to previous ones in \Cref{sec:main_result}.

Turning to contextual Newsvendor, \cite{ban2019big} provide finite sample bounds on the performance of ERM with a linear hypothesis class and for some Kernel methods. \cite{qi2021distributionally} analyze robust policies and \cite{gaooptimal} establish a closed-form solution for a DRO problem under Wasserstein uncertainty and derive generalization bounds for their method.  Under a ``local'' model similar to ours, \cite{fu2021data} derive finite bounds for robustified Kernel methods.
All these works rely on concentration-based arguments, in contrast our ``learning without concentration'' analysis is based on an optimization approach and further leverages the structure of the Newsvendor loss function at hand. Furthermore, these bounds are derived in settings in which context vectors are sampled from a distribution and the performance of policies is evaluated by taking an expectation over contexts. Therefore, the bounds cannot be used to derive context-dependent guarantees, which is the object of our work.  Recently  
\cite{huber2019data,oroojlooyjadid2020applying,qi2022practical} derive approaches for contextual Newsvendor based on neural-networks. These are focused on empirical performance, whereas our focus is on provable guarantees.

Other approaches have also been developed to account for heterogeneity without explicitly incorporating contextual information. Inspired by Bayesian settings, \cite{gupta2022data} explore the impact of data-pooling methods, \cite{gupta2021small} analyze the small-data large-scale setting in which the decision-maker is facing several problems which are similar and \cite{besbes2022beyond} analyze, for multiple loss functions including Newsvendor, a context-free model of heterogeneity. Our work generalizes their model of heterogeneity and we provide tight finite sample guarantees whereas their results are asymptotic.

\noindent \textbf{Learning curves.} Our work also sheds light on the behavior of learning algorithms and on their learning curves (see the survey by \cite{mohr2022learning}). Data relevance is acknowledged as being an important limitation for learning but there is no theoretical quantification of this impact. \cite{cortes1994limits} empirically explore this issue and \cite{batini2009methodologies,gudivada2017data} survey qualitative aspects related to relevance of data in practical applications. Furthermore, some of the ``bad behaviors'' of ERM unveiled in our work for the minimax setting are related to the ones surveyed in \cite{loog2022survey} for fixed distributions.

Finally, we note that our work broadly relates to
quantile regression \citep{bhattacharya1990kernel,chaudhuri1991nonparametric,koenker2005quantile,koenker2017quantile},
domain adaptation \citep{redko2020survey,farahani2021brief}, transfer learning \citep{pan2010survey,zhuang2020comprehensive}, active learning \citep{settles2009active}, few shot learning \citep{wang2020generalizing}, and learning in non-stationary environments \citep{rakhlin2011online,mohri2012new,bilodeau2020relaxing,blanchard2022universal}. The work closest related to ours, \cite{mohri2012new}, is discussed in \Cref{sec:insights}.

\subsection{Notation and proofs} For any set $A$, $\Delta \left( A \right)$ denotes the set of probability measures on $A$. Furthermore when $A \subset \mathbb{R}$, and we consider $F \in \Delta \left( A \right)$, we actually use $F$ as the cumulative distribution function of the random variable. For any $\mu \in \mathbb{R}$, with some abuse of notation, we denote by $\mathcal{B}(\mu)$ the Bernoulli distribution with mean equal to $0$ if $\mu \leq 0$, $1$ if $\mu \geq 1$ and $\mu$ in all other cases. For every $x \in \mathbb{R}$, we let $x^{+}:=\max(0,x)$ be the positive part operator.
\textit{All proofs are deferred to the Appendix.}

\section{Problem Formulation}
\label{sec:formulation}

We consider a problem in which,  given a distribution of outcome $\mesOut$ supported on the  space $\Y$, the goal of the decision-maker is to select a decision in the same space $\actVr \in \Y$  to minimize the expected loss
\begin{equation*}
L(\actVr,\mesOut) := \mathbb{E}_{y \sim \mesOut} [ \ell(\actVr,y)],
\end{equation*}
where $\ell$ is the Newsvendor loss defined for every action $\actVr \in \Y$ and outcome $y \in \Y$ as,
\begin{equation*}
\ell(\actVr,y) = \co \cdot ( \actVr - y)^{+} + \cu \cdot (y - \actVr)^{+}.
\end{equation*}
The Newsvendor loss is a form of weighted mismatch cost. One can think of the outcome as being the demand and the decision as being the inventory ordered for this demand. $\co$ denotes the unit cost of going ``over'', i.e., ordering a unit of inventory that goes undemanded; whereas $\cu$ denotes the unit cost of going ``under'' by missing out on a sale due to lack of inventory. We impose without loss of generality the normalization condition $\co + \cu = 1$ and define the critical quantile $q := \cu/(\co+\cu)$. 
This quantile plays a crucial role since, given a known distribution supported on $\Y$ with cumulative distribution $\mesOut$, one can show that a solution which minimizes the expected loss $L(\cdot,\mesOut)$ is the $q^{th}$ quantile of the demand distribution defined as
\begin{equation} \label{eqn:nvRule}
    a^*_{\mesOut} := \inf \{ a \in \Y \; \vert \; \mesOut(a) \geq q \}.
\end{equation}

\noindent
\textbf{Data-generation process.}
In practice, the demand distribution is not known to the decision-maker.
Instead we assume that she observes a dataset in which each data point is a couple $(x,y) \in \X \times \Y$ such that $x$ is the context in which the outcome $y$ has been observed. In this work, we assume that $\Y$ is a bounded interval and assume without loss of generality\footnote{The analysis directly generalizes to support $[0,M]$ for any $M>0$ with appropriate modifications. When $\Y$ is unbounded, no policy can achieve a finite worst-case expected regret (defined in \eqref{eq:fixed_design}).} that $\Y = [0,1]$.

Consider a sequence of $\numS$ historical context vectors $(x_{i})_{i \in \{1,\ldots,\numS\}} \in \X^{\numS}$ observed by the decision-maker. For every $i \in \{1,\ldots,\numS\}$ we denote  by $F_{i}$ the demand distribution conditional on the context $x_{i}$.
We assume that all outcome samples are independently distributed. Hence the vector of historical outcome observations $\bm{y} \in \Y^{\numS}$ is drawn from the product distribution $F_{1} \times \ldots \times F_{\numS}$.

Without any assumption on the distributions $(F_{i})_{i \in \{1,\ldots,\numS\}}$ and $F_{0}$, past outcomes may be useless and completely unrelated to the future outcomes generated from the out-of-sample distribution. We now introduce the local condition, which serves to quantify relations between different distributions.
\begin{definition}[Local condition]
\label{def:local_const}
Given a non-negative function $d(\cdot,\cdot): \X \times \X \to \mathbb{R}$, we say that two distributions $F$ and $F'$ associated to the contexts $x,x' \in \X$ satisfy the local condition if $\| F - F'\|_K \leq d(x,x')$, where $\| \cdot \|_K$ is the Kolmogorov distance defined as $\|F - F'\|_K = \sup_{y \in \mathbb{R}} \left| F(y)  - F'(y) \right|.$ We call $d(\cdot,\cdot)$ the dissimilarity function.
\end{definition}

We note that \Cref{def:local_const} is a local and non-parametric structural condition which imposes that similar contexts (where similarity is defined through the function $d$) induce similar conditional outcome distributions measured by their Kolmogorov distance. As a consequence, we colloquially say that for a given $i \geq 1$, outcome $y_i$ has a high relevance when $F_0$ and $F_i$ satisfies the local condition and the dissimilarity $d(x_0,x_i)$ is low. In \Cref{sec:apx_Wasserstein}, we illustrate how our approach can be generalized when the local condition is based on the Wasserstein distance.

\noindent
\textbf{Data-driven Policy and Objective.} 
Given an observed new context $x_{0} \in \X$, and a dataset containing $\numS$ past context vectors in $\X$ with $\numS$ samples (outcome realizations) in $\Y$, we formally define a (potentially randomized)\footnote{For simplicity of notation, when $\pi$ is deterministic we will denote by $\pi(\bm{x},\bm{y})$ the action $\actVr$ played by $\pi(\bm{x},\bm{y})$ as opposed to the point-mass distribution which puts all mass at $\actVr$. }  data-driven policy  as a mapping $\pi$,
\begin{equation*}
\pi : \begin{cases}
\X^{\numS+1} \times \Y^\numS \to \Delta \left(\Y \right)\\
(\mathbf{x}, \mathbf{y}) \mapsto \pi(\mathbf{x}, \mathbf{y}),
\end{cases}
\end{equation*}
where $\mathbf{x} = (x_0, x_1, \ldots, x_\numS)$ contains the new out-of-sample context $x_0$ and the previously observed ones,  $\mathbf{y}$ contains the $\numS$ observed outcomes, and $\Delta(\Y)$ is the space of the probability distributions supported on $\Y$.

A data-driven inventory policy maps past observations and  the new observed context to a randomized inventory decision without the knowledge of the underlying past demand distributions nor the out-of-sample one. 
The goal of the decision-maker is to select a data-driven policy to minimize the expected regret which represents the difference between the cost incurred by the decision-maker and the cost of an oracle who knows exactly the underlying demand distribution. Formally, the expected regret is defined for any decision $\actVr \in \Y$ and distribution $\mesOut$ supported on $\Y$ as,
\begin{equation*}
R(\actVr,\mesOut) = L(\actVr,\mesOut) -  L(a^*_{\mesOut},\mesOut).
\end{equation*}
If we have a randomized action defined by a distribution $\nu \in \Delta(\Y)$, then we overload notation to let $L(\nu,\mesOut) = \mathbb{E}_{a\sim\nu}[L(a,y)],$ and we let  $R(\nu,\mesOut) =   L(\nu,\mesOut) -  L(a^*_{\mesOut},\mesOut)$.

Our goal in this work is to understand the expected regret of data-driven policies in a fixed-design setting for arbitrary distributions satisfying the local condition (see \Cref{def:local_const}). To accomplish this, we first consider a natural case in which the decision-maker assumes that samples observed in the same context are drawn from the same distribution. 
Formally, we model this setting by imposing that our dissimilarity function satisfies $d(x,x) = 0$ for every $x \in \X$. Then, given $\mathbf{x} = (x_{0}, x_1,\ldots,x_{\numS}) \in \X^{\numS+1}$, an out-of-sample context vector and a sequence of historical context vectors, the worst-case regret of a data-driven policy $\pi$ is then defined as,
\begin{equation}
\label{eq:multiple_samples}
\sup_{F_{0} \in \Delta \left( \Y \right)}  \sup_{ \substack{F_{1},\ldots, F_{\numS} \in \Delta\left( \Y \right)\\
\|F_{i} - F_{j}\|_K \leq d(x_i,x_j)\, \forall i,j \in \{0,\ldots,\numS\}}} \mathbb{E}_{\bm{y} \sim F_{1} \times \ldots \times F_{\numS}} \big[R(\ddpol,F_{0}) \big].
\end{equation}
The worst-case regret in \eqref{eq:multiple_samples} reflects the assumption that samples observed in the same context are drawn from a common distribution because for every $i,j \in \{1,\ldots,\numS\}$ such that $x_i = x_j$, the local condition implies that $\|F_i - F_j \|_K \leq d(x_i,x_j) = 0$, and thus $F_i = F_j$.

To facilitate generalizations (particularly for settings with partially observed features; see \Cref{rem:confounders}), we next define a stronger notion of regret that allows Nature to choose potentially different distributions for each data point, even when the contexts coincide. For every $\numS \geq 1$ and every $\mathbf{x} = (x_{0}, x_1,\ldots,x_{\numS}) \in \X^{\numS+1}$, we define the worst-case regret as
\begin{equation}
\label{eq:fixed_design}
\wreg[\pi]{\mathbf{x}} = \sup_{F_0 \in \Delta \left( \Y \right)}  \sup_{ \substack{F_1,\ldots, F_\numS \in \Delta\left( \Y \right)\\
\|F_0 - F_i\|_K \leq d(x_0,x_i)\, \forall i \in \{1,\ldots,n\} } } \mathbb{E}_{y_1\sim F_1,\ldots,y_n\sim F_\numS} \big[R(\ddpol,F_0) \big].
\end{equation}
The formulation \eqref{eq:fixed_design} differs from \eqref{eq:multiple_samples} by relaxing the local conditions on all pairs of distributions which do not include the out-of-sample one. In particular, \eqref{eq:fixed_design} appears to be more conservative than \eqref{eq:multiple_samples}. For example, if one wants to model a setting in which the decision-maker has observed $\numS$ samples in the same historical context, then \eqref{eq:multiple_samples} forces Nature to select a single historical distribution $F_{1}$ from which the $\numS$ observations are drawn i.i.d., whereas \eqref{eq:fixed_design} allows Nature to select a different distribution for each of the $\numS$ samples, as long as each historical distribution is reasonably close to the out-of-sample one. 

Interestingly, our main result shows that for all policies of interest considered in this paper, the two definitions ultimately yield identical worst-case regrets whenever the dissimilarity function $d$ satisfies the triangular inequality (see \Cref{cor:multiple_sample_to_one}).

Furthermore, we note that in our definition of the worst-case regret, the contexts are fixed and a policy is evaluated against all possible historical and out-of-sample distributions that satisfy the local condition.
We highlight that the data-generation process we study differs from the common one in the literature in which context vectors are randomly sampled from a distribution supported on $\X$. In those settings, the regret guarantees obtained hold uniformly across various distributions of contexts. Hence, these approaches do not allow to characterize the impact of the configuration of \textit{actual} contexts observed on the performance. In contrast, our formulation considers fixed contexts and aims at understanding the regret of policies for different context configurations. Therefore, $\wreg[\pi]{\mathbf{x}}$ can be interpreted as the \textit{context-dependent} robust (minimal) value associated with the data at hand under a particular policy $\pi$. This object quantifies, for a fixed sample size and a fixed configuration of the context vectors, the worst-case performance of a policy of interest. We note that when $d(x_i,x_j) = 0$ for all $i,j \in \{0,\ldots,\numS\}$, the local conditions implies that all distributions should be the same and therefore one retrieves the  setting in which future  and past demands are drawn from the same distribution.

\begin{remark}[Partially Observed Contexts]\label{rem:confounders}
We note that our model, through its flexibility in the dissimilarity function, captures problems in which the decision-maker only observes partially the contexts. Indeed, we do not impose in \eqref{eq:fixed_design} that samples observed in the same context have the same distribution.
 Moreover, we do not impose the dissimilarity function to be a distance. Therefore, we can capture partial observation of contexts by considering a dissimilarity function $d$ of the form  $d(x,x') = \tilde{d}(x,x') + \rad$, where $\tilde{d}$ is the dissimilarity function capturing the  dissimilarity between observed contexts and $\rad$ is the amplitude of dissimilarity driven by unobserved factors. This special case allows to explicitly account for two types of deterioration in data relevance: unobserved factors, and dissimilarity of observed contexts.\end{remark}

\noindent
\textbf{Focal class of policies: Weighted ERM.} In this work, we anchor our analysis and discussion around  the following central class of policies which minimize a weighted empirical loss.\footnote{We note that our framework and some of our results extend to a superset of these  policies; see \Cref{sec:OS}.}

\begin{definition}[Weighted Empirical Risk Minimization for Newsvendor]
\label{def:WERM}
A Weighted Empirical Risk Minimization (Weighted ERM) policy is defined by a sequence of non-negative weights $\mathbf{w} = (w_i)_{i \in \{1,\ldots,\numS\}}$ which could depend on the fixed contexts $\mathbf{x}$ and prescribes the action,
\begin{equation*}
\pi^{\mathbf{w}}(\mathbf{x},\mathbf{y}) = \inf \left \{ a \text{ s.t. }  \frac{\sum_{i=1}^\numS w_i (\bm{x}) \cdot \mathbbm{1} \left \{ y_i \leq a \right\}}{\sum_{j=1}^\numS w_j(\bm{x})}  \geq \frac{\cu}{\cu+\co}  \right \}. 
\end{equation*}
\end{definition}
Given a sequence of non-negative weights $\mathbf{w} = (w_i)_{i \in \{1,\ldots,\numS\}}$, the action prescribed by Weighted ERM is a minimizer of the weighted empirical loss  $\sum_{i=1}^\numS w_i \cdot \ell(\actVr,y_i)$ (see Proposition 1 in \citet{ban2019big}).  
This class of policies encompasses many classical policies used in Machine Learning and data-driven decision-making (see, e.g.,  \cite{bertsimas2020predictive}). Notable special cases of Weighted ERM policies are the classical ERM policy (when all the weights are equal) which we denote by $\pi^{\mathrm{ERM}}$ and the class of k-Nearest Neighbors (when the weight is $1$ for the $k$-closest contexts and $0$ otherwise). When setting $w_i = K( d(x_i,x_0))$ for a kernel function $K: \mathbb{R} \to \mathbb{R}_{+}$, we retrieve policies minimizing the Nadaraya-Watson estimator of the expected loss \citep{nadaraya1964estimating,watson1964smooth}. 
Intuitively, the weights can be selected to leverage the closeness of contexts (and associated  closeness of distributions imposed by the local condition).

\section{Main Results: Policies and Performance Characterization} \label{sec:main_result}

In this section, we present our main theoretical results, which characterize the performance of Weighted ERM policies, as well as more general policies.  In \Cref{sec:concentration}, we first review a classical concentration-based approach for bounding the performance of policies. In \Cref{sec:space_reductions}, we present the alternative optimization approach we pursue in this paper. In turn, we analyze  ``separable policies'', whose properties allow to develop a sequence of reductions that enables to directly characterize their performance. In \Cref{sec:OS}, we then establish that Weighted ERM policies are separable policies.

\subsection{Limitation of concentration-based arguments}
\label{sec:concentration}
We first provide a proof sketch of a very common concentration-based argument present in the literature, when bounding the performance of a policy such as Weighted ERM. 
Given a family of context vectors $\mathbf{x} \in \X^{\numS+1}$ and previously observed outcomes $\mathbf{y}\in \Y^\numS $, we denote by $\hat{F}_{\mathbf{w}}$ the weighted empirical distribution defined  as
$\hat{F}_{\mathbf{w}}(z) = \sum_{i=1}^\numS w_i \cdot \mathbbm{1} \left \{ y_i \leq z \right\}/\sum_{j=1}^\numS w_j $ for every $z \in \Y$, on which the decision of the weighted ERM policy is based. Then the regret can be decomposed as follows,
\begin{align*}
R(\mathbf{\pi}^{\mathbf{w}}(\mathbf{x},\mathbf{y}), F_{0}) &= L(\mathbf{\pi}^{\mathbf{w}}(\mathbf{x},\mathbf{y}), F_{0}) - L(a^*_{F_{0}}, F_{0})\\
&= \left[ L(\mathbf{\pi}^{\mathbf{w}}(\mathbf{x},\mathbf{y}), F_{0}) - L(\mathbf{\pi}^{\mathbf{w}}(\mathbf{x},\mathbf{y}), \hat{F}_{\mathbf{w}}) \right]\\
&\quad + \left[ L(\mathbf{\pi}^{\mathbf{w}}(\mathbf{x},\mathbf{y}), \hat{F}_{\mathbf{w}}) - L(a^*_{F_{0}}, \hat{F}_{\mathbf{w}}) \right] + \left[ L(a^*_{F_{0}},\hat{F}_{\mathbf{w}})  - L(a^*_{F_{0}}, F_{0}) \right]\\
&\stackrel{(a)}{\leq} 2 \cdot \sup_{ a \in \Y} \left | L(a, F_{0}) - L(a, \hat{F}_{\mathbf{w}}) \right |,
\end{align*}
where inequality $(a)$ holds because by construction of the Weighted ERM policy the second difference is negative. Given this decomposition, most arguments aim at deriving uniform bounds (across the action space) on the difference between the loss evaluated on the distributions $\hat{F}_{\mathbf{w}}$ and $F_{0}$.
This difference is controlled by introducing the weighted mixture of historical distributions $\overline{F}_{\mathbf{w}} = \sum_{i=1}^\numS {w_i} \cdot F_{i}/\sum_{j=1}^\numS w_j$ , and using the triangular inequality to obtain that,
\begin{equation}
\label{eq:uniform_concentration}
\sup_{ a \in \Y} \left | L(a, F_{0}) - L(a, \hat{F}_{\mathbf{w}}) \right | \leq \sup_{ a \in \Y} \left | L(a, F_{0}) - L(a, \overline{F}_{\mathbf{w}}) \right | + \sup_{ a \in \Y} \left | L(a, \overline{F}_{\mathbf{w}}) - L(a, \hat{F}_{\mathbf{w}}) \right|.
\end{equation}
The first term is then interpreted as the limiting regret with infinitely many samples and the second term captures the loss incurred because the empirical distribution is constructed with finite samples.
The first term does not depend on the realizations of $y_1,\ldots,y_n$ and is generally non-zero because the weighted mixture $\overline{F}_{\mathbf{w}}$ does not faithfully capture the unknown distribution $F_{0}$; however note that it is zero in the i.i.d.\ setting.  This is the setting of classical statistical learning, for which we refer to \citet[Chap.3 and Chap.4]{mohri2018foundations} and \citet[Chap. 26]{shalev2014understanding}.
The second term in~\eqref{eq:uniform_concentration} is bounded by showing that the empirical loss concentrates around the loss under the weighted mixture. Common bounds on the second term are decreasing and converge to $0$ as the number of samples grows. We compare our result to bounds that use~\eqref{eq:uniform_concentration} in \Cref{sec:mmBound}.

This concentration-based analysis is powerful as it can be applied to a general class of problems. However, we note that this line of argument does not account for the actual loss function at hand.  Also,  the bound on $R(\mathbf{\pi}^{\mathbf{w}}(\mathbf{x},\mathbf{y}), F_{0})$ one obtains  is based on a  uniform bound across actions and does not take into consideration the particular action taken by the policy of interest. Finally, as explained in~\eqref{eq:uniform_concentration}, this line of argument would separately analyze the limiting regret term and the finite-sample loss term, instead of directly analyzing the finite sample regret. A natural question is then how much is lost based on such a bounding approach.  We will see in \Cref{sec:shape} that both the scale and the shape suggested by these bounds are actually an artifact of the analysis rather than reflecting the actual performance of the algorithm.

\subsection{Learning without concentration: an optimization approach}
\label{sec:space_reductions}
In contrast to the approach outlined in \Cref{sec:concentration} for upper-bounding the value of 
\begin{align} \label{eqn:7891}
\mathbb{E}_{y_1\sim F_{1},\ldots,y_n\sim F_{\numS}} \big[R(\ddpol,F_{0}) \big],
\end{align}
we follow in this work an optimization approach to exactly evaluate the supremum value of~\eqref{eqn:7891} subject to constraints.  Recall that 
for a given policy $\pi$, the object of interest $\wreg[\pi]{\mathbf{x}}$ defined in \eqref{eq:fixed_design} can be phrased as the following optimization problem:
\begin{equation}
\label{eq:integral_form}
    \sup_{F_{0} \in \Delta \left( \Y \right)}  \sup_{ \substack{F_{1},\ldots, F_{\numS} \in \Delta\left( \Y \right)\\
\|F_{0} - F_{i}\|_K \leq d(x_0,x_i)\, \forall i } } 
\int_{\Y^\numS}
\left( L(\ddpol,F_{0}) -  L(a^*_{F_0},F_{0}) \right)
dF_{1}(y_1)\ldots dF_{\numS}(y_\numS).
\end{equation}
We note that \eqref{eq:integral_form} is a distributionally robust optimization (DRO) problem: an optimization problem where decision variables are distributions. Common approaches in the DRO literature analyse policies which use the samples to construct an uncertainty ball in which the out-of-sample distribution may lie and then solve a minimax problem over that uncertainty ball. Furthermore, the objective function in these minimax problems is usually linear in the distribution (for instance in the case of the expected loss $L(\cdot,\cdot)$). In contrast, the problem \eqref{eq:integral_form} involves an objective which has an intricate dependence in all the historical distributions: they affect the decision learned by the policy, and hence appear as integrating measures and result in a non-concave objective. Furthermore, it is a multivariate infinite-dimensional problem which involves several distributions constrained through the local conditions as opposed to a single out-of-sample distribution.

While this is a non-concave constrained infinite dimensional optimization problem,  we develop a sequence of reductions that allow to solve this problem. Our methodology is conceptually related to the one recently derived by \cite{besbes2021big} in which the authors analyze the worst-case relative regret in the i.i.d. setting.  At a high level their proof relies on two critical arguments. The first step to simplify \eqref{eq:integral_form} consists in showing that, in the newsvendor problem, for certain data-driven policies the objective may be transformed in the following form:
\begin{equation}
\label{eq:cdf_integral}
  \int_{\Y^\numS}
\left( L(\ddpol,F_{0}) -  L(a^*_{F_{0}},F_{0}) \right)
dF_{1}(y_1)\ldots dF_{\numS}(y_\numS) =  \int_{\Y} \Psi^\pi(F_{0}(y),\ldots,F_{\numS}(y))dy,
\end{equation}
where $\Psi^\pi$ is a continuous function from $[0,1]^{\numS+1}$ to $\mathbb{R}$. The second step uses the form of the objective in \eqref{eq:cdf_integral} to show that the initial problem can be solved via pointwise optimization.

Given the contextual nature of our problem in the present paper, we depart from the i.i.d. setting, and we aim at understanding the performance of a much broader set of policies including Weighted ERM ones. We note that \cite{besbes2021big} analyzed, in the i.i.d. setting, policies which prescribe a decision equal to an order statistic of the past samples. A natural question is whether their approach can be: i.) generalized to the contextual case; and ii.) directly applied to Weighted ERM policies. The answer to ii.) is negative as we show next:
\begin{lemma} \label{lem:WERM-OS}
There exists a Weighted ERM policy which is not an order statistic policy (formally introduced in \Cref{def:OS}).
\end{lemma}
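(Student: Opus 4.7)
The plan is to exhibit an explicit Weighted ERM policy with non-uniform weights whose decision rule fails to be permutation-invariant in the samples, and then to observe that any order statistic policy (as it depends on the samples only through the sorted values $y_{(1)} \leq \cdots \leq y_{(n)}$ with the rank determined independently of which raw index carries which value) must be permutation-invariant. This contradiction will do the job. In particular, the unweighted ERM case (equal weights) \emph{does} give an order statistic policy, namely the $\lceil q n\rceil$-th order statistic, so any counterexample must exploit unequal weights.

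The first step is to recall that for nonnegative weights $w_1,\ldots,w_\numS$ and samples $y_1,\ldots,y_\numS$, the minimizer $\argmin_{a\in\Y}\sum_{i=1}^\numS w_i\,\ell(a,y_i)$ is the weighted $q$-quantile of the empirical measure that places mass $w_i/\sum_j w_j$ at $y_i$. Writing $\sigma$ for the permutation that sorts the samples so that $y_{\sigma(1)}\leq\cdots\leq y_{\sigma(\numS)}$, the Weighted ERM decision equals $y_{(k^*)}$ where $k^*$ is the smallest index with $\sum_{j\leq k^*} w_{\sigma(j)} \geq q\sum_j w_j$. Crucially, $k^*$ depends on $\sigma$, hence on the \emph{joint} realization $(y_1,\ldots,y_\numS)$, not only on the sorted values.

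For the counterexample, take $\numS=2$, critical quantile $q=1/2$, and two contexts $x_1\neq x_2$ with associated weights $w_1(\mathbf{x})=3/4$ and $w_2(\mathbf{x})=1/4$ (these can arise from any kernel-based or nearest-neighbor weighting scheme that treats $x_1$ as more relevant than $x_2$). For the realization $\mathbf{y}^{A}=(0.1,0.9)$ one has $y_{(1)}=0.1=y_1$ carrying weight $3/4\geq 1/2$, so $\pi^{\mathbf{w}}(\mathbf{x},\mathbf{y}^A)=0.1$. For the swapped realization $\mathbf{y}^{B}=(0.9,0.1)$ the sorted samples are unchanged but now $y_{(1)}=0.1=y_2$ carries weight $1/4<1/2$, so the weighted median jumps to the second order statistic, giving $\pi^{\mathbf{w}}(\mathbf{x},\mathbf{y}^B)=0.9$.

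Finally, any order statistic policy (cf. Definition \ref{def:OS}) returns, given the fixed contexts $\mathbf{x}$, a distribution over $\{y_{(1)},\ldots,y_{(\numS)}\}$ whose selection rule depends on the samples only through their sorted sequence; consequently its output is invariant under permutations of the sample vector. Since $\mathbf{y}^A$ and $\mathbf{y}^B$ share identical order statistics but produce distinct Weighted ERM decisions ($0.1$ vs.\ $0.9$), $\pi^{\mathbf{w}}$ cannot lie in the order statistic class. The only delicate point is ensuring that Definition \ref{def:OS} indeed formalizes this permutation invariance (rather than allowing an arbitrary data-dependent rank choice, which would trivialize the class); granting this, the counterexample is immediate and the lemma follows.
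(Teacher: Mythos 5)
There is a genuine gap, and it is exactly the point you flag at the end and then wave away. Definition~\ref{def:OS} allows the subset $S$ to be a \emph{proper} subset of $\{1,\ldots,\numS\}$ (this is needed so that, e.g., $k$-NN-type rules are order statistic policies), so an order statistic policy is \emph{not} invariant under arbitrary permutations of the sample vector --- only under permutations supported on $S$. Your concrete counterexample is killed by precisely this loophole: with $\numS=2$, $q=1/2$ and $w_1=3/4\geq q$, the weighted quantile rule outputs $y_1$ for \emph{every} realization (when $y_1<y_2$ the first sample already carries mass $3/4\geq 1/2$; when $y_1>y_2$ the smaller sample carries only $1/4<1/2$ so the rule moves up to $y_1$). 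Hence your policy is exactly the order statistic policy $y_{(1),\{1\}}$, and the pair $\mathbf{y}^A,\mathbf{y}^B$ exhibits no contradiction. Worse, no $\numS=2$ example can work: for any weights, the WERM rule is $y_{(1),\{1,2\}}$, $y_{(2),\{1,2\}}$, $y_{(1),\{1\}}$, or $y_{(1),\{2\}}$ depending on how $w_1,w_2$ compare to $q$, so every two-sample Weighted ERM policy is an order statistic policy.

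The paper's proof repairs this in two steps that your argument is missing. First it shows (Property 1) that if a policy returns the value $y_k$ on some realization of distinct interior samples, then $k$ must belong to $S$; applying this to a candidate policy that can return each of $y_1,\ldots,y_\numS$ forces $S=\{1,\ldots,\numS\}$, and only then does full permutation invariance (Property 2) become available. It then uses $\numS=4$, $q=1/2$, weights $(2,1,1,1)$: this policy returns $\min\{y_2,y_3,y_4\}$ when $y_1$ is smallest, $\max\{y_2,y_3,y_4\}$ when $y_1$ is largest, and $y_1$ otherwise, so it can output any coordinate (forcing $S=\{1,2,3,4\}$) yet $\pi(0.1,0.2,0.3,0.4)=0.2\neq 0.3=\pi(0.3,0.2,0.1,0.4)$ violates invariance under a permutation of $S$. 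Your high-level intuition --- that unequal weights break the symmetry an order statistic policy must have --- is the right one, but to make it rigorous you must first pin down $S$ as the full index set, and you need at least $\numS\geq 3$ (in fact the paper uses $4$) for a Weighted ERM policy to escape the order statistic class.
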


Therefore, we need to extend the approach to a broader set of policies, while accounting for the contextual aspect of the problem.
To that end, we will start from a general class of policies, separable policies, that satisfy \eqref{eq:cdf_integral} and show through a sequence of reductions that \eqref{eq:integral_form} can be simplified under this class. We then develop in \Cref{sec:OS} a series of results to understand how broad these policies are, proving that they encompass Weighted ERM.
\begin{definition}[Separable policies]
\label{def:separable_policies}
We say that a data-driven policy $\pi$ is a separable policy if there exists a function $P^{\pi} : [0,1]^{\numS} \to [0,1]$, which could depend on the fixed context $\mathbf{x}$, such that for every distributions  $F_1,\ldots,F_{\numS} \in \Delta(\Y)$ and  any $y \in \mathbb{R}$, we have that
\begin{equation*}
\mathbb{P}_{\mathbf{y} \sim F_{1} \times \ldots \times F_{\numS}, a \sim \pi(\bm{x},\bm{y})} \left( a \leq y \right) = P^{\pi}(F_{1}(y), \ldots, F_{\numS}(y)).  
\end{equation*}
We refer to $P^{\pi}$ as the function associated to the separable policy $\pi$. 
Furthermore, we say that the policy $\pi$ is a non-decreasing separable policy if for every $i 
\in \{1,\ldots,\numS\}$ and every $h_1,\ldots h_{i-1},h_{i+1}, \ldots, h_{\numS} \in [0,1]$, the function
\begin{equation*}
h \mapsto P^{\pi}(h_1,\ldots h_{i-1},h,h_{i+1}, \ldots, h_{\numS})
\end{equation*}
is non-decreasing.
\end{definition}

Separable policies are ones for which the cumulative distribution of the decision at a given point $y \in \Y$, which is induced by the distribution of the samples used by the policy, can be expressed as a function $P^\pi$ of the cumulative distributions of the samples evaluated at the same point $y$. 
Among all separable policies a natural subclass to consider is the class of non-decreasing ones which are intuitively defined as policies for which the probability of selecting a decision less than or equal to $y$ increases when the probability of observing a sample less than or equal to $y$ increases.
In \Cref{sec:OS} we prove that several central policies belong to the class of separable policies. Furthermore, to provide some intuition about which policies are not separable, we prove in \Cref{prop:mean_not_separable} (see \Cref{sec:apx_A}) that the policy which selects the average of all samples observed is not separable.

In the remainder of this section we analyze the worst-case regret $\wreg[\pi]{\mathbf{x}}$, defined in \eqref{eq:fixed_design}, of separable policies.
Our next lemma formalizes the fact that for any separable policy, the objective function in Problem \eqref{eq:integral_form} can be simplified.
\begin{lemma}
\label{lem:cdf_integral_reduction}
For every $\numS \geq 1$, any sequence of contexts $\mathbf{x} = (x_i)_{i \in \{0,\ldots,\numS\}} \in \X^{\numS+1}$, any separable policy $\pi$ and every family of  distributions $(F_{i})_{i \in \{0,\ldots,\numS\}} \in \Delta(\Y)$, we have that
    \begin{equation*}
\mathbb{E}_{ \mathbf{y} \sim F_{1} \times \ldots \times F_{\numS} } \big[R(\pi(\mathbf{x},\mathbf{y}),F_{0}) \big]= (\cu+\co) \cdot \int_0^{1} \Psi^\pi(F_{0}(y), F_{1}(y), \ldots, F_{\numS}(y)) dy,
\end{equation*}
where $\Psi^\pi$ is a mapping from $[0,1]^{\numS+1}$ to $\mathbb{R}$ which satisfies, for every $(z_0,\ldots,z_{\numS}) \in [0,1]^{\numS+1}$, 
\begin{equation*}
\Psi^\pi(z_0,\ldots,z_{\numS}) = P^{\pi} (z_1,\ldots,z_\numS)  \cdot(q-z_0)  + \max \{ z_0 -q,0 \}.
\end{equation*}
\end{lemma}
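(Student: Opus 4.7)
The key observation is that the newsvendor loss admits a representation as an integral over the ``threshold'' variable $z$, which will let us express $L(a,F_{x_0})$ entirely through values of $F_{x_0}$ (the CDF at $z$) rather than through integrals with respect to $dF_{x_0}$. Concretely, for $a,y\in[0,1]$ one has the identities
\begin{equation*}
(a-y)^{+} = \int_0^1 \mathbbm{1}\{y\le z<a\}\,dz, \qquad (y-a)^{+} = \int_0^1 \mathbbm{1}\{a\le z<y\}\,dz,
\end{equation*}
which I would verify by direct case analysis. Plugging these into $\ell(a,y)$ and taking expectation $y\sim F_{x_0}$ (using Fubini, which is justified by boundedness of the integrand), I obtain
\begin{equation*}
L(a,F_{x_0}) = \int_0^1 \Bigl[\,c_o F_{x_0}(z)\,\mathbbm{1}\{z<a\} + c_u\bigl(1-F_{x_0}(z)\bigr)\,\mathbbm{1}\{z\ge a\}\,\Bigr]\,dz.
\end{equation*}

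Next I would plug in the random action $A=\pi(\mathbf{x},\mathbf{y})$ and take expectation with respect to $\mathbf{y}\sim F_{x_1}\times\cdots\times F_{x_\numS}$, swapping expectation and the $dz$ integral by Fubini once more. This produces the quantities $\mathbb{P}(A\le z)$ and $\mathbb{P}(A>z)$, and here is the precise place the separability hypothesis enters: by \Cref{def:separable_policies},
\begin{equation*}
\mathbb{P}(A\le z) = P^{\pi}\bigl(F_{x_1}(z),\ldots,F_{x_\numS}(z)\bigr),
\end{equation*}
so the expected loss becomes
\begin{equation*}
\mathbb{E}_{\mathbf{y}}\bigl[L(A,F_{x_0})\bigr] = \int_0^1 \Bigl[\,c_o F_{x_0}(z)\bigl(1-P^{\pi}(F_{x_1}(z),\ldots)\bigr) + c_u\bigl(1-F_{x_0}(z)\bigr)P^{\pi}(F_{x_1}(z),\ldots)\,\Bigr]\,dz.
\end{equation*}

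For the oracle term, I would show that
\begin{equation*}
L(a^{*}_{F_{x_0}},F_{x_0}) = \int_0^1 \min\bigl\{c_o F_{x_0}(z),\; c_u(1-F_{x_0}(z))\bigr\}\,dz.
\end{equation*}
This is a pointwise optimality argument: at each $z$, the integrand of $L(a,F_{x_0})$ equals $c_o F_{x_0}(z)$ if $a>z$ and $c_u(1-F_{x_0}(z))$ if $a\le z$; using the definition of the $q$-quantile $a^{*}_{F_{x_0}}$, one checks that $F_{x_0}(z)<q$ for $z<a^{*}_{F_{x_0}}$ and $F_{x_0}(z)\ge q$ for $z\ge a^{*}_{F_{x_0}}$, which in each case makes the chosen branch the smaller of the two values. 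Hence $a^{*}_{F_{x_0}}$ attains the pointwise minimum simultaneously for every $z$.

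Finally I would subtract, factor out $(c_o+c_u)$ via $c_o=(1-q)(c_o+c_u)$ and $c_u=q(c_o+c_u)$, and rewrite the resulting integrand, using the algebraic identity $(1-q)z_0(1-P)+q(1-z_0)P = (1-P)(z_0-q)+q(1-z_0)$, to match the stated formula for $\Psi^{\pi}$. I do not expect a genuine obstacle: the substantive content is concentrated in the ``threshold'' representation of the loss in the first step, which converts integrals against $dF_{x_i}$ into pointwise expressions in $F_{x_i}(z)$ and thereby makes separability directly applicable; everything else is bookkeeping and a short pointwise-minimization argument.
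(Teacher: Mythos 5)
Your proposal is correct and follows essentially the same route as the paper: both rewrite the expected loss and the oracle cost as $dz$-integrals of pointwise functions of the CDFs (the paper via an integrated form of the cost from \citet{besbes2021big} plus Fubini over the action's distribution $G^\pi$, you via the layer-cake identity for $(a-y)^{+}$ and $(y-a)^{+}$), and then invoke separability to replace $\mathbb{P}(\pi(\mathbf{x},\mathbf{y})\leq z)$ by $P^{\pi}(F_{x_1}(z),\ldots,F_{x_\numS}(z))$. The only difference is that you re-derive from scratch the two intermediate formulas (the integral form of $L(a,F)$ and the oracle cost $\int_0^1\min\{(1-q)F(z),q(1-F(z))\}dz$) that the paper imports from prior work; your algebra and pointwise-optimality argument for $a^{*}_{F_{x_0}}$ both check out.
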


\Cref{lem:cdf_integral_reduction} shows that  \eqref{eq:cdf_integral} holds for any separable policy. We next leverage this result to show that the multivariate infinite dimensional optimization problem \eqref{eq:integral_form} can be reduced to a simpler one over the space of Bernoulli distributions. Recalling that $\mathcal{B}(\cdot)$ denotes the Bernoulli distribution, the following holds.
\begin{proposition}
\label{prop:reduction_bern}
For every $\numS \geq 1$, any sequence of contexts $\mathbf{x} = (x_i)_{i \in \{0,\ldots,\numS\}} \in \X^{\numS+1}$  and any separable policy $\pi$ we have,
\begin{equation*}
\wreg[\pi]{\mathbf{x}}  = \sup_{\mu_0 \in [0,1]} \sup_{\substack{\mu_1,\ldots,\mu_\numS \in [0,1]\\ | \mu_i - \mu_0 | \leq d(x_0,x_i) \, \forall i}} \mathbb{E}_{ \mathbf{y} \sim \mathcal{B}(\mu_1) \times \ldots \times \mathcal{B}(\mu_{\numS})} \big[R(\pi(\mathbf{x},\mathbf{y}),\mathcal{B}(\mu_{0})) \big].
\end{equation*}
\end{proposition}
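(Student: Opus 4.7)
The plan is to use \Cref{lem:cdf_integral_reduction} to rewrite $\wreg[\pi]{\mathbf{x}}$ as a one-dimensional integral over $z \in [0,1]$ of the function $\Psi^\pi$ evaluated at the CDF values, and then show that, because the Kolmogorov constraint is a \emph{pointwise} constraint on the CDFs, one can upper bound this integral by a pointwise supremum that is, in turn, attained by Bernoulli distributions.

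\textbf{Step 1 (Rewriting the objective).} I would apply \Cref{lem:cdf_integral_reduction} to express
\begin{equation*}
\wreg[\pi]{\mathbf{x}} = (\cu+\co) \cdot \sup_{F_{x_0},\ldots,F_{x_\numS}} \int_0^1 \Psi^\pi\bigl(F_{x_0}(z),F_{x_1}(z),\ldots,F_{x_\numS}(z)\bigr)\, dz,
\end{equation*}
where the supremum is over distributions on $\Y = [0,1]$ satisfying $\|F_{x_0}-F_{x_i}\|_K \leq d(x_0,x_i)$ for every $i$. The key observation is that this Kolmogorov constraint is equivalent to the pointwise constraint $|F_{x_0}(z) - F_{x_i}(z)| \leq d(x_0,x_i)$ for all $z \in \mathbb{R}$ and all $i$.

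\textbf{Step 2 (Pointwise upper bound).} I would define the $z$-independent quantity
\begin{equation*}
g := \sup\bigl\{\Psi^\pi(a_0,a_1,\ldots,a_\numS) : a_i \in [0,1] \text{ for all } i,\; |a_0 - a_i|\leq d(x_0,x_i) \text{ for all } i\geq 1\bigr\}.
\end{equation*}
For any feasible tuple of CDFs and any $z$, the values $(F_{x_0}(z),\ldots,F_{x_\numS}(z))$ lie in the feasible set of $g$, so $\Psi^\pi(F_{x_0}(z),\ldots,F_{x_\numS}(z)) \leq g$. Integrating over $[0,1]$ yields $\wreg[\pi]{\mathbf{x}} \leq (\cu+\co)\, g$.

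\textbf{Step 3 (Achievability by Bernoullis).} For means $\mu_0,\ldots,\mu_\numS \in [0,1]$ with $|\mu_i - \mu_0|\leq d(x_0,x_i)$, the CDF of $\mathcal{B}(\mu_i)$ is constant and equal to $1-\mu_i$ on $[0,1)$, which makes the Kolmogorov distance $\|\mathcal{B}(\mu_0)-\mathcal{B}(\mu_i)\|_K = |\mu_0 - \mu_i|$, so the constraint is satisfied. Plugging the Bernoulli CDFs into \Cref{lem:cdf_integral_reduction} gives
\begin{equation*}
\mathbb{E}\big[R(\pi(\mathbf{x},\mathbf{y}),\mathcal{B}(\mu_0))\big] = (\cu+\co) \cdot \Psi^\pi(1-\mu_0,1-\mu_1,\ldots,1-\mu_\numS),
\end{equation*}
since the integrand is constant in $z$ on $[0,1)$. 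Maximizing over admissible $\mu$'s and substituting $a_i = 1-\mu_i$ shows that the Bernoulli supremum on the right-hand side of the proposition equals $(\cu+\co)\, g$.

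\textbf{Step 4 (Combining).} Step 2 yields $\wreg[\pi]{\mathbf{x}} \leq (\cu+\co) g$, while Step 3 shows the Bernoulli sup equals $(\cu+\co) g$ and provides a lower bound of the same value for $\wreg[\pi]{\mathbf{x}}$ (since Bernoullis are feasible distributions). Equality of both sides follows.

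The main conceptual obstacle is Step 3: one has to exploit the fact that Bernoulli CDFs are \emph{constant} on the interior of the support $[0,1)$, which is precisely what allows a \emph{single} joint choice of distributions to realize the pointwise supremum uniformly in $z$, turning the a priori loose pointwise bound into a tight one. Without this structural alignment between the Newsvendor support $[0,1]$ and the two-point Bernoulli distributions, a pointwise maximization argument would typically fail to produce a valid CDF.
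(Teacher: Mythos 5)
Your proposal is correct and follows essentially the same route as the paper: the paper likewise obtains the lower bound by inclusion of Bernoullis among feasible distributions, reduces the objective to $(\cu+\co)\int_0^1 \psi(F_{x_0}(z),\ldots,F_{x_\numS}(z))\,dz$, defines the pointwise supremum $\bar\psi$ (your $g$) over the box constrained by $|z_i - z_0|\leq d(x_0,x_i)$, and uses the pointwise nature of the Kolmogorov constraint together with the constancy of Bernoulli CDFs on $[0,1)$ to show the bound is attained. Your Step 3 observation about why Bernoullis realize the pointwise supremum uniformly in $z$ is exactly the mechanism underlying the paper's change of variables $z_i = 1-\mu_i$.
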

Proposition \ref{prop:reduction_bern} shows that for any separable policy, the worst-case distributions, both historical and out-of-sample,  over all possible distributions supported on $[0,1]$ are Bernoulli distributions. Therefore, the optimization problem \eqref{eq:fixed_design}, which involves the worst-case over all distributions, can be reduced to a significantly simpler one, in which the adversary only needs to decide on the means of different Bernoulli distributions (for the historical and out-of-sample distributions of outcomes).

Despite the significant reduction in complexity above, we note that optimizing over the space of Bernoulli means is still a non-convex optimization problem with dimension $(\numS+1)$. 
Our next result enables to further simplify the problem by characterizing the worst-case sequence of historical means $(\mu_i)_{i \geq 1}$ as a function of the out-of-sample mean $\mu_0$. 

\begin{proposition}
\label{prop:worst_history}
For every $\numS \geq 1$, any sequence of contexts $\mathbf{x} = (x_i)_{i \in \{0,\ldots,\numS\}} \in \X^{\numS+1}$  and any non-decreasing separable policy $\pi$ we have,
\begin{align*}
\sup_{\substack{\mu_1,\ldots,\mu_\numS \in [0,1]\\ | \mu_i - \mu_0 | \leq d(x_0,x_i) \, \forall i}} &\mathbb{E}_{ \mathbf{y} \sim \mathcal{B}(\mu_1) \times \ldots \times \mathcal{B}(\mu_{\numS})} \big[R(\pi(\mathbf{x},\mathbf{y}),\mathcal{B}(\mu_{0})) \big]\\
&=
\begin{cases}
 \mathbb{E}_{\mathbf{y} \sim \mathcal{B}(\mu_0 + d(x_0,x_1)) \times \ldots \times \mathcal{B}\left( \mu_0 + d(x_0,x_\numS) \right)} \left[  R(\pi(\mathbf{x},\mathbf{y}),\mathcal{B}(\mu_0)) \right] \quad \text{if $\mu_0 \in [0,1-q]$,}\\
 ~\\
 \mathbb{E}_{\mathbf{y} \sim \mathcal{B}(\mu_0 - d(x_0,x_1)) \times \ldots \times \mathcal{B}\left( \mu_0 - d(x_0,x_\numS) \right)} \left[ R(\pi(\mathbf{x},\mathbf{y}),\mathcal{B}(\mu_0)) \right] \quad \text{if $\mu_0 \in [1-q,1]$.}
\end{cases}
\end{align*}
\end{proposition}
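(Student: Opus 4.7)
The plan is to combine the structural simplification afforded by \Cref{lem:cdf_integral_reduction} with the coordinatewise monotonicity of $P^\pi$, which together collapse the constrained supremum to a termwise optimization solved by inspection.

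First, I would evaluate the integral in \Cref{lem:cdf_integral_reduction} when each $F_{x_i}$ is the Bernoulli $\mathcal{B}(\mu_i)$. Since $\mathcal{B}(\mu)$ has CDF equal to $1-\mu$ on $[0,1)$, the integrand $\Psi^\pi(F_{x_0}(z),\ldots,F_{x_\numS}(z))$ is constant on this interval of full measure, and the integral reduces to a single evaluation of $\Psi^\pi$ at $(1-\mu_0,1-\mu_1,\ldots,1-\mu_\numS)$. Substituting the explicit formula for $\Psi^\pi$ then yields
\begin{equation*}
\mathbb{E}\!\left[R(\pi(\mathbf{x},\mathbf{y}),\mathcal{B}(\mu_0))\right] \;=\; (\cu+\co)\Big\{[1-P^\pi(1-\mu_1,\ldots,1-\mu_\numS)]\,(1-\mu_0-q) + q\mu_0 - \min\{(1-q)(1-\mu_0),\, q\mu_0\}\Big\}.
\end{equation*}

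Second, I would observe that only the first summand on the right-hand side depends on $(\mu_1,\ldots,\mu_\numS)$, so maximizing this summand over the feasible box $\{\mu \in [0,1]^\numS : |\mu_i-\mu_0|\leq d(x_0,x_i)\ \forall i\}$ is the entire task. Since $P^\pi$ is coordinatewise non-decreasing by the non-decreasing separable hypothesis, the factor $1-P^\pi(1-\mu_1,\ldots,1-\mu_\numS)$ is coordinatewise non-decreasing in each $\mu_i$.

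Third, I would split on the sign of the scalar $(1-\mu_0-q)$. If $\mu_0 \in [0,1-q]$ this scalar is non-negative, and each $\mu_i$ should be pushed to its upper feasibility bound $\min\{\mu_0+d(x_0,x_i),1\}$; if $\mu_0 \in [1-q,1]$ the scalar is non-positive, and each $\mu_i$ should be pushed to its lower feasibility bound $\max\{\mu_0-d(x_0,x_i),0\}$. The convention $\mathcal{B}(\mu)=\mathcal{B}(0)$ or $\mathcal{B}(1)$ for $\mu$ outside $[0,1]$ absorbs the clipping at the endpoints, producing the two expressions in the proposition.

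I do not anticipate a serious obstacle: once \Cref{lem:cdf_integral_reduction} has reduced the expected regret to a pointwise function of the $1-\mu_i$'s, the remaining optimization separates coordinatewise on the sign of $(1-\mu_0-q)$ and is resolved by the monotonicity hypothesis alone. The only points requiring a touch of care are the boundary $\mu_0 = 1-q$, where the two prescriptions coincide because the $\mu_i$-dependent factor is multiplied by zero, and the verification that the $\mathcal{B}(\cdot)$ convention correctly absorbs Bernoulli means nominally falling outside $[0,1]$.
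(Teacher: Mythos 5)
Your proposal is correct and follows essentially the same route as the paper: both reduce the expected regret against Bernoullis to the single evaluation $(\cu+\co)\,\Psi^\pi(1-\mu_0,\ldots,1-\mu_\numS)$, observe that only the term $[1-P^\pi(1-\mu_1,\ldots,1-\mu_\numS)]\cdot(1-\mu_0-q)$ depends on the historical means, and use the coordinatewise monotonicity of $P^\pi$ together with the sign of $1-\mu_0-q$ to push each $\mu_i$ to its feasible extreme. The paper merely packages the coordinatewise step as an explicit monotonicity statement for the maps $\rho_j$, which is the same argument in slightly different clothing.
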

Proposition \ref{prop:worst_history} establishes that for a given out-of-sample Bernoulli mean $\mu_0$, the worst-case sequence of conditional historical distributions can be explicitly characterized---they are the furthest possible (while still satisfying the local constraint) from the out-of-sample distribution. However the direction in which they differ depends on the economics of the problem and on the out-of-sample mean $\mu_0$.  When the mean of the out-of-sample Bernoulli distribution is below $1-q$, i.e., when it is optimal to set an inventory level of $0$, the worst-case sequence of distributions tends to inflate the mean as much as possible, to values of $\mu_0+d(x_0,x_1),\ldots,\mu_0+d(x_0,x_\numS)$, in order to push the decision-maker to carry more inventory than $0$. A similar interpretation holds when the mean of the out-of-sample Bernoulli distribution is above $1-q$. The two expressions are identical if $\mu_0=1-q$.

In the proof of Proposition \ref{prop:worst_history}, we actually derive a stronger statement as we show a monotonicity property of the regret of non-decreasing separable policies as a function of the Bernoulli means. To prove this property, we need the separable policy to be non-decreasing.

By combining Proposition \ref{prop:reduction_bern} and Proposition \ref{prop:worst_history}, we obtain
a characterization of the worst-case regret for any non-decreasing separable policy. We present this as the following \namecref{thm:regret_MOS},
which will later be used to derive our main result about Weighted ERM policies.
\begin{theorem}
\label{thm:regret_MOS}
For every $\numS \geq 1$, any sequence of contexts $\mathbf{x} = (x_i)_{i \in \{0,\ldots,\numS\}} \in \X^{\numS+1}$  and any non-decreasing separable policy $\pi$ we have,
\begin{align*}
\wreg[\pi]{\mathbf{x}}  &= \max \Big\{ \sup_{\mu_0 \in [0,1-q]} \mathbb{E}_{\mathbf{y} \sim \mathcal{B}(\mu_0 + d(x_0,x_1)) \times \ldots \times \mathcal{B}\left( \mu_0 + d(x_0,x_\numS) \right)} \left[  R \left(\pi(\mathbf{x},\mathbf{y}),\mathcal{B}(\mu_0) \right) \right],\\
&\quad \sup_{\mu_0 \in [1-q,1]} \mathbb{E}_{\mathbf{y} \sim \mathcal{B}(\mu_0 - d(x_0,x_1)) \times \ldots \times \mathcal{B}\left( \mu_0 - d(x_0,x_\numS) \right)} \left[  R \left(\pi(\mathbf{x},\mathbf{y}),\mathcal{B}(\mu_0) \right) \right] \Big \}.
\end{align*}
\end{theorem}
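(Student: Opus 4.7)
The plan is to prove Theorem \ref{thm:regret_MOS} as a direct consequence of the two structural reductions already established, namely Proposition \ref{prop:reduction_bern} and Proposition \ref{prop:worst_history}. The theorem is essentially a corollary obtained by chaining these two results and splitting the remaining one-dimensional supremum according to the dichotomy that appears in Proposition \ref{prop:worst_history}.

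First, I would invoke Proposition \ref{prop:reduction_bern}, which only requires $\pi$ to be separable, to rewrite $\wreg[\pi]{\mathbf{x}}$ as an iterated supremum over the Bernoulli means $\mu_0,\mu_1,\ldots,\mu_\numS\in[0,1]$ subject to the local constraint $|\mu_i-\mu_0|\le d(x_0,x_i)$ for every $i\ge 1$. This reduces the original infinite-dimensional problem over $\Delta(\Y)^{\numS+1}$ to a finite-dimensional one over $[0,1]^{\numS+1}$, while preserving equality (not just an upper bound).

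Second, I would observe that the feasible set for $(\mu_1,\ldots,\mu_\numS)$ factorizes as a product of intervals that depend only on $\mu_0$, so the iterated supremum equals the joint supremum and can be handled by fixing $\mu_0$ and optimizing over $(\mu_1,\ldots,\mu_\numS)$ first. Since $\pi$ is assumed to be a non-decreasing separable policy, Proposition \ref{prop:worst_history} gives the inner supremum in closed form: the worst-case historical means are $\mu_0+d(x_0,x_i)$ when $\mu_0\in[0,1-q]$ and $\mu_0-d(x_0,x_i)$ when $\mu_0\in[1-q,1]$ (capped to $[0,1]$ through the convention on $\mathcal{B}(\cdot)$). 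Substituting back yields a single supremum over $\mu_0\in[0,1]$ of a function that is piecewise defined across $[0,1-q]$ and $[1-q,1]$.

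Third, I would split the outer supremum as
\begin{equation*}
\sup_{\mu_0\in[0,1]} g(\mu_0)\;=\;\max\Big\{\sup_{\mu_0\in[0,1-q]} g(\mu_0),\;\sup_{\mu_0\in[1-q,1]} g(\mu_0)\Big\},
\end{equation*}
and plug in the corresponding expressions from Proposition \ref{prop:worst_history} on each subinterval. The decomposition is consistent at the boundary $\mu_0=1-q$ because both pieces of Proposition \ref{prop:worst_history} agree there (as noted in the discussion following its statement), so no double counting or gap is introduced. I do not anticipate any real obstacle: the only bookkeeping needed is to verify that the iterated supremum in Proposition \ref{prop:reduction_bern} can be performed in the order (inner over $(\mu_i)_{i\ge 1}$, outer over $\mu_0$), which is immediate from the product structure of the local constraints, and that the split at $\mu_0=1-q$ preserves equality, which follows from the agreement of the two cases at this point.
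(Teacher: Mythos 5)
Your proposal is correct and matches the paper's own derivation: the paper obtains \Cref{thm:regret_MOS} precisely by chaining \Cref{prop:reduction_bern} (reduction to Bernoulli means) with \Cref{prop:worst_history} (closed-form worst-case historical means) and splitting the remaining supremum over $\mu_0$ at $1-q$, where the two cases agree. No further argument is given or needed.
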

\Cref{thm:regret_MOS} directly follows from \Cref{prop:reduction_bern,prop:worst_history} and has two notable implications. First, it
shows that computing the worst-case regret of a non-decreasing separable policy can be done efficiently as the non-convex infinite dimensional optimization problem \eqref{eq:fixed_design} can actually be reduced to a one-dimensional optimization problem on a line segment $[0,1]$.  Furthermore, it establishes that Bernoulli distributions form a family of hard distributions for separable policies and characterizes the worst sequence of historical distributions as a function of the out-of-sample distribution.

The next corollary, relates the value of the optimization problem with multiple samples as defined in \eqref{eq:multiple_samples} and the more conservative one in \eqref{eq:fixed_design}.
\begin{corollary}
\label{cor:multiple_sample_to_one}
    If the dissimilarity $d$ satisfies the triangular inequality, then for every $\numS \geq 1$, any sequence of contexts $\mathbf{x} = (x_i)_{i \in \{0,\ldots,\numS\}} \in \X^{\numS+1}$  and any non-decreasing separable policy $\pi$, the value of the problem in \eqref{eq:multiple_samples} is equal to that of \eqref{eq:fixed_design}.  
\end{corollary}

\Cref{cor:multiple_sample_to_one} shows that when the dissimilarity function satisfies the triangular inequality, the two optimization formulations in \eqref{eq:multiple_samples} and \eqref{eq:fixed_design} are in fact equivalent. This equivalence can be understood by examining the worst-case distributions characterized in \Cref{thm:regret_MOS}. For any $i,j \in \{1,\ldots,\numS\}$, these distributions yield worst-case means that satisfy  
$|\mu_i - \mu_j| = |d(x_0,x_i) - d(x_0,x_j)|$,
which by the triangular inequality implies $|\mu_i - \mu_j| \leq d(x_i,x_j)$. Hence, the candidate Bernoulli distributions remain feasible even under the stricter constraints of \eqref{eq:multiple_samples}. As a result, the worst-case regret in both formulations coincides. This observation highlights that, under mild structural assumptions on $d$, the seemingly more conservative formulation \eqref{eq:fixed_design} yields the same worst-case regret, and our result extends to the more benign setting where samples observed in the same context must be i.i.d.

We note that \Cref{thm:regret_MOS} relies on the assumption that the local condition is defined using the Kolmogorov distance. However, our optimization-based proof technique can also yield tighter analyses for other distances. For instance, although we do not characterize the exact performance of Weighted ERM policies under the Wasserstein distance, we show in \Cref{thm:upper_bound_bern_Wasserstein} that, for any configuration of contexts, a Lagrangian relaxation combined with the proof technique of \Cref{prop:reduction_bern} reduces the original non-convex infinite-dimensional problem to a minimax problem involving only $2\numS+1$ variables. Building on the ideas of \Cref{prop:worst_history}, we further simplify this formulation in specific context configurations and for the ERM policy, leading to significantly improved guarantees compared to concentration-based bounds.

\begin{remark}[Randomized policies]
\label{rem:random}
It is worth noting that our main theorem also holds for any possible randomization over non-decreasing separable policies as this class of policies is closed under mixtures as formalized in \Cref{lem:mixture_closed}. This extension to randomized policies will be valuable in various settings; we return to this in \Cref{sec:shape}.
\end{remark}

\subsection{Relations between classes of policies}
\label{sec:OS}
\Cref{thm:regret_MOS} applies to any non-decreasing separable policy, but at this stage we have not yet showed that this abstract class contains any policies of interest.  In particular, it is a priori non-obvious whether Weighted ERM policies are separable and non-decreasing.  To show this, we first introduce the intermediate class of counting policies (\Cref{def:counting_policies}), which can be shown to be separable and non-decreasing and provide a definition that is easier to work with.
We then show that Weighted ERM policies (\Cref{def:WERM}) and order statistic policies (\Cref{def:OS}) are special cases of counting policies.
For reference, \Cref{fig:classes} illustrates the relationships between all the classes of policies we analyze in the present paper. 

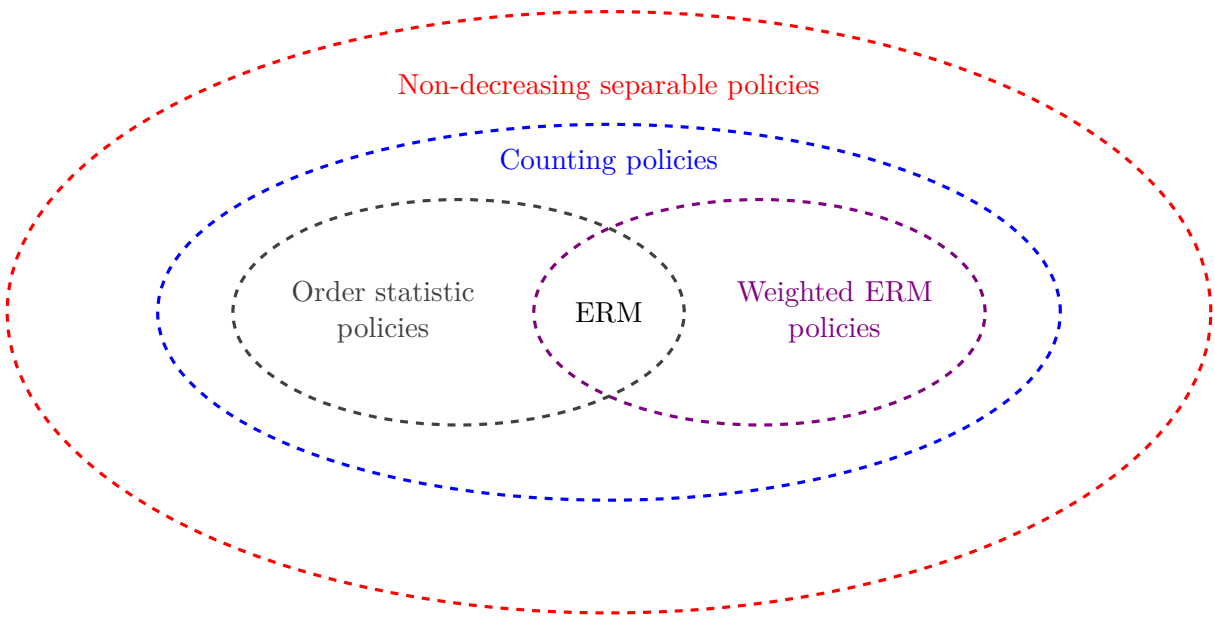
\begin{figure}[h!]
    \centering
\begin{tikzpicture}
    \draw[red, very thick, dashed] (0,0) ellipse (8cm and 4cm);
    \node[red] at (0,3) {Non-decreasing separable policies};

    \draw[blue,very thick,dashed] (0,0) circle (6cm and 2.5cm);
    \node[blue] at (0,2) {Counting policies};

    \draw[violet, very thick, dashed] (2,0) ellipse (3cm and 1.5cm);
    \node[violet,align=center] at (3,0) {Weighted ERM \\ policies};

    \draw[darkgray, very thick, dashed] (-2,0) ellipse (3cm and 1.5cm);
    \node[darkgray,align=center] at (-3,0) {Order statistic \\ policies};

    \node at (0,0) {ERM};
\end{tikzpicture}
    \caption{\textbf{Classes of policies analyzed.} The figure represents the different classes of policies that we analyze and the relationships that we show in terms of containment.}
    \label{fig:classes}
\end{figure}

\begin{definition}[Counting policies]
\label{def:counting_policies}
We say that a deterministic policy $\pi$ is a counting policy if there exists a function $\kappa^{\pi} : \{0,1\}^{\numS} \to \{0,1\}$ which could depend on the fixed contexts $\mathbf{x}$, such that, for every $y \in [0,1]$, and any $\mathbf{y} \in \Y^\numS$,
\begin{equation*}
\mathbbm{1} \{ \pi(\mathbf{x}, \mathbf{y}) \leq y \} = \kappa^{\pi} \left( \mathbbm{1} \{ y_1 \leq y \},\ldots,\mathbbm{1} \{ y_{\numS} \leq y  \} \right).
\end{equation*}
We refer to $\kappa^{\pi}$ as the counting function associated to $\pi$. 
\end{definition}
Counting policies are intuitively defined as ones for which the decision of whether the inventory should be lower than any given value $y \in \Y$ is only a function of the sequence of variables $(\mathbbm{1} \{ y_i \leq y \})_{i \in \{1,\ldots, \numS\}}$, which indicate the past demand samples that have a value lower than $y$. 
Importantly, proving that a policy is a counting policy is simpler than showing that it is a separable policy because counting policies are defined through the decision prescribed by the policy as a function of the \textit{realization} of past samples, whereas the definition of separable policy involves the distribution of decisions implied by the policy. Furthermore, showing that a policy is a counting policy suffices to establish that it is a non-decreasing separable policy, as formalized by the next result.
\begin{proposition}\label{prop:counting_to_sep}
Every counting policy is a non-decreasing separable policy. 
\end{proposition}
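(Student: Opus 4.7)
The plan is to take a counting policy $\pi$ with counting function $\kappa^\pi : \{0,1\}^\numS \to \{0,1\}$ and construct an associated function $P^\pi$ that witnesses both separability and the non-decreasing property. I will proceed in three steps: (i) define $P^\pi$ and verify separability by a direct probability computation; (ii) establish that $\kappa^\pi$ itself is monotone on $\{0,1\}^\numS$; (iii) deduce non-decreasingness of $P^\pi$ from monotonicity of $\kappa^\pi$.

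\textbf{Step 1 (definition of $P^\pi$ and separability).} Given the counting function $\kappa^\pi$, define
\begin{equation*}
P^\pi(h_1,\ldots,h_\numS) \;=\; \sum_{\mathbf{b} \in \{0,1\}^\numS} \kappa^\pi(\mathbf{b}) \prod_{i:\, b_i=1} h_i \prod_{i:\, b_i=0} (1-h_i),
\end{equation*}
which is exactly $\Ex[\kappa^\pi(B_1,\ldots,B_\numS)]$ for independent $B_i \sim \mathcal{B}(h_i)$. Fix any distributions $F_{x_1},\ldots,F_{x_\numS}$ on $\Y$ and any $z \in \mathbb{R}$. Since $y_i \sim F_{x_i}$ independently, the indicators $\mathbbm{1}\{y_i \leq z\}$ are independent Bernoullis with means $F_{x_i}(z)$, so taking the expectation of the defining identity $\mathbbm{1}\{\pi(\mathbf{x},\mathbf{y})\leq z\} = \kappa^\pi(\mathbbm{1}\{y_1\leq z\},\ldots,\mathbbm{1}\{y_\numS\leq z\})$ gives $\Prob(\pi(\mathbf{x},\mathbf{y}) \leq z) = P^\pi(F_{x_1}(z),\ldots,F_{x_\numS}(z))$. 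This is precisely the separability condition, with the same $P^\pi$ for every $z$.

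\textbf{Step 2 (monotonicity of $\kappa^\pi$).} This is the main step and the one I anticipate to be most delicate. I claim that for any $\mathbf{b} \leq \mathbf{b}'$ in $\{0,1\}^\numS$ (componentwise), $\kappa^\pi(\mathbf{b}) \leq \kappa^\pi(\mathbf{b}')$. To show this, I will exhibit a single realization $\mathbf{y} \in \Y^\numS$ and two thresholds $z_1 < z_2$ in $\Y$ such that $\mathbbm{1}\{y_i \leq z_1\} = b_i$ and $\mathbbm{1}\{y_i \leq z_2\} = b'_i$ for all $i$. Concretely, choose $z_1 = 1/3$, $z_2 = 2/3$, and set $y_i = 0$ if $b_i = 1$ (which forces $b'_i = 1$ as well), $y_i = 1/2$ if $b_i = 0$ and $b'_i = 1$, and $y_i = 1$ if $b'_i = 0$. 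The hypothesis $\mathbf{b} \leq \mathbf{b}'$ makes this assignment well-defined and yields the desired equalities. Applying the counting-policy identity at $z_1$ and $z_2$ for this $\mathbf{y}$ and using that $z \mapsto \mathbbm{1}\{\pi(\mathbf{x},\mathbf{y}) \leq z\}$ is non-decreasing in $z$ yields $\kappa^\pi(\mathbf{b}) \leq \kappa^\pi(\mathbf{b}')$.

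\textbf{Step 3 (non-decreasingness of $P^\pi$).} Fix an index $i$ and all remaining arguments $h_{-i} \in [0,1]^{\numS-1}$. By multilinearity, write
\begin{equation*}
P^\pi(h_1,\ldots,h_\numS) \;=\; h_i\, Q_1(h_{-i}) + (1-h_i)\, Q_0(h_{-i}),
\end{equation*}
where $Q_c(h_{-i})$ is the expectation of $\kappa^\pi$ when the $i$-th bit is fixed to $c$ and the others are independent Bernoullis with means $h_{-i}$. The derivative in $h_i$ equals $Q_1(h_{-i}) - Q_0(h_{-i})$, which expands as a sum over $\mathbf{b}_{-i} \in \{0,1\}^{\numS-1}$ of non-negative products (in the $h_j$ and $1-h_j$) weighted by $\kappa^\pi(\mathbf{b}_{-i},1) - \kappa^\pi(\mathbf{b}_{-i},0)$; by Step 2 each such weight is non-negative, so the derivative is non-negative on $[0,1]^\numS$. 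Hence $P^\pi$ is non-decreasing in every coordinate, completing the verification that $\pi$ is a non-decreasing separable policy.
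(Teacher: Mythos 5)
Your proposal is correct and follows essentially the same route as the paper: the same multilinear definition of $P^\pi$ as the expectation of $\kappa^\pi$ over independent Bernoullis, the same monotonicity lemma for $\kappa^\pi$ established by realizing $\mathbf{b}$ and $\mathbf{b}'$ as threshold indicators of a single sample vector at two levels (the paper phrases this as a contradiction with levels $0.5$ and $0.75$, you argue it directly with $1/3$ and $2/3$), and the same coordinate-wise linear expansion to transfer monotonicity from $\kappa^\pi$ to $P^\pi$. No gaps.
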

The fact that the function $\kappa^\pi$ is identical for every $y$ allows us to show that counting policies must be non-decreasing.  From there, we prove \Cref{prop:counting_to_sep} by showing that non-decreasing counting policies induce non-decreasing separable policies.

Counting policies enable us to analyze Weighted ERM policies as we show that under the Newsvendor loss, the set of Weighted ERM policies is included in the set of counting policies.

\begin{proposition}
\label{prop:WEM_to_counting}
For any sequence of non-negative weights $\mathbf{w} = (w_i)_{i \in \{1,\ldots,\numS\}}$, the associated Weighted ERM policy is a counting policy. Furthermore its counting function $\kappa^{\pi}$ is defined for every $\mathbf{b} \in \{0,1\}^{\numS}$ as
\begin{equation*}
\kappa^{\pi}(\mathbf{b}) =  \begin{cases}
1 \quad \text{ if $\frac{\sum_{i=1}^\numS w_i \cdot b_{i} }{\sum_{i=1}^{\numS} w_{i}} \geq \frac{\cu}{\cu+\co}$,}\\
0 \quad \text{otherwise}.
\end{cases}
\end{equation*}
\end{proposition}

Finally, by combining \Cref{prop:counting_to_sep}, \Cref{prop:WEM_to_counting}, and \Cref{thm:regret_MOS}, we obtain the following \namecref{cor:regret_WERM}, which constitutes our main result.  It allows us to evaluate the worst-case regret of any Weighted ERM policy. 

\begin{theorem}
\label{cor:regret_WERM}
For every $\numS \geq 1$, any sequence of contexts $\mathbf{x} = (x_i)_{i \in \{0,\ldots,\numS\}} \in \X^{\numS+1}$  and any Weighted ERM policy $\pi$ we have,
\begin{align*}
\wreg[\pi]{\mathbf{x}}  &= \max \Big\{ \sup_{\mu_0 \in [0,1-q]} \mathbb{E}_{\mathbf{y} \sim \mathcal{B}(\mu_0 + d(x_0,x_1)) \times \ldots \times \mathcal{B}\left( \mu_0 + d(x_0,x_\numS) \right)} \left[  R \left(\pi(\mathbf{x},\mathbf{y}),\mathcal{B}(\mu_0) \right) \right],\\
&\quad \sup_{\mu_0 \in [1-q,1]} \mathbb{E}_{\mathbf{y} \sim \mathcal{B}(\mu_0 - d(x_0,x_1)) \times \ldots \times \mathcal{B}\left( \mu_0 - d(x_0,x_\numS) \right)} \left[  R \left(\pi(\mathbf{x},\mathbf{y}),\mathcal{B}(\mu_0) \right) \right] \Big \}.
\end{align*}
\end{theorem}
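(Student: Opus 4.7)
The plan is to observe that the statement is a direct corollary obtained by chaining together three results already established earlier in the paper, so no new analytical work is required at this stage. Given any Weighted ERM policy $\pi = \pi^{\mathbf{w}}$, I would first invoke \Cref{prop:WEM_to_counting} to conclude that $\pi$ is a counting policy, i.e.\ there exists a function $\kappa^\pi: \{0,1\}^{\numS}\to\{0,1\}$ such that $\mathbbm{1}\{\pi(\mathbf{x},\mathbf{y})\le z\} = \kappa^\pi(\mathbbm{1}\{y_1\le z\},\ldots,\mathbbm{1}\{y_\numS\le z\})$ for every threshold $z$. Next, \Cref{prop:counting_to_sep} upgrades this to the conclusion that $\pi$ is a non-decreasing separable policy in the sense of \Cref{def:separable_policies}. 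Finally, \Cref{thm:regret_MOS}, which already provides the exact two-sided characterization of $\wreg[\pi]{\mathbf{x}}$ for every non-decreasing separable policy, applies verbatim, yielding the displayed expression.

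Because the theorem is a corollary, there is no single hard step in its own proof; the substantive content lies entirely in the three prerequisites. In particular, the reduction to Bernoulli adversarial distributions (\Cref{prop:reduction_bern}) and the monotone-adversary structure (\Cref{prop:worst_history}) together furnish the one-dimensional line-search characterization, and these have already been derived under the more general separability assumption. The only conceptual thing to verify in the chain is that the separability function $P^\pi$ used in those proofs coincides with the one induced by the counting function of the Weighted ERM policy, but this is immediate from \Cref{prop:counting_to_sep}: taking expectations of the counting identity with $z$ fixed, independence of the $y_i$ under the product measure $F_{x_1}\times\cdots\times F_{x_\numS}$ turns $\kappa^\pi$ into a multilinear function $P^\pi$ of the marginal CDF values $F_{x_i}(z)$.

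Hence the proof outline is simply: apply \Cref{prop:WEM_to_counting}; apply \Cref{prop:counting_to_sep}; apply \Cref{thm:regret_MOS}. The only point worth spelling out for the reader is that although both expressions (for $\mu_0\in[0,1-q]$ and $\mu_0\in[1-q,1]$) are derived using the monotonicity of $P^\pi$ in each coordinate, the maximum over the two regimes is needed because the sign of the worst-case perturbation $\mu_i-\mu_0$ flips at $\mu_0=1-q$, as already noted following \Cref{prop:worst_history}. The expected obstacle, if any, is purely expositional: ensuring the reader that Weighted ERM, despite being defined by an argmin and therefore not manifestly ``separable,'' inherits separability via the intermediate counting characterization, which is the role played by the two-step decomposition illustrated in \Cref{fig:classes}.
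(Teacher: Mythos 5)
Your proposal is correct and matches the paper's own argument exactly: the paper obtains this theorem by combining \Cref{prop:WEM_to_counting}, \Cref{prop:counting_to_sep}, and \Cref{thm:regret_MOS} in precisely the chain you describe. No gaps; the additional remarks about the multilinear form of $P^\pi$ and the regime split at $\mu_0=1-q$ are consistent with the paper's supporting proofs.
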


We note that our definition of Weighted ERM policies selects the smallest action which minimizes the empirical loss, aligning with the standard definition of Weighted ERM in the Newsvendor setting. However, \Cref{cor:regret_WERM} remains valid even for policies that employ more complex tie-breaking rules when multiple actions minimize the empirical loss. We present this more general result in \Cref{sec:tie-break}.

\section{New Insights on the Learning Behavior of Algorithms}

\label{sec:insights}

In this section, we leverage our exact analysis of data-driven policies to derive new insights on their performance as a function of  quantity and relevance of data.
While the previous section is general, we now explore different prototypical special cases of context configurations. In \Cref{sec:mmBound} and \Cref{sec:shape}, we focus on the implications of our result for a special case in which all past dissimilarities are identical, i.e., $d(x_1,x_0) = \cdots =d(x_n,x_0)=\zeta$ for a fixed $\zeta \geq 0$. 
In such a case the data points can be treated symmetrically,  and it is natural to study the (unweighted) ERM policy.
In \Cref{sec:many_dissimilarities} we evaluate the performance of various policies when the dissimilarity deteriorates over time.
In \Cref{sec:misspecification} we investigate the impact of misspecified dissimilarities.

\subsection{Exact sample complexity of ERM and achievable performance} \label{sec:mmBound}
In this \namecref{sec:mmBound}, our goal is to understand the worst-case regret that can be achieved by ERM given a certain configuration of contexts. This question is of a very different nature from ones that can be asked in the setting in which past samples are drawn i.i.d. from the out-of-sample distribution. Indeed, when past contexts are different from the out-of-sample one, past samples are not fully indicative of the out-of-sample distribution $F_{0}$ and therefore one may not necessarily achieve a vanishing regret even with arbitrarily large sample sizes.  A second question consists in understanding the number of samples required to obtain a regret guarantee lower than a given target whenever such a target can be achieved.

We will explore both questions while contrasting the answers implied by previous state-of-the-art bounds to the ones we derive.
In particular, we compare our results with  \cite{mohri2012new} which, to the best of our knowledge, is the state-of-the-art bound which can be applied to our setting, as they derive a performance guarantee that holds when past outcomes are independently sampled from distributions which may differ from the out-of-sample distribution.
We show in \Cref{sec:apx_Mohri} that one can leverage their distribution-dependent bound to obtain the following guarantee on the regret of ERM, as a function of the given contexts $\mathbf{x} = (x_i)_{i \in \{0,\ldots,\numS\}}$.
\begin{equation}
\label{eq:expected_mohri_main}
4  C_{\numS} + \frac{2\max(q,1-q)}{\numS}  \sum_{i=1}^\numS d\left(x_i,x_0\right) + 4\max(q,1-q) \sqrt{\frac{2\pi}{\numS}} \cdot \left \{ \Phi \left(\frac{2\max(q,1-q)}{\sqrt{\numS}} \right) - \Phi(0) \right \}.
\end{equation}
In~\eqref{eq:expected_mohri_main}, $\Phi$ denotes the cumulative distribution function of the standard gaussian and
$C_\numS$ is a notion of Rademacher complexity evaluated on the worst-case set of distributions identified by \Cref{thm:regret_MOS}.

We present in \Cref{tab:detailed} the number of samples $n$ required to guarantee that ERM has a regret below some target, in the special setting where $d(x_1,x_0) = \cdots =d(x_n,x_0)=\zeta$ for a fixed $\zeta \geq 0$.
We fix $q$ to be 0.9.
As a point of reference, these regret targets are given as a percentage of the ``no-data regret'' achieved by a decision-maker who only knows the support of the demand distribution, which has been shown to equal $q\cdot(1-q)$ when this support is normalized to [0,1] \citep{perakis2008regret}.
\begin{table}[h!]
\centering
\begin{tabular}{llllllll}
& & \multicolumn{6}{c}{Regret target (as \% of no-data regret)}\\
\cline{3-8}
$\zeta$& & 100\% & 90\% & 75\% & 50\% & 25\% & 10\% \\ 
\hline
0 & $N^{\mathrm{GP}}$    &  279 & 338 &  475  &1,032  &3,298& 24,673  \\
& $N^{\mathrm{exact}}$ & 3  & 3 & 4 & 5 & 14 & 37 \\
 \hline
0.02  & $N^{\mathrm{GP}}$    &  734 & 1,047 &  2,114  &24,461  &inf.& inf.  \\
& $N^{\mathrm{exact}}$ &  3& 3 &  4&  5& 16 & inf. \\
\hline
0.04  & $N^{\mathrm{GP}}$    &  6,228 & 24,493 &  inf.  & inf. &inf.& inf.  \\
& $N^{\mathrm{exact}}$ &  3& 4 &  4&  6& inf. & inf. \\
\end{tabular}
\caption{\textbf{Number of samples that ensures  a target  regret for  ERM.} The table reports the number of samples needed to guarantee a regret target when dissimilarities are $\zeta \in \{0,0.02,0.04\}$. $N^{\mathrm{GP}}$ (resp.\ $N^{\mathrm{exact}}$)  is the number of samples required to achieve a target regret implied by the general-purpose guarantee~\eqref{eq:expected_mohri_main} of \citet{mohri2012new} (resp.\ our exact quantification of $\wreg[\ERM]{\mathbf{x}}$ 
in \Cref{cor:regret_WERM}). We report ``inf.'' when the bound indicates that the target is unachievable even with infinitely many samples.}
\label{tab:detailed}
\end{table}

In \Cref{tab:detailed}, we first present results for the special case  in which past outcomes are drawn i.i.d. from the out-of-sample distribution ($\zeta  =0$) for reference. In this case, general-purpose bounds provide an overly pessimistic understanding of the number of samples required to achieve a given performance. Indeed, our characterization shows that the difference between the actual number of samples required and the one implied by previous bounds can be multiple orders of magnitude. We note that similar insights have been previously established by \cite{besbes2021big}, under a different performance metric of relative regret.

When $\zeta$ is positive,  we observe that the discrepancy between the actual number of samples required and the one implied by previous bounds is even more acute.  Indeed, previous approaches imply that, when historical distributions are different from the out-of-sample one, e.g., $\zeta = 0.02$, ERM requires $734$ samples (up from $279$ samples in the i.i.d. case) to match the no-data regret. This suggests that ERM is a policy which would require a  large number of samples to become efficient and that this sample complexity considerably deteriorates as relevance of data slightly decreases. However, our exact analysis shows that these bounds are overly conservative and do not capture the performance of ERM at all. In fact, ERM only requires $3$ samples to achieve this same performance\footnote{We remark here that ERM does not use knowledge of the support of demand, whereas the minimax no-data policy does. This explains why it takes a few samples to match its performance.}, and the sample complexity only increases to 14 if we are targeting 25\% instead of 100\% of the no-data regret.

Even more notably, when the past distributions differ from the out-of-sample one ($\zeta = 0.02$), we observe that certain levels of performance are unachievable by the ERM policy even with infinitely many samples.  This is a natural consequence of only accessing a past distribution that is different from the out-of-sample one on which performance will be evaluated, and even the best data-driven policy in this case has a non-vanishing regret. 
\Cref{tab:detailed} shows that previous state-of-the-art bounds provide an incorrect picture of the achievable performance. Indeed, such bounds imply that, when $q=0.9$, ERM cannot have a worst-case regret lower than $25\%$ of the no-data regret, even with an infinite number of samples. In stark contrast, our bound shows that this performance is actually achievable and can be obtained with as few as 16 samples! 
The analysis above highlights that, while general-purpose upper bounds are very powerful, specialized ones are needed to fully uncover the value of data and the impact of data relevance on performance.

\subsection{Shape of the learning curve of ERM and effective sample size}\label{sec:shape}

\subsubsection{Learning curve: unveiling new insights}
In this \namecref{sec:shape} we focus on the more granular question associated with the shape of how the worst-case regret evolves with the number of samples, which we call the ``learning curve''.
We consider the following representative example throughout: dissimilarities $d(x_i,x_0)$ are $\zeta=0.1$ for all $i$, and $q=0.9$.

First, we plot in \Cref{fig:ERM} the regret guarantee of ERM as a function of $n$.  Note that unlike  \Cref{sec:mmBound}, the regret is now expressed in absolute terms instead of as a percentage of the no-data regret.

\begin{figure}[h!]
\centering
\subfigure[Learning curve of ERM]{
\begin{tikzpicture}[scale=.74]
\begin{axis}[
            title={},
            xmin=0,xmax=200,
            ymin=0.0,ymax=1.5,
            width=10cm,
            height=8cm,
            table/col sep=comma,
            xlabel = number of samples $\numS$,
            ylabel = worst-case regret,
            grid=both,
            legend pos=north east]
\addplot [gray,thick,mark=square,mark options={scale=.4}] table[x={Ns},y={eps = 0.1}] {Data/mohri_bound_correct_eps_e-1_q9.csv};
    \addlegendentry{ERM (M \&MM12)}
 \addplot [blue,thick,mark=square,mark options={scale=.4}] table[x={Ns},y={epsilon=0.1}] {Data/paper_all_data_SAA_q9.csv};
    \addlegendentry{ERM (Our exact analysis)}
\end{axis}
\end{tikzpicture}
\label{fig:ERM}
}
\subfigure[Improving ERM]{
\begin{tikzpicture}[scale=.74]
\begin{axis}[
            title={},
            xmin=0,xmax=200,
            ymin=0.0,ymax=0.15,
            width=10cm,
            height=8cm,
            table/col sep=comma,
            xlabel = number of samples $\numS$,
            ylabel = worst-case regret,
            grid=both,
            legend pos=north east]

\addplot [blue,thick,mark=square,mark options={scale=.2}] table[x={Ns},y={epsilon=0.1}] {Data/paper_all_data_SAA_q9.csv};
    \addlegendentry{ERM}
    \addplot [red,thick,mark=square,mark options={scale=.2}] table[x={Ns},y={tweak}] {Data/paper_all_data_mix_q9.csv};
    \addlegendentry{$\LERM$}
\draw[black,line width = 1mm, domain=0:200, smooth, variable=\x, black, dashed] plot ({\x}, {50});
   \addlegendentry{Lower bound (any policy)}
   \addlegendimage{ultra thick,dashed,black}
\end{axis}
\end{tikzpicture}
\label{fig:LERM_tilde}
}
\caption{\textbf{Performance of ERM and alternative policies.} (a) The figure depicts the worst-case regret of the ERM policy as implied by the bound presented by \cite{mohri2012new} and by ours. (b) The figure compares the performance of ERM and $\LERM$ to the lower bound achievable by any data-driven policy (see \Cref{rem:lb}) . In these plots, $\zeta = .1$ and $q=.9$.}
\label{fig:learning_curves}
\end{figure}
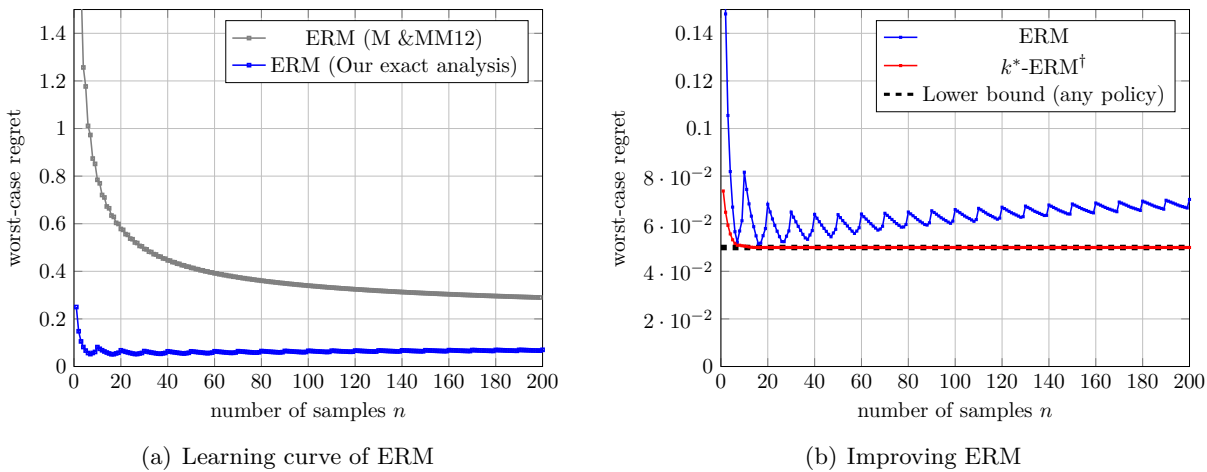

We observe in \Cref{fig:ERM} that the scale of the curve implied by previous bounds is very different from the actual worst-case performance of ERM which is obtained by our exact analysis. This observation is consistent with the findings from \Cref{sec:mmBound}. Moreover, this figure highlights a second shortcoming of previous bounds: they do not capture the correct shape of the learning curve of ERM. Indeed, the worst-case regret implied by \cite{mohri2012new} is decreasing as the number of samples grows and suggests that the performance of ERM improves as we aggregate more samples.
By contrast, our exact analysis unveils that the worst-case regret of ERM is non-monotonic as a function of the sample size.

In \Cref{fig:LERM_tilde},  we zoom in on the performance of ERM, displaying only the guarantee from our exact analysis, and observe two types of non-monotonicity.  First, ERM exhibits a ``local'' non-monotonicity, in that the learning curve is jagged with many local minima. Such type of behavior  was already observed in the i.i.d. setting (where $\zeta=0$)  by \citet{besbes2021big}, under a different metric of regret. \Cref{fig:LERM_tilde} reveals another fundamental phenomenon,  not present in the i.i.d. setting: ERM  exhibits a ``global'' non-monotonicity. The performance improves considerably with the first few samples,  and then deteriorates after adding more samples. In other words, we observe that the learning curve of ERM admits a unique global minimum  at a finite $n$ ($n=15$ in the example), and increasing $n$ to $\infty$ tends to move away from this global minimum!

 We also show in \Cref{sec:beyond_worst_case}  that most of the insights derived through the worst-case analysis are still widely applicable when the instance is fixed across sample sizes and the demand distribution is not a Bernoulli distribution.

\subsubsection{Achievable performance and effective sample size}

We have just observed various phenomena associated with the performance of ERM, which lead to a natural question: How should one improve ERM in settings when historical distributions are different from the out-of-sample one?

To provide some context on the types of performance achievable in such settings, we first state a universal lower bound for any policy, even with infinite samples.
\begin{remark}[Lower Bound]
\label{rem:lb}
For $\zeta \leq \min(q,1-q)$, the regret of any data-driven policy is larger or equal to $\zeta/2$ for any sample size.
\end{remark}
This result is implied by the proof of Proposition 5 in \cite{besbes2022beyond} who analyze general decision-problems with infinite data sizes. We investigate corrections to ERM and how close they can get to this lower bound.

The non-monotonicities in the shape of ERM's learning curve, and in particular the observed phenomenon of global non-monotonicity, lead us to propose the following alternate policy and subsequently define a notion of effective sample size.
The alternate policy, which we refer to as $\LERM$, modifies ERM in two ways. First,  we use a mixture of order statistics inspired by \cite{besbes2021big} to avoid the local non-monotonicity. This was sufficient to obtain a minimax policy in their setting but it does not correct the global non-monotonicity when $\zeta > 0$ (see \Cref{fig:SAA_vs_mix} in \Cref{sec:apx_improve_mix}). The second idea consists in using the optimal number of samples (denoted by $k^*$) even when having access to more.  We formally define the $\LERM$ policy in \Cref{sec:apx_improve_mix}. 

$\LERM$ can be evaluated by leveraging the full generality of our \Cref{thm:regret_MOS}, which applies to randomized counting policies (details in \Cref{rem:random}).
We plot its worst-case regret as derived by our exact analysis, also in \Cref{fig:LERM_tilde}, along with the lower bound on the regret of any data-driven policy.
We observe that $\LERM$ considerably improves the performance over ERM and resolves both types of non-monotonicities. Quite notably, we also note that  $\LERM$ is in fact near-optimal for the worst-case regret criterion (it achieves a worst-case regret within 1.001\%  of the lower bound) as soon as more than 15 samples are available. We therefore interpret $k^*$, the optimal choice of sample size for $\LERM$, as a notion of effective sample size which contains almost all the information necessary to perform well. After this point, using more samples  does not allow to improve performance by any meaningful amount, even if one were to use a different policy.

In \Cref{tab:effective_samples}, we investigate the effective sample size (sample size used by $\LERM$) as a function of  $\zeta$ for values of $\zeta$ ranging from $0.01$ to $0.1$. 
\begin{table}[h!]
\centering
\begin{tabular}{ccccccc}
 & \multicolumn{6}{c}{$\zeta$}  \\ 
 \cline{2-7}
 & 0.01& 0.02 & 0.03 & 0.04 & 0.05 & 0.1 \\
\hline
effective samples size & 589 &  330&  202&  95 & 58 & 15 \\
\end{tabular}
\caption{\textbf{Effective sample sizes for $\LERM$.} The table reports, for different values of dissimilarity, the effective sample size, i.e., the number of samples used by  $\LERM$, even when having access to more samples.}
\label{tab:effective_samples}
\end{table}

We note that for all the values reported in \Cref{tab:effective_samples}, the worst-case regret of $\LERM$ when using a sample size equal to the effective sample size is lower than $1.001\%$ of that of the best data-driven policy using infinitely many samples. 
Consequently, in the presence of data which deviates from the i.i.d. setting, restricting attention to the effective sample size while using $\LERM$ enables to match near-optimal performance across all data regime with very few samples.

These insights sharply contrast with the behavior suggested by previous monotonic bounds.  The performance of a learning policy should not be thought of as a decomposition between asymptotic error and finite sample error as described in \Cref{sec:concentration}; furthermore, one may achieve substantially better performance by using the effective number of samples rather than all samples.

\subsection{Application to varying dissimilarities: modeling time dependence}
\label{sec:many_dissimilarities}
We conclude this section with an illustration of the generality of our theory to analyze different instances of the contextual newsvendor. We next consider a setting which models a distribution drift over time. In this case, we set for every $i \in \{1,\ldots,\numS\}$ the dissimilarity $d(x_0,x_i) = i \cdot \Delta$, with $\Delta > 0$ representing the drift parameter. In this setting, one may think of $y_i$ as the demand observed $i$ weeks before the inventory decision and $\Delta$ is an upper-bound on the drift in demand distribution that may occur between two consecutive weeks.

We evaluate the worst-case regret for two subclasses of Weighted ERM policies with weights given by two widely used Nadaraya-Watson kernels.  The first one is the class of Exponential Weighted ERM policies (EWERM) which corresponds to the kernel $K_h(z) = \exp(z/h)$. Formally, it assigns weights of the form $w_i = \gamma^i$ for a fixed $\gamma \in (0,1]$\footnote{We note that when the dissimilarity satisfies $d(x_0,x_i) = i \cdot \Delta$, one can reparametrize the exponential kernel as follows: $K_{h}(d(x_0,x_i)) = \exp \left(  \frac{d(x_0,x_i)}{h} \right) = \exp \left(  \frac{\Delta}{h} \cdot i \right) = \gamma^i$.}. 
We denote by $\pi^{\gamma}$ such a policy. The second one is the class of $k$-NN-ERM policies which corresponds to the kernel $K_h(z) = \mathbbm{1} \left\{ z \leq h \right \}$. For the instance considered in this section, it equivalently assigns weights of the form
\begin{equation*}
    w_i =\begin{cases}
        1 \quad \text{if $i \leq k$}\\
        0 \quad \text{otherwise,}
    \end{cases}
\end{equation*}
for a fixed $k \in \{1,\ldots,\numS\}$. We denote by $\pi^{k}$ such a policy. We next evaluate the worst-case performance of these policies for various values of the drift parameter $\Delta$. Given a fixed drift, we denote by $\mathbf{x}_\Delta$ the sequence of contexts satisfying the dissimilarity relation defined above. 

For a fixed $\numS$ we solve the following two problems,
\begin{equation}
\label{eq:best_gamma}
    \min_{\gamma \in [0,1]} \wreg[\pi^\gamma]{\mathbf{x}_\Delta}
\end{equation}
and,
\begin{equation}
\label{eq:best_k}
    \inf_{k \in \{1,\ldots,\numS\}} \wreg[\pi^k]{\mathbf{x}_\Delta}.
\end{equation}
Furthermore we denote by $\gamma^*(\Delta)$ (resp. $k^*(\Delta)$) the value of $\gamma$ (resp. $k$) which achieves the minimum value of problem \eqref{eq:best_gamma} (resp. \eqref{eq:best_k}). The value of \eqref{eq:best_gamma} (resp. \eqref{eq:best_k}) is the minimal worst-case regret achievable by using the best parameter for each subclass of policies.

We report in \Cref{tab:temportal} the value of these two problems and the values of the parameters achieving the minimal worst-case regret.
\begin{table}[h!]
\centering
\begin{tabular}{ccc|cc}
  & \multicolumn{2}{c|}{EWERM} & \multicolumn{2}{c}{$k$-NN-ERM}  \\ 
 \cline{2-5}
 $\Delta$ & $\gamma^*(\Delta)$& worst-case regret & $k^*(\Delta)$ & worst-case regret \\
\hline
\hline
$0.0010$ & 0.95 & 0.016 &  27&  0.014  \\
$0.0025$ & 0.91 & 0.023  &  17&  0.018  \\
$0.0050$ & 0.88 & 0.031 &  8&  0.025  \\
\end{tabular}
\caption{\textbf{Best performance and parameters for EWERM and $k$-NN-ERM.} The table reports, the best choice of parameter and the achievable worst-case regret for the two classes of policies for different values of $\Delta$. ($\numS = 100$ and $q=0.9$)}
\label{tab:temportal}
\end{table}

We note in \Cref{tab:temportal} that when the relevance of data is heterogeneous, the number of samples used by each policy is actually even smaller than observed in \Cref{sec:shape}. For instance, the value of $k^*(\Delta)$ in \Cref{tab:temportal} can be interpreted as a notion of effective samples size comparable to the one in \Cref{tab:effective_samples} (even though the settings are not exactly the same). When $\Delta = 10^{-3}$, we observe that $k^*(\Delta) =27$, implying that the least relevant data used by the policy has a dissimilarity equal to $0.027$. We note that from \Cref{tab:effective_samples}, $\LERM$ would have used at least $200$ samples when all the dissimilarities are the same with a value lower than $0.03$. This difference in terms of effective sample size reveals that, in the presence of drifting distributions, the data points with low dissimilarity significantly decrease the value of data with higher dissimilarity. 

We also observe that the number of samples used by $k$-NN-ERM is consistent with the number of samples used by EWERM. For a given $\gamma \in (0,1)$, one may interpret $1/(1-\gamma)$ as a proxy for the effective sample size of the policy $\pi^\gamma$. Therefore, one may ask whether $1/(1-\gamma^*(\Delta))$ is comparable to $k^*(\Delta)$. We observe in \Cref{tab:temportal} that when $\Delta = 0.001$, $1/(1-\gamma^{*}(\Delta)) = 20$ which is relatively similar to $k^*(\Delta)$ as it is equal to $27$. This conclusion holds across all values of $\Delta$. 

\subsection{Sensitivity with respect to the dissimilarity}
\label{sec:misspecification}

In \Cref{sec:many_dissimilarities} we have considered a setting with time drift and investigated how a decision-maker should choose the hyper-parameter for widely used families of Weighted ERM policies (such as the $k$-NN-ERM policies) in order to minimize the worst-case regret. In what follows, we study the sensitivity of these results with respect to the dissimilarity.

We report in \Cref{tab:misspecification} the performance of the $k$-NN-ERM policy for various choices of $k$ and $\Delta$. 
\begin{table}[h!]
\centering

\begin{tabular}{cccc}
  & \multicolumn{3}{c}{worst-case regret}   \\ 
 \cline{2-4}
 $\Delta$ & $k = 8$ & $k = 17$ & $k = 27$  \\
\hline
\hline
$0.0010$ & 0.018 &  0.016 &  \textbf{0.014}   \\
\hline
$0.0025$ & 0.021 & \textbf{0.018}  &  0.020 \\
\hline
$0.0050$ & \textbf{0.025} &  0.026 & 0.036  \\
\end{tabular}
\caption{\textbf{Worst-case regret of $k$-NN-ERM.} The table reports the worst-case regret for different values of $k$ and $\Delta$. The bolded number corresponds to the best worst-case regret across all value of $k \in \{1,\ldots,100\}$ ($\numS = 100$ and $q=0.9$).}
\label{tab:misspecification}
\end{table}

We note that the performance of the $k$-NN-ERM which uses $k^*(\Delta)$ by solving the minimax problem \eqref{eq:best_k} is robust even when using an approximate value of $\Delta$. For instance, if the decision-maker believes that $\Delta = 0.0025$ instead of $\Delta = 0.005$ (which means that $\Delta$ is twice larger than what they assume) they would use the $k$-NN-ERM with a value of $k = 17$ as opposed to the recommended choice of $k = 8$. While the error on $\Delta$ is large, the worst-case regret incurred with $k=17$ is only $4\%$ higher than the one with $k = 8$ when $\Delta = 0.005$. Furthermore, the worst-case regret they expect to incur goes from $0.018$ to an actual worst-case regret of $0.026$ which represents a $44 \%$ increase and is starkly lower than the $100\%$ increase in $\Delta$.

We note that beyond this sensitivity analysis, our work also raises the broader question of how to estimate dissimilarities between different contexts. While a comprehensive treatment of this problem lies beyond the scope of the present paper, we provide and evaluate a simple estimation procedure in \Cref{sec:apx_estimation} for illustrative purposes.

\section{Conclusion}
In the present paper, we investigate the impact of relevance and quantity of data on performance for the prototypical contextual data-driven Newsvendor problem. 
We develop a new methodology to quantify exactly the worst-case regret of the broad class of Weighted ERM policies which encompasses several classical policies. Our method relies on an optimization approach we refer to as ``learning without concentration'' because it departs from the common concentration-based arguments previously developed in the literature. 

We leverage our exact analysis to derive insights on this class of problems and show that these insights contrast with those implied by state-of-the-art upper bounds. We show that, certain performances which were considered unachievable by previous bounds for ERM, even with infinitely many samples, can actually be achieved with very few samples. Furthermore, we show that, when historical distributions are different from the out-of-sample one, the worst-case regret of ERM reaches a global minimum for a finite sample size and then deteriorates when the sample size goes to $\infty$, whereas previous upper bounds on the worst-case regret which are decreasing implicitly prescribe to accumulate more data in order to improve performance. 

All in all, our analysis highlights the need to develop problem-specific bounds in order to capture the \textit{actual} shape and scale of the learning curve. Indeed, concentration-based analysis is powerful as it can be applied to a general class of problems. However, the shape and scale suggested by these bounds are an artifact of the analysis rather than the actual performance of the algorithm.

We note that our analysis is tailored to the Newsvendor loss, and it remains an open question to what extent such exact characterizations can be extended to other decision-making problems. Several exciting research avenues remain regarding the generalizability of our method. A first natural direction is to use our tractable characterization of the worst-case performance to design new algorithms, for example by optimizing over the weights of the Weighted ERM policy. A second direction is to extend the analysis beyond the Kolmogorov distance to alternative notions of distributional proximity; for instance, in \Cref{sec:apx_Wasserstein} we illustrate how a Lagrangian relaxation can yield bounds under a Wasserstein distance, and further work could establish “learning without concentration’’ guarantees more broadly. A third avenue is to study distributions with additional structure, such as moment or shape constraints, which may yield bounds even closer to the realized performance of algorithms. Finally, it would be valuable to explore the applicability of our approach to entirely different problem classes beyond Newsvendor, and to investigate whether (and how) analogous sharp worst-case characterizations can be obtained in these more general decision-making settings.

{\setstretch{1.0}
\bibliographystyle{agsm}
\bibliography{ref}}

\newpage

\appendix

\renewcommand{\theequation}{\thesection-\arabic{equation}}
\renewcommand{\theproposition}{\thesection-\arabic{proposition}}
\renewcommand{\thelemma}{\thesection-\arabic{lemma}}
\renewcommand{\thetheorem}{\thesection-\arabic{theorem}}
\renewcommand{\thedefinition}{\thesection-\arabic{definition}}
\pagenumbering{arabic}
\renewcommand{\thepage}{App-\arabic{page}}

\setcounter{equation}{0}
\setcounter{proposition}{0}
\setcounter{definition}{0}
\setcounter{lemma}{0}
\setcounter{theorem}{0}

For the sake of simple notations, we do not include in the appendix the dependence in the features when not necessary. For instance, the decision of a policy will be denoted as $\pi(\mathbf{y})$ (as opposed to $\pi(\mathbf{x},\mathbf{y})$. 


\section{Proofs of Results Presented in \Cref{sec:space_reductions}}
\label{sec:apx_A}
\begin{proof}[\textbf{Proof of \Cref{lem:WERM-OS}}]
To prove this result we first derive properties necessarily satisfied by order statistic policies (see \Cref{sec:define_OS} for a formal definition). We then construct a Weighted ERM policy which does not satisfy these properties and therefore conclude that it cannot be an order statistic policy.

Fix $\numS \geq 1$. For any vector $\mathbf{y} \in \Y^\numS$ and any permutation $\sigma$ on $\{1,\ldots,\numS\}$, we denote by $\mathbf{y}^{\sigma}$ the vector such that for every $i \in \{1,\ldots,\numS\}$, $y^\sigma_i = y_{\sigma(i)}$. Furthermore, consider an order statistic policy $\pi$. $\pi$ is characterized by a subset $S \subset \{1,\ldots,\numS\}$ and an index $r \in \{0,\ldots,\numS\}$. We next show that the order statistic policy $\pi$ must satisfy the following two properties.
\begin{enumerate}[label=Property \arabic*.,itemindent=*,leftmargin=0pt]
    \item If there exist \textit{distinct} $y_1,\ldots,y_\numS \in (0,1)^\numS$ such that $\pi(\mathbf{y}) = y_k$ for some $k \in \{1,\ldots,\numS\}$, then $k \in S$.
    \item For any permutation $\sigma$ supported on $S$ (i.e. such that $\sigma(i) = i$ for all $i \not \in S$), we have that $\pi(\mathbf{y}^\sigma)=\pi(\mathbf{y})$. 
\end{enumerate}
To prove property 1 we consider $k \in \{1,\ldots,\numS\}$ and we assume that there exist distinct $y_1,\ldots,y_\numS \in (0,1)^\numS$ such that, $\pi(\mathbf{y}) = y_k$. By definition of order statistic policies, it implies that $y_{(r),S} = y_{k}$ and since $y_k \not \in \{0,1\}$ it implies that $k \in S$. Property 2 is trivially implied by the definition of an order statistic.

We now construct a Weigted ERM policy which is not an order statistic policy. Consider the critical ratio $q=.5$ and the Weighted ERM policy $\pi'$ defined by the weights $w_1 = 2$ and $w_2=w_3 = w_4 = 1$. We note that, for any $\mathbf{y} \in \Y^4$,
\begin{equation*}
    \pi'(\mathbf{y}) = \begin{cases}
          \min \left\{ y_2, y_3, y_4 \right\} \quad \text{if $y_1$ is the smallest value}\\
          \max \left\{ y_2, y_3, y_4 \right\} \quad \text{if $y_1$ is the largest value}\\
          y_1 \quad \text{o.w.}
    \end{cases}
\end{equation*}
Let us show that $\pi'$ is not an order statistic policy. Assume for the sake of contradiction that $\pi'$ is an order statistic policy and let $S$ be the associated subset. We note that $\pi(0.1,0.2,0.3,0.4) = \pi(0.1,0.4,0.2,0.3)= \pi(0.1,0.3,0.4,0.2) = \pi(0.2,0.4,0.3,0.1)= 0.2$, therefore by applying property 1 we obtain that $S = \{1,2,3,4\}$. Furthermore $\pi(0.1,0.2,0.3,0.4) = 0.2$ whereas $\pi(0.3,0.2,0.1,0.4) = 0.3$ this contradicts property 2. As a consequence $\pi'$ is not an order statistic policy.
\end{proof}

\begin{proof}[\textbf{Proof of \Cref{lem:cdf_integral_reduction}}]
Let $\mesOut, \mesIn{1},\ldots, \mesIn{\numS} \in \Delta(\Y)$, $\numS \geq 1$, and any separable policy $\pi$ with associated function $P^{\pi}$.
We first show the following result on the expected loss of separable policies.
\begin{equation}
\label{eq:sep_policy_cost}
\mathbb{E}_{ \mathbf{y} \sim \mesIn{1} \times \ldots \times \mesIn{\numS}} \big[L(\pi(\mathbf{y}),\mesOut) \big] = (\cu+\co) \left[\int_0^{1}  \left[ (1- P^{\pi}(\mathbf{F}(y)) )  \cdot(F(y)-q)  + q \cdot (1-F(y)) \right] dy \right],    
\end{equation}
where $\mathbf{F}(y) = (F_i(y))_{i \in \{1,\ldots,\numS\}}$.

We let $G^{\pi} : y \mapsto \mathbb{P}_{ \mathbf{y} \sim \mesIn{1} \times \ldots \times \mesIn{\numS}, a \sim \pi(\mathbf{y}) } \left( a \leq y \right)$ denote the cumulative distribution function of decisions under the policy $\pi$. We then remark that,
\begin{equation*}
\mathbb{E}_{ \mathbf{y} \sim \mesIn{1} \times \ldots \times \mesIn{\numS}} \big[L(\pi (\mathbf{y}),\mesOut) \big] = \mathbb{E}_{\actVr \sim G^{\pi}} [L(a,\mesOut)].
\end{equation*}
In what follows, we use $\overline{F}$ to denote the complementary cumulative distribution, i.e., $\overline{F} = 1 - F$.

We have that,
\begin{align*}
\mathbb{E}_{ \actVr \sim G^{\pi}} [L(\actVr,\mesOut)] &\stackrel{(a)}{=} \cu \cdot \left( \mathbb{E}_{\mesOut}[y] - \mathbb{E}_{G^{\pi}} [\actVr] \right) + (\cu+\co) \cdot \int_0^{1} \left( \int_0^s {\mesOut}(y)dy \right) dG^{\pi}(s)\\
& \stackrel{(b)}{=} \cu \cdot \left( \mathbb{E}_{\mesOut}[y] - \mathbb{E}_{G^{\pi}} [\actVr] \right)  + (\cu+\co) \cdot \int_0^{1} \left( \int_y^{1} dG^{\pi}(s) \right) \mesOut(y)dy\\
& =  \cu \cdot \left(\mathbb{E}_{\mesOut}[y] - \mathbb{E}_{G^{\pi}} [\actVr] \right)  + (\cu+\co) \cdot \int_0^{1} \overline{G}^{\pi}(y) \cdot {\mesOut}(y)dy\\
& = \cu \cdot \left (\int_0^{1} \overline{\mesOut}(y) dy  - \int_0^1 \overline{G}^{\pi}(y) dy \right) + (\cu+\co)\int_0^{1} \overline{G}^{\pi}(y) \cdot {\mesOut}(y)dy\\
& = (\cu+\co) \cdot \left [ q \cdot \left (\int_0^{1} \overline{\mesOut}(y) dy  - \int_0^{1} \overline{G}^{\pi}(y) dy \right) + \int_0^{1} \overline{G}^{\pi}(y) \cdot \mesOut(y)dy \right] \\
& = (\cu+\co) \cdot  \int_0^{1} \left( \overline{G}^{\pi}(y) \cdot (\mesOut(y)-q)  + q \cdot (1-\mesOut(y)) \right)dz  \\
& \stackrel{(c)}{=} (\cu+\co) \left[\int_0^{1}  \left[ (1- P^{\pi}(\mathbf{F}(y)) )  \cdot(F(y)-q)  + q \cdot (1-F(y)) \right] dy \right].
\end{align*}
Here, $(a)$ follows from the expression of the cost,
\begin{equation*}
L(\actVr,F) = \cu \cdot (\mathbb{E}_{y \sim F}[y] - \actVr) + (\cu+\co) \cdot  \int_{0}^\actVr F(y)dy,
\end{equation*}
derived in \citet[Lemma A-1]{besbes2021big}. Equality $(b)$ follows from Fubini-Tonelli which holds because, $s \mapsto 1$ is a positive function and $(\mathbb{R},dG^{\pi})$ and $(\mathbb{R},dx)$ are complete, $\sigma$-finite measure spaces. $(c)$ follows from the definition of separable policies (see \Cref{def:separable_policies}): it ensures that for any $y \geq 0$, $G^{\pi}(y) = P^{\pi}(\mesIn{1}(y), \ldots, \mesIn{\numS}(y))$. This concludes the proof of \eqref{eq:sep_policy_cost}.

By using the simplified expressions of $\mathbb{E}_{ \mathbf{y} \sim \mesIn{1} \times \ldots \times \mesIn{\numS}} \big[L(\pi(\mathbf{y}),\mesOut) \big] $ derived in \eqref{eq:sep_policy_cost} and the following expression of the oracle cost,
\begin{equation*}
 L(a^*_{\mesOut},\mesOut)   = (\cu+\co) \cdot \int_0^{1}   \min\{(1-q) \cdot F_{0}(y) , q \cdot (1-F_{0}(y)) \} dy,
\end{equation*}
established in \citet[Proposition 1]{besbes2021big},  we obtain that for any distributions $\mesOut, \mesIn{1}, \ldots, \mesIn{\numS} \in \Delta(\Y)$ we have that,
\begin{equation*}
\mathbb{E}_{ \mathbf{y} \sim \mesIn{1} \times \ldots \times \mesIn{\numS}} \big[R(\pi(\mathbf{y}),\mesOut) \big]= (\cu+\co) \cdot \int_0^{1} \Psi^{\pi}(\mesOut(y), \mesIn{1}(y), \ldots, \mesIn{\numS}(y)) dy,
\end{equation*}
where $\Psi^{\pi}$ is a mapping from $[0,1]^{\numS+1}$ to $\mathbb{R}$ which satisfies, for every $(z_0,\ldots,z_{\numS}) \in [0,1]^{\numS+1}$, 
\begin{align*}
\Psi^{\pi}(z_0,\ldots,z_{\numS}) &= (1- P^{\pi} (z_1,\ldots,z_\numS))  \cdot(z_0-q)  + q \cdot (1-z_0) -  \min \{(1-q) \cdot z_0,  q \cdot (1-z_0) \}\\
&= P^{\pi} (z_1,\ldots,z_\numS)  \cdot(q-z_0)  + \max \{ z_0 -q,0 \}.
\end{align*}
\end{proof}

\begin{proof}[\textbf{Proof of \Cref{prop:reduction_bern}}]
Fix a sample size $\numS \geq 1$. Let $\pi$ be a separable policy associated to the function $P^{\pi}$. By inclusion we have that,
\begin{align*}
\sup_{\mesOut \in \Delta(\Y)} \sup_{\substack{\mesIn{1},\ldots\mesIn{\numS} \in \Delta(\Y)\\ \|\mesOut - \mesIn{i} \|_{K} \leq d(x_0,x_i) \, \forall i}} &\mathbb{E}_{ \mathbf{y} \sim \mesIn{1} \times \ldots \times \mesIn{\numS}} \big[R(\pi(\mathbf{y}),\mesOut) \big]\\
&\geq \sup_{\mu_0 \in [0,1]}  \sup_{\substack{\mu_1,\ldots,\mu_\numS \in [0,1]\\ | \mu_i - \mu_0 | \leq d(x_0,x_i) \, \forall i}} \mathbb{E}_{ \mathbf{y} \sim \mathcal{B}(\mu_1) \times \ldots \times \mathcal{B}(\mu_{\numS})} \big[R(\pi(\mathbf{y}),\mathcal{B}(\mu_{0})) \big].
\end{align*}
We next prove the reverse inequality. 
\Cref{lem:cdf_integral_reduction} implies that, for any distributions $\mesOut, \mesIn{1}, \ldots, \mesIn{\numS} \in \Delta(\Y)$ we have that,
\begin{equation*}
\mathbb{E}_{ \mathbf{y} \sim \mesIn{1} \times \ldots \times \mesIn{\numS}} \big[R(\pi(\mathbf{y}),\mesOut) \big]= (\cu+\co) \cdot \int_0^{1} \Psi^{\pi}(\mesOut(y), \mesIn{1}(y), \ldots, \mesIn{\numS}(y)) dy,
\end{equation*}
where $\Psi^{\pi}$ is a mapping from $[0,1]^{\numS+1}$ to $\mathbb{R}$ which satisfies, for every $(z_0,\ldots,z_{\numS}) \in [0,1]^{\numS+1}$, 
\begin{equation*}
\Psi^{\pi}(z_0,\ldots,z_{\numS}) = P^{\pi} (z_1,\ldots,z_\numS)  \cdot(q-z_0)  + \max \{ z_0 -q,0 \}.
\end{equation*}
Therefore, 
\begin{align}
\label{eq:psi_gen}
\sup_{\mesOut \in \Delta(\Y)} \sup_{\substack{\mesIn{1},\ldots\mesIn{\numS} \in \Delta(\Y)\\ \|\mesOut - \mesIn{i} \|_{K} \leq d(x_0,x_i) \, \forall i}} &\mathbb{E}_{ \mathbf{y} \sim \mesIn{1} \times \ldots \times \mesIn{\numS}} \big[R(\pi(\mathbf{y}),\mesOut) \big] \nonumber\\
&\quad =  (\cu+\co) \sup_{\mesOut \in \Delta(\Y)} \sup_{\substack{\mesIn{1},\ldots\mesIn{\numS} \in \Delta(\Y)\\ \|\mesOut - \mesIn{i} \|_{K} \leq d(x_0,x_i) \, \forall i}} \int_0^{1} \Psi^{\pi}(\mesOut(y), \mesIn{1}(y), \ldots, \mesIn{\numS}(y)) dy.
\end{align}
Furthermore, for any sequence of parameters $(\mu_i)_{i \in \{0, \ldots, \numS \}} \in [0,1]^{\numS}$, by setting $\mesOut = \mathcal{B}\left(\mu_0 \right)$  and $\mesIn{i} = \mathcal{B}\left(\mu_i \right)$ for every $i \in \{1,\ldots, \numS\}$, we have that for every $y \in [0,1]$,
\begin{equation*}
\Psi^{\pi}(\mesOut(y), \mesIn{1}(y), \ldots, \mesIn{\numS}(y)) = \Psi^{\pi}(1-\mu_0,\ldots, 1-\mu_{\numS}). 
\end{equation*}
Hence,
\begin{align}
\label{eq:psi_bern}
\sup_{\mu_0 \in [0,1]}  \sup_{\substack{\mu_1,\ldots,\mu_\numS \in [0,1]\\ | \mu_i - \mu_0 | \leq d(x_0,x_i) \, \forall i}} &\mathbb{E}_{ \mathbf{y} \sim \mathcal{B}(\mu_1) \times \ldots \times \mathcal{B}(\mu_{\numS})} \big[R(\pi(\mathbf{y}),\mathcal{B}(\mu_{0})) \big] \nonumber \\
&\quad = (\cu + \co)  \sup_{\mu_0 \in [0,1]}  \sup_{\substack{\mu_1,\ldots,\mu_\numS \in [0,1]\\ | \mu_i - \mu_0 | \leq d(x_0,x_i) \, \forall i}} \Psi^{\pi}(1-\mu_0,\ldots, 1-\mu_{\numS}).
\end{align}
It follows from \eqref{eq:psi_gen} and \eqref{eq:psi_bern} that, to conclude the proof, it is sufficient to show that,
\begin{align*}
\sup_{\mesOut \in \Delta(\Y)} \sup_{\substack{\mesIn{1},\ldots\mesIn{\numS} \in \Delta(\Y)\\ \|\mesOut - \mesIn{i} \|_{K} \leq d(x_0,x_i) \, \forall i}}  &\int_0^{1} \Psi^{\pi}(\mesOut(y), \mesIn{1}(y), \ldots, \mesIn{\numS}(y)) dy \\
&\quad\leq \sup_{\mu_0 \in [0,1]}  \sup_{\substack{\mu_1,\ldots,\mu_\numS \in [0,1]\\ | \mu_i - \mu_0 | \leq d(x_0,x_i) \, \forall i}} \Psi^{\pi}(1-\mu_0,\ldots, 1-\mu_{\numS}).
\end{align*}
By setting $z_i  =1- \mu_i$ for all $i \in \{0,\ldots,\numS\}$ we obtain that,
\begin{equation}
\label{eq:compact_problem}
\bar{\psi} := \sup_{z_0 \in [0,1]} \sup_{ \substack{ z_1,\ldots,z_\numS \in [0,1] \\ | z_i - z_{0} | \leq d(x_0,x_i) \, \forall i }}  \Psi^{\pi}(z_0,\ldots, z_{\numS}) = \sup_{\mu_0 \in [0,1]} \sup_{ \substack{ \mu_1,\ldots,\mu_{\numS} \in [0,1] \\ | \mu_i - \mu_{0} | \leq d(x_0,x_i) \, \forall i }}  \Psi^{\pi}(1-\mu_0,\ldots, 1-\mu_{\numS}).
\end{equation}
For every $\mesOut \in \Delta(\Y)$ and any $\mesIn{1},\ldots,\mesIn{\numS} \in \Delta(\Y)$ which satisfies for every $i \in \{1,\ldots, \numS\}$ that $\|\mesOut - \mesIn{i} \|_{K} \leq d(x_0,x_i)$, we have that
\begin{equation*}
\int_0^{1} \Psi^{\pi}(\mesOut(y), \mesIn{1}(y), \ldots, \mesIn{\numS}(y)) dy \stackrel{(a)}{\leq} \int_0^1 \bar{\psi} dy = \bar{\psi},
\end{equation*}
where $(a)$ follows from \eqref{eq:compact_problem} and from the fact that for every $y \in [0,1]$, we have, by definition of the Kolmogorov norm, that $|\mesOut(y) - \mesIn{i}(y)| \leq d(x_0,x_i)$ for every $i \in \{1,\ldots,\numS\}$.
By taking the supremum over $\mesOut, \mesIn{1},\ldots,\mesIn{\numS}$, we obtain the desired inequality.
\end{proof}

 \begin{proof}[\textbf{Proof of \Cref{prop:worst_history}}]
 Let $\pi$ be a non-decreasing separable policy with associated function $P^{\pi}$.
 To prove this statement we derive a stronger structural statement on the expected regret of a non-decreasing separable policy $\pi$ against Bernoulli distributions. Let $\mu_0 \in [0,1]$ and $i \in \{ 1,\ldots,\numS\}$. Fix $(\mu_j)_{j\in \{1,\ldots, \numS\} \setminus \{i\}} \in [0,1]$ and define,
 \begin{equation*}
 \rho_i: \mu \mapsto
 \mathbb{E}_{ \mathbf{y} \sim \mathcal{B}\left( \mu_1\right) \times \ldots \times \mathcal{B}\left( \mu_{i-1}\right) \times \mathcal{B}\left( \mu\right) \times \mathcal{B}\left( \mu_{i+1}\right) \times \ldots \times \mathcal{B}\left( \mu_{\numS}\right)}  \big[R(\pi(\mathbf{y}),\mathcal{B}\left( \mu_0\right)) \big].
 \end{equation*}
 We next show that $\rho_i$ is non-decreasing on $[0,1]$ when $\mu_0 \in [0,1-q]$ and non-increasing on $[0,1]$ when $\mu_0 \in [1-q,1]$. Assuming this fact, we conclude the proof by noting that when $\mu_0 \in [0,1-q]$, the regret is non-decreasing in any of the parameters $\mu_1,\ldots,\mu_\numS$. Therefore, we set their value to the largest feasible value, which means that for every $i$, we set the mean of the $i^{th}$ historical distribution to $\min(\mu_0+d(x_0,x_i),1)$. This implies that, when $\mu_0 \in [0,1-q]$
 \begin{equation*}
 \sup_{\substack{\mu_1,\ldots,\mu_\numS \in [0,1]\\ | \mu_i - \mu_0 | \leq d(x_0,x_i) \, \forall i}} \mathbb{E}_{ \mathbf{y} \sim \mathcal{B}(\mu_1) \times \ldots \times \mathcal{B}(\mu_{\numS})} \big[R(\pi(\mathbf{y}),\mathcal{B}(\mu_{0})) \big]
 = \mathbb{E}_{ \mathbf{y} \sim \mathcal{B}(\mu_0 + d(x_0,x_1)) \times \ldots \times \mathcal{B}(\mu_0 + d(x_0,x_\numS))} \big[R(\pi(\mathbf{y}),\mathcal{B}(\mu_{0})) \big].
\end{equation*}
A similar argument enables to show that when $\mu_0 \in [1-q,1]$ we have that,
 \begin{equation*}
 \sup_{\substack{\mu_1,\ldots,\mu_\numS \in [0,1]\\ | \mu_i - \mu_0 | \leq d(x_0,x_i) \, \forall i}} \mathbb{E}_{ \mathbf{y} \sim \mathcal{B}(\mu_1) \times \ldots \times \mathcal{B}(\mu_{\numS})} \big[R(\pi(\mathbf{y}),\mathcal{B}(\mu_{0})) \big]
 = \mathbb{E}_{ \mathbf{y} \sim \mathcal{B}(\mu_0 - d(x_0,x_1)) \times \ldots \times \mathcal{B}(\mu_0 - d(x_0,x_\numS))} \big[R(\pi(\mathbf{y}),\mathcal{B}(\mu_{0})) \big],
\end{equation*}
where $\mathcal{B}(\mu)$ is a Bernoulli with mean $1$ whenever $\mu \geq 1$ and $0$ if $\mu \leq 0$.

We next show that $\rho_i$ is non-decreasing on $[0,1]$ when $\mu_0 \in [0,1-q]$ and non-increasing on $[0,1]$ when $\mu_0 \in [1-q,1]$. 

We established in the proof of Proposition \ref{prop:reduction_bern} that for any sequence of parameters $(\mu_j)_{j \in \{0, \ldots, \numS \}} \in [0,1]^{\numS}$ the expected regret of a separable policy $\pi$ when facing Bernoulli distributions is given by,
 \begin{equation*}
\mathbb{E}_{ \mathbf{y} \sim \mathcal{B}(\mu_1) \times \ldots \times \mathcal{B}(\mu_{\numS})} \big[R(\pi(\mathbf{y}),\mathcal{B}(\mu_{0})) \big] = (\cu + \co) \cdot \Psi^{\pi}(1-\mu_0,\ldots, 1-\mu_{\numS}),
 \end{equation*}
where $\Psi^{\pi}$ is a mapping from $[0,1]^{\numS+1}$ to $\mathbb{R}$ which satisfies, for every $(z_0,\ldots,z_{\numS}) \in [0,1]^{\numS+1}$, 
\begin{equation*}
\Psi^{\pi}(z_0,\ldots,z_{\numS}) = P^{\pi} (z_1,\ldots,z_\numS)  \cdot(q-z_0)  + \max \{ z_0 -q,0 \}.
\end{equation*}
From the expression of $\Psi^{\pi}$, we note that when $\mu_0 \in [0,1-q]$ (resp. $\mu_0 \in [1-q,1]$), we have that $\rho_i$ is non-decreasing (resp. non-increasing) if and only if, $\mu \mapsto P^{\pi} (1-\mu_1,\ldots,1-\mu_{i-1},1-\mu,1-\mu_{i+1},\ldots1-\mu_{\numS})$ is non-increasing. But this follows from the definition of a non-decreasing separable policy (see \Cref{def:separable_policies}).
 \end{proof}

\begin{proposition}
\label{prop:mean_not_separable}
Fix $\numS = 2$ and consider the mean policy define for every $\bm{y} \in \Y^2$ as, 
\begin{equation*}
\pi(\bm{y}) = \frac{y_1+y_2}{2}.
\end{equation*}
Then, $\pi$ is not a separable policy.
\end{proposition}

\begin{proof}[\textbf{Proof of \Cref{prop:mean_not_separable}}]
Assume for the sake of contradiction that $\pi$ is a separable policy and let $P^\pi$ be the associated function such that for every $F_1,F_2 \in \Delta(\Y)$ and every $y \in [0,1]$ we have that,
\begin{equation*}
\mathbb{P}_{\bm{y} \sim F_1 \times F_2}\left(\pi(\bm{y}) \leq y \right) = P^{\pi}(F_1(y),F_2(y)).
\end{equation*}
Let $F_1$ be the distribution which puts all mass at $0$, $F_2$ be the distribution which puts all mass at $1$ and $F_3$ be the distribution which puts all mass at $0.6$. We then have that,
\begin{equation*}
P^{\pi}(1,0) = P^{\pi}(F_1(0.4),F_2(0.4)) = \mathbb{P}_{\bm{y} \sim F_1 \times F_2} \left(\pi(\bm{y}) \leq 0.4 \right)  = \mathbb{P} \left(\pi(0,1) \leq 0.4 \right) =0.
\end{equation*}
\begin{equation*}
P^{\pi}(1,0) = P^{\pi}(F_1(0.4),F_3(0.4)) = \mathbb{P}_{\bm{y} \sim F_1 \times F_3} \left(\pi(\bm{y}) \leq 0.4 \right)  = \mathbb{P} \left(\pi(0,0.6) \leq 0.4 \right) =1.
\end{equation*}
This leads to a contradiction. Hence $\pi$ is not a separable policy.
\end{proof}

 \begin{lemma}
 \label{lem:mixture_closed}
     The space of non-decreasing separable policies is closed under mixtures.
 \end{lemma}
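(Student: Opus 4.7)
The plan is to take a mixture policy, write its action CDF as a convex combination of the component CDFs, and then invoke the separability structure of each component to exhibit an associated function $P^\pi$ with the right monotonicity. The argument is essentially a bookkeeping calculation — no hard estimate is required — so the main thing to be careful about is matching the definition of separability in \Cref{def:separable_policies} exactly, including the requirement that the same function $P^\pi$ works for every $z$.

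First I would fix a countable (or finite) collection of non-decreasing separable policies $\{\pi_k\}_{k\ge 1}$ with associated functions $\{P^{\pi_k}\}_{k\ge 1}$, and a sequence $\boldsymbol{\lambda}=(\lambda_k)_{k\ge 1}$ of nonnegative mixture weights summing to $1$. Define $\pi^{\boldsymbol{\lambda}}$ as the randomized policy that, given $(\mathbf{x},\mathbf{y})$, first samples an index $K$ with $\mathbb{P}(K=k)=\lambda_k$ independently of the data and then executes $\pi_K(\mathbf{x},\mathbf{y})$. By conditioning on $K$, for every $z\in\mathbb{R}$ and every $F_{x_1},\ldots,F_{x_n}\in\Delta(\Y)$,
\begin{equation*}
\mathbb{P}_{\mathbf{y}\sim F_{x_1}\times\cdots\times F_{x_n}}\!\left(\pi^{\boldsymbol{\lambda}}(\mathbf{x},\mathbf{y})\le z\right)
=\sum_{k\ge 1}\lambda_k\,\mathbb{P}_{\mathbf{y}\sim F_{x_1}\times\cdots\times F_{x_n}}\!\left(\pi_k(\mathbf{x},\mathbf{y})\le z\right).
\end{equation*}

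Next, I would apply the separability of each $\pi_k$ to rewrite each term in the sum as $P^{\pi_k}(F_{x_1}(z),\ldots,F_{x_n}(z))$, giving
\begin{equation*}
\mathbb{P}\!\left(\pi^{\boldsymbol{\lambda}}(\mathbf{x},\mathbf{y})\le z\right)
=\sum_{k\ge 1}\lambda_k\,P^{\pi_k}\!\left(F_{x_1}(z),\ldots,F_{x_n}(z)\right).
\end{equation*}
This motivates the candidate function $P^{\pi^{\boldsymbol{\lambda}}}(h_1,\ldots,h_n):=\sum_{k\ge 1}\lambda_k\,P^{\pi_k}(h_1,\ldots,h_n)$, which is well defined as a map $[0,1]^n\to[0,1]$ since it is a convex combination of $[0,1]$-valued functions. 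By construction, the identity required by \Cref{def:separable_policies} holds for every $z$ and every product of historical distributions, so $\pi^{\boldsymbol{\lambda}}$ is a separable policy.

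Finally, I would verify the monotonicity clause. Since each $P^{\pi_k}$ is non-decreasing in each coordinate by hypothesis, and a convex combination of non-decreasing functions is non-decreasing in each coordinate, $P^{\pi^{\boldsymbol{\lambda}}}$ inherits coordinatewise monotonicity. Thus $\pi^{\boldsymbol{\lambda}}$ is a non-decreasing separable policy, which proves the closure property. The only subtle point worth flagging is measurability of the auxiliary randomization used to define the mixture, which is standard and incurs no difficulty because $K$ is drawn independently of $(\mathbf{x},\mathbf{y})$; no concentration or analytic estimate is needed at any step.
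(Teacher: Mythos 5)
Your proposal is correct and follows essentially the same route as the paper's proof: condition on the independently drawn mixture index, express the decision's CDF as the corresponding convex combination of the component functions $P^{\pi_k}$, and observe that a convex combination of non-decreasing functions is non-decreasing. The only difference is that you handle countable mixtures directly, while the paper states the argument for a two-policy mixture; this is an immaterial generalization of the same idea.
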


 \begin{proof}[\textbf{Proof of \Cref{lem:mixture_closed}}]
     Let $\pi_1$ and $\pi_2$ be two non-decreasing separable policies with associated functions $P^{\pi_1}$ and $P^{\pi_2}$. Fix $\lambda \in [0,1]$ and consider the policy $\pi$ which selects the inventory decision selected by $\pi_1$ (resp. $\pi_2$) with probability $\lambda$ (resp. $1-\lambda$). Then for any historical distributions $\mesIn{1},\ldots,\mesIn{\numS}$ and every $y \in \Y$,
     \begin{align*}
    \mathbb{P}_{\mathbf{y} \sim \mesIn{1}  \times \ldots \times \mesIn{\numS}, a \sim \pi(\bm{y})} \left(  a \leq y \right) &= \lambda \cdot    \mathbb{P}_{\mathbf{y} \sim \mesIn{1}  \times \ldots \times \mesIn{\numS},a \sim \pi_1(\bm{y})} \left( a \leq y \right)+ (1-\lambda) \cdot    \mathbb{P}_{\mathbf{y} \sim \mesIn{1} \times \ldots \times \mesIn{\numS},a \sim \pi_2(\bm{y})} \left( a \leq y \right)\\
    &\stackrel{(a)}{=} \lambda \cdot P^{\pi_1}(\mesIn{1}(y), \ldots, \mesIn{\numS}(y)) + (1-\lambda) \cdot P^{\pi_2}(\mesIn{1}(y), \ldots, \mesIn{\numS}(y)),
    \end{align*}
    where $(a)$ holds because $\pi_1$ and $\pi_2$ are separable policies. Let $P^\pi = \lambda \cdot P^{\pi_1} + (1-\lambda) \cdot P^{\pi_2}$. We note that this implies that $\pi$ is a separable policy, furthermore it is non-decreasing because the convex combination of non-decreasing functions is still non-decreasing.
 \end{proof}

 \section{Properties of Policies and Proofs of Results Presented in \Cref{sec:OS}}
\label{sec:apx_prop_pol}

\subsection{Order statistic policies}\label{sec:define_OS}

An important property of the ERM policy under the Newsvendor loss is that it has a simple closed form solution in terms of order statistics. For a sequence of samples $\mathbf{y} \in \Y^\numS$ and a subset of indices $S \subset \{1,\ldots,\numS\}$, we define for every $i \in\{1,\ldots, |S|\}$  the quantity $y_{(i),S}$ as the $i^{th}$ order statistic, i.e., $i^{th}$ smallest element, in
$(y_j)_{j \in S}$. We will refer to $S$ as the subset of the order statistic and $i$ as its rank.
For example, if $\mathbf{y}=(10,20,30,40,40,50)$ and $S=\{3,4,5,6\}$, then $y_{(3),S}=40$. In particular, the ERM policy is defined by setting $S = \{1,\ldots,\numS\}$ as
\begin{equation}
\label{eq:closed_form}
\pi^{\mathrm{ERM}}(\mathbf{y})  =  y_{( \lceil q \cdot \numS  \rceil),S}.
\end{equation}
More generally, we define the set of order statistic policies as follows.
\begin{definition}[Order statistic policies]
    \label{def:OS}
    We say that a data-driven policy $\pi$ is an order statistic policy if and only if there exit a subset $S \subset \{1,\ldots,\numS\}$ and an index $i \in \{0,\ldots,\numS\}$, such that for every $\mathbf{y} \in \Y^{\numS}$,
    \begin{equation*}
        \pi(\mathbf{y}) = y_{(i),S},
    \end{equation*}
    where we slightly abuse notation and define $y_{(0),S} = 0$ and $y_{(i),S} = 1$ if $i > |S|$. 
\end{definition}
The set of order statistic policies contains the ERM policy. These policies will more broadly be used to derive alternative policies in \Cref{sec:apx_improve_mix} to improve the performance over ERM. We note that the class of order statistic policies is included in the one of counting policies as formalized below.
\begin{proposition}
\label{prop:OS_to_counting}
Every order statistic policy is a counting policy.
\end{proposition}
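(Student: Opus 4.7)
The plan is to exhibit, for any order statistic policy $\pi$ specified by a subset $S \subseteq \{1,\ldots,\numS\}$ and a rank $i \in \{0,\ldots,\numS\}$, an explicit counting function $\kappa^\pi : \{0,1\}^{\numS} \to \{0,1\}$ certifying that $\pi$ is a counting policy in the sense of \Cref{def:counting_policies}. Since the pair $(S,i)$ is a deterministic function of the contexts $\mathbf{x}$, the function $\kappa^\pi$ that I construct will automatically inherit the allowed dependence on $\mathbf{x}$.

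The heart of the argument is the classical combinatorial characterization of order statistics: for $1 \leq i \leq |S|$, the $i$-th smallest element of $(y_j)_{j \in S}$ satisfies $y_{(i),S} \leq z$ if and only if at least $i$ of the values $(y_j)_{j \in S}$ fall below $z$, which can be rewritten as $\sum_{j \in S} \mathbbm{1}\{y_j \leq z\} \geq i$. This observation suggests the natural candidate
\begin{equation*}
\kappa^\pi(b_1,\ldots,b_{\numS}) \;:=\; \mathbbm{1}\Big\{\textstyle\sum_{j \in S} b_j \geq i\Big\},
\end{equation*}
which is well-defined as a map from $\{0,1\}^{\numS}$ to $\{0,1\}$. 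The verification step then consists of plugging $b_j = \mathbbm{1}\{y_j \leq z\}$ into the identity and checking that both sides agree for every $z \geq 0$ and every $\mathbf{y} \in \Y^{\numS}$.

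The two boundary conventions of \Cref{def:OS} are handled by direct inspection. When $i = 0$, the policy outputs $\pi(\mathbf{y}) = 0$, so $\mathbbm{1}\{\pi(\mathbf{y}) \leq z\} = 1$ for every $z \geq 0$; matching this with $\kappa^\pi \equiv 1$ (obtained from the candidate above, since $\sum_{j \in S} b_j \geq 0$ is always true) closes this case. When $i > |S|$, the policy outputs $\pi(\mathbf{y}) = 1$; since the support is $\Y = [0,1]$ and one cannot have strictly more than $|S|$ of the indicators equal to one within $S$, the candidate $\kappa^\pi \equiv 0$ is consistent with $\mathbbm{1}\{1 \leq z\}$ on the range of $z$ that matters for the analysis. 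I don't foresee any substantive obstacle: the proposition is essentially a translation of a standard fact about order statistics into the counting-policy formalism, and the only mild bookkeeping concerns the degenerate cases $i \in \{0\} \cup \{|S|+1,\ldots,\numS\}$.
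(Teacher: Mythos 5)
Your construction is exactly the one in the paper: the counting function $\kappa^{\pi}(\mathbf{b}) = \mathbbm{1}\{\sum_{j \in S} b_j \geq i\}$ for ranks $1 \leq i \leq |S|$, justified by the standard equivalence $y_{(i),S} \leq z \iff \sum_{j \in S}\mathbbm{1}\{y_j \leq z\} \geq i$, with the degenerate ranks handled by constant functions, so the proposal is correct and takes essentially the same approach. The one wrinkle you flag --- that for $i > |S|$ the choice $\kappa^{\pi} \equiv 0$ disagrees with $\mathbbm{1}\{\pi(\mathbf{x},\mathbf{y}) \leq z\}$ at $z \geq 1$ --- is glossed over in the paper's own treatment of that boundary case as well and is immaterial to the downstream regret analysis.
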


\begin{proof}[\textbf{Proof of \Cref{prop:OS_to_counting}}]
Let $\pi$ associated with a subset $S \subset \{1,\ldots,\numS\}$ and an index $i \in \{0,\ldots,\numS\}$. If $i = 0$, we note that $\pi(\mathbf{y})$ is the constant policy equal to $0$ which is a counting policy associated with the constant function $\kappa^{\pi}$ which always takes value $1$. A similar argument implies that if $i = |S|+1$, $\pi$ is a counting policy.

In all other cases, we have that, for every $y \in [0,1]$,
$\mathbbm{1} \{ \pi(\mathbf{y}) \leq y \} = \mathbbm{1} \{ y_{(i),S} \leq y \}.$ Therefore, by defining the function $\kappa^\pi$ for every $\mathbf{b} \in \{0,1\}^\numS$ as, 
\begin{equation*}
    \kappa^\pi(\mathbf{b}) = \begin{cases}
        1 \quad \text{if $|\{ j \in S \; \text{s.t.} \; b_j = 1 \}| \geq i$}\\
        0 \quad \text{otherwise},
    \end{cases}
\end{equation*}
we obtain that for every $\mathbf{y} \in \Y^\numS$,
\begin{equation*}
    \kappa^\pi(\mathbbm{1} \{y_1 \leq y \},\ldots, \mathbbm{1} \{y_\numS \leq y \}) = \mathbbm{1} \{ y_{(i),S} \leq y \}.
\end{equation*}
This therefore implies that $\pi$ is a counting policy. 
\end{proof}

\subsection{Proofs of Results Presented in \Cref{sec:OS}}

\begin{proof}[\textbf{Proof of \Cref{prop:counting_to_sep}}]
Consider a counting policy $\pi$ with associated counting function $\kappa^{\pi}$. 
Furthermore, for every subset $S \subset \{1,\ldots,\numS\}$, let $\kappa^{\pi}(1_{S},0_{-S}) = \kappa^{\pi}(b_1,\ldots,b_\numS)$, where $b_i = 1$ if $i \in S$ and $0$ if $i \not \in S$. 
Let $\mesIn{1},\ldots, \mesIn{\numS} \in \Delta(\Y)$ and let $y \in \mathbb{R}$. We have that,
\begin{align*}
\mathbb{P}_{ \mathbf{y} \sim \mesIn{1} \times \ldots \times \mesIn{\numS}} \left( \pi(\mathbf{y}) \leq y \right)
&= \mathbb{E}_{ \mathbf{y} \sim \mesIn{1} \times \ldots \times \mesIn{\numS}} \left[ \mathbbm{1} \left\{ \pi(\mathbf{y}) \leq y \right \} \right]\\
 &\stackrel{(a)}{=}  \mathbb{E}_{ \mathbf{y} \sim \mesIn{1} \times \ldots \times \mesIn{\numS}} \left[ \kappa^{\pi} \left( \mathbbm{1} \{ y_1 \leq y \},\ldots,\mathbbm{1} \{ y_{\numS} \leq y  \} \right)\right]\\
 &= \sum_{S \subset \{1,\ldots,\numS\}} \kappa^{\pi}(1_{S},0_{-S}) \cdot \mathbb{P}_{ \mathbf{y} \sim \mesIn{1} \times \ldots \times \mesIn{\numS}} \left( y_i \leq y \; \text{for all $i \in S$}; y_i > y \; \text{for all $i \not \in S$} \right)\\
 & \stackrel{(b)}{=}\sum_{S \subset \{1,\ldots,\numS\}} \kappa^{\pi}(1_{S},0_{-S}) \cdot \prod_{i \in S} H_i(y) \cdot \prod_{i \not \in S} \left(1 - H_i(y) \right),
\end{align*}
where $(a)$ follows from the definition of counting policies and $(b)$  holds as the $(y_i)_{i \in \{1,\ldots,\numS\}}$ are independent random variables.

Therefore, by setting $P^{\pi}$ defined for every $h_1,\ldots, h_\numS \in [0,1]^{\numS}$ as
\begin{equation*}
P^{\pi}(h_1,\ldots, h_\numS ) = \sum_{S \subset \{1,\ldots,\numS\}} \kappa^{\pi}(1_{S},0_{-S}) \cdot \prod_{i \in S} h_i \cdot \prod_{i \not \in S} \left(1 - h_i \right),
\end{equation*}
we have that,
\begin{equation*}
\mathbb{P}_{ \mathbf{y} \sim \mesIn{1} \times \ldots \times \mesIn{\numS}} \left( \pi(\mathbf{y}) \leq y \right) = P^{\pi}(\mesIn{1}(y), \ldots, \mesIn{\numS}(y)).  
\end{equation*}
This shows that $\pi$ is a separable policy.

Furthermore \Cref{prop:nondec_counting} implies that the mapping $\kappa^{\pi}$ is non-decreasing in the sense that, for every $\mathbf{b},\mathbf{b}' \in  \{0,1\}^{\numS}$ and for every $i \in \{1,\ldots, \numS\}$ we have that $b_i \leq b_i'$, then $\kappa(\mathbf{b}) \leq \kappa(\mathbf{b}')$. We next show that $P^{\pi}$ is non-decreasing in the sense defined in \Cref{def:WERM}. 

Let $i \in \{1,\ldots,\numS\}$. For every subset $S \in \{1,\ldots,\numS\} \setminus \{i\}$, we let, 
\begin{align*}
\kappa^{\pi}_{i+}(1_{S},0_{-S}) &= \kappa^{\pi}(b_1,\ldots,b_\numS) \quad \text{where $b_j = 1$ if $j \in S \cup \{i\}$ and $0$ if $j \not \in S$}\\
\kappa^{\pi}_{i-}(1_{S},0_{-S}) &= \kappa^{\pi}(b_1,\ldots,b_\numS) \quad \text{where $b_j = 1$ if $j \in S$ and $0$ if $j=i$ or $j \not \in S$}.
\end{align*}
Furthermore, for every $h_1,\ldots,h_{i-1},h_{i+1},\ldots h_\numS \in  [0,1]$, we let 
\begin{equation*}
p_i : h \mapsto P^\pi(h_1,\ldots,h_{i-1},h,h_{i+1},\ldots h_\numS).
\end{equation*}
We next show that $p_i$ is non-decreasing on $[0,1]$. We remark that for every $h$,
\begin{align*}
p_i(h) &= \sum_{S \subset \{1,\ldots,\numS\} \setminus \{i\} } \Big[ \kappa^{\pi}_{i+}(1_{S},0_{-S}) \cdot h \cdot \prod_{j \in S} h_j \cdot \prod_{j  \in \{1,\ldots,\numS\} \setminus \left(S \cup \{i\} \right)} (1-h_j) \\
&\qquad+ \kappa^{\pi}_{i-}(1_{S},0_{-S}) \cdot (1-h) \cdot \prod_{j \in S} h_j \cdot \prod_{j  \in \{1,\ldots,\numS\} \setminus \left(S \cup \{i\} \right)} (1-h_j) \Big]\\
&= \sum_{S \subset \{1,\ldots,\numS\} \setminus \{i\} } \prod_{j \in S} h_j \cdot \prod_{j  \in \{1,\ldots,\numS\} \setminus \left(S \cup \{i\} \right)} (1-h_j)  \cdot \left[ \left(\kappa^{\pi}_{i+}(1_{S},0_{-S}) - \kappa^{\pi}_{i-}(1_{S},0_{-S})\right) \cdot h + \kappa^{\pi}_{i-}(1_{S},0_{-S}) \right].
\end{align*}
Furthermore, for every $S \subset \{1,\ldots,\numS\} \setminus \{i\}$, $\kappa^{\pi}$ is non-decreasing therefore, $\kappa^{\pi}_{i+}(1_{S},0_{-S}) \geq \kappa^{\pi}_{i-}(1_{S},0_{-S})$ and thus, $p$ is a linear mapping with non-negative slope. 

This shows that $p$ is non-decreasing and thus implies that $\pi$ is a non-decreasing separable policy.
\end{proof}

\begin{proof}[\textbf{Proof of \Cref{prop:WEM_to_counting}}]
Consider a vector of weights $\mathbf{w} \in [0,1]^{\numS}$ and consider the Weighted ERM policy $\pi^{\mathbf{w}}$. 
The action selected by Weighted ERM satisfies,
\begin{align*}
\pi^{\mathbf{w}}(\mathbf{y}) &= \inf \left \{\actVr \in \Y \, \text{s.t.} \, \frac{\sum_{i=1}^\numS w_i \cdot \mathbbm{1} \left\{ y_i \leq \actVr \right\} }{\sum_{i=1}^{\numS} w_{i}} \geq \frac{\cu}{\cu + \co} \right \} = \min \left \{\actVr \in \Y \, \text{s.t.} \, \frac{\sum_{i=1}^\numS w_i \cdot \mathbbm{1} \left\{ y_i \leq \actVr \right\} }{\sum_{i=1}^{\numS} w_{i}} \geq \frac{\cu}{\cu + \co} \right \},
\end{align*}
where the last equality holds because $a \mapsto \frac{\sum_{i=1}^\numS w_i \cdot \mathbbm{1} \left\{ y_i \leq \actVr \right\} }{\sum_{i=1}^{\numS} w_{i}}$ is upper semicontinuous, which implies that the superlevel sets are closed.
We define $\kappa^{\pi}$ such that for every $\mathbf{b} \in \{0,1\}^{\numS}$,
\begin{equation*}
\kappa^{\pi}(\mathbf{b}) =  \begin{cases}
1 \quad \text{ if $\frac{\sum_{i=1}^\numS w_i \cdot b_{i} }{\sum_{i=1}^{\numS} w_{i}} \geq \frac{\cu}{\cu+\co}$},\\
0 \quad \text{otherwise}.
\end{cases}
\end{equation*}
Using this mapping, we re-express $\pi^{\mathbf{w}}$ for every $\mathbf{y} \in \Y^{\numS}$ as,
\begin{equation*}
\pi^{\mathbf{w}}(\mathbf{y}) = \min \left \{\actVr \in \Y \, \text{s.t.} \, \kappa^{\pi}(\mathbbm{1} \left\{ y_1 \leq \actVr \right\}, \ldots, \mathbbm{1} \left\{ y_\numS \leq \actVr \right\}) = 1 \right \}.
\end{equation*}
Fix $\mathbf{y} \in \Y^{\numS}$ and  $y \in \mathbb{R}$.
 First, if $\mathbbm{1} \left\{ \pi^{\mathbf{w}}(\mathbf{y}) \leq y \right\} =0$, we have that for every $\actVr \leq y$, 
\begin{equation*}
 \kappa^{\pi}(\mathbbm{1} \left\{ y_1 \leq \actVr \right\}, \ldots, \mathbbm{1} \left\{ y_\numS \leq \actVr \right\}) = 0, 
\end{equation*}
and in particular this holds for $\actVr = y$. Hence, $\mathbbm{1} \left\{ \pi^{\mathbf{w}}(\mathbf{y}) \leq y \right\} = \kappa^{\pi}(\mathbbm{1} \left\{ y_1 \leq y \right\}, \ldots, \mathbbm{1} \left\{ y_\numS \leq y \right\}) = 0$.

Moreover, if $\mathbbm{1} \left\{ \pi^{\mathbf{w}}(\mathbf{y}) \leq y \right\} =1$, we know that there exists $\actVr_0 \leq y$ such that, 
\begin{equation*}
 \kappa^{\pi}(\mathbbm{1} \left\{ y_1 \leq \actVr_0 \right\}, \ldots, \mathbbm{1} \left\{ y_\numS \leq \actVr_0 \right\}) = 1.
\end{equation*}
Fix such $\actVr_0$. We remark that $\kappa^{\pi}$ is non-decreasing in the sense that, for every $\mathbf{b},\mathbf{b}' \in  \{0,1\}^{\numS}$, if for every $i \in \{1,\ldots, \numS\}$ we have that $b_i \leq b_i'$, then $\kappa^\pi(\mathbf{b}) \leq \kappa^\pi(\mathbf{b}')$. Therefore,
\begin{equation*}
 \kappa^{\pi}(\mathbbm{1} \left\{ y_1 \leq y \right\}, \ldots, \mathbbm{1} \left\{ y_\numS \leq y \right\})  \geq  \kappa^{\pi}(\mathbbm{1} \left\{ y_1 \leq \actVr_0 \right\}, \ldots, \mathbbm{1} \left\{ y_\numS \leq \actVr_0 \right\})  = 1.
\end{equation*}
We conclude that, $\mathbbm{1} \left\{ \pi^{\mathbf{w}}(\mathbf{y}) \leq y \right\}  =  \kappa^{\pi}(\mathbbm{1} \left\{ y_1 \leq y \right\}, \ldots, \mathbbm{1} \left\{ y_\numS \leq y \right\} = 1$.

Hence, we proved  that $\pi$ is a counting policy with associated counting function $\kappa^{\pi}$.
\end{proof}

\begin{proposition}
\label{prop:nondec_counting}
    Fix a counting policy $\pi$. Then the counting function $\kappa^{\pi}$ associated with $\pi$  is non-decreasing in the sense that, for every $\mathbf{b},\mathbf{b}' \in  \{0,1\}^{\numS}$, if for every $i \in \{1,\ldots, \numS\}$ we have that $b_i \leq b_i'$, then $\kappa^\pi(\mathbf{b}) \leq \kappa^\pi(\mathbf{b}')$.
\end{proposition}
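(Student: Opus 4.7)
The proof will proceed by a single constructive argument: given any pair $\mathbf{b}\leq \mathbf{b}'$ in $\{0,1\}^{\numS}$, exhibit a single sample vector $\mathbf{y}\in\Y^{\numS}$ and two thresholds $z_1<z_2$ in $\Y$ such that the indicator pattern $(\mathbbm{1}\{y_i\leq z_1\})_i$ equals $\mathbf{b}$ while $(\mathbbm{1}\{y_i\leq z_2\})_i$ equals $\mathbf{b}'$. Once this is done, the conclusion falls out immediately from the defining identity of a counting policy applied at two different levels.

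\textbf{Construction.} I would define $\mathbf{y}$ by the three-valued assignment
\[
y_i \;=\; \begin{cases} 0, & b_i=1, \\ 1/2, & b_i=0 \text{ and } b_i'=1, \\ 1, & b_i=b_i'=0, \end{cases}
\]
which is well-defined because $\mathbf{b}\leq \mathbf{b}'$ rules out the case $b_i=1, b_i'=0$. Setting $z_1=1/4$ and $z_2=3/4$, a direct check gives $\mathbbm{1}\{y_i\leq z_1\}=b_i$ and $\mathbbm{1}\{y_i\leq z_2\}=b_i'$ for every $i$.

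\textbf{Conclusion.} Applying the defining identity of a counting policy (\Cref{def:counting_policies}) at $z=z_1$ and at $z=z_2$ with this $\mathbf{y}$ yields
\[
\mathbbm{1}\{\pi(\mathbf{x},\mathbf{y})\leq z_1\} \;=\; \kappa^\pi(\mathbf{b}), \qquad \mathbbm{1}\{\pi(\mathbf{x},\mathbf{y})\leq z_2\} \;=\; \kappa^\pi(\mathbf{b}').
\]
Since $z_1<z_2$, the event $\{\pi(\mathbf{x},\mathbf{y})\leq z_1\}$ is contained in $\{\pi(\mathbf{x},\mathbf{y})\leq z_2\}$, so the left-hand indicators satisfy the pointwise inequality $\mathbbm{1}\{\pi(\mathbf{x},\mathbf{y})\leq z_1\}\leq \mathbbm{1}\{\pi(\mathbf{x},\mathbf{y})\leq z_2\}$. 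Hence $\kappa^\pi(\mathbf{b})\leq \kappa^\pi(\mathbf{b}')$.

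\textbf{Potential subtlety.} The only place where one could get stuck is in handling the (potentially randomized) nature of $\pi(\mathbf{x},\mathbf{y})$. However, the defining property of counting policies forces $\mathbbm{1}\{\pi(\mathbf{x},\mathbf{y})\leq z\}$ to equal a $\{0,1\}$-valued deterministic quantity for every $z$, so the level sets of $\pi(\mathbf{x},\mathbf{y})$ are effectively deterministic and the monotonicity of indicators in the threshold applies verbatim. No other obstacle is anticipated; the proof is essentially a one-line consequence of the two-threshold construction.
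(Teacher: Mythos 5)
Your proof is correct and takes essentially the same approach as the paper's: both exhibit a single sample vector whose indicator patterns at two thresholds realize $\mathbf{b}$ and $\mathbf{b}'$ respectively, and then exploit that $\mathbbm{1}\{\pi(\mathbf{x},\mathbf{y})\leq z\}$ is non-decreasing in $z$. The paper phrases the argument by contradiction and uses the values $\{0.5,0.75,1\}$ with thresholds $0.5$ and $0.75$, but this is only a cosmetic difference from your direct version.
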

\begin{proof}[\textbf{Proof of \Cref{prop:nondec_counting}}]
Let $\pi$ be a counting policy and let $\kappa^\pi$ be its associated counting function. Assume for the sake of contradiction that $\kappa^\pi$ is not non-decreasing. Therefore, there exist $\mathbf{b}\neq\mathbf{b'} \in \{0,1\}^\numS$ such that for every $i \in \{1,\ldots,\numS\}$, we have $b_i \leq b'_i$ and $\kappa^\pi(\mathbf{b}) = 1$ whereas $\kappa^\pi(\mathbf{b'}) = 0$. 

Consider the sequence $(y_i)_{i \in \{1,\ldots,\numS\}}$ which satisfies for every $i \in \{1,\ldots,\numS\}$
\begin{equation*}
    y_i = \begin{cases}
        0.75 \quad \text{if $b_i = 0$ and $b'_i = 1$}\\
        1 \quad \text{if $b_i = b'_i = 0$}\\
        0.5 \quad \text{if $b_i = b'_i = 1$}.
    \end{cases}
\end{equation*}
 By definition of counting policies we have that,
\begin{equation}
    \label{eq:to_contradict}
    \mathbbm{1} \{ \pi(\mathbf{y}) \leq 0.5 \} = \kappa^\pi \left( \mathbbm{1} \{  y_1 \leq 0.5 \}, \ldots, \mathbbm{1} \{ y_\numS \leq 0.5 \} \right) \stackrel{(a)}{=} \kappa^\pi(\mathbf{b}) = 1,
\end{equation}
where $(a)$ follows from the fact that, by the construction of $(y_i)_{i \in \{1,\ldots,\numS\}}$, we have $y_i \le 0.5$ if and only if $b_i = 1$. Therefore, \eqref{eq:to_contradict} implies that $\pi(\mathbf{y})$ is less than or equal to $0.5$. Furthermore we note that we also have
\begin{equation*}    
\mathbbm{1} \{ \pi(\mathbf{y}) \leq 0.75 \} = \kappa \left( \mathbbm{1} \{  y_1 \leq 0.75 \}, \ldots, \mathbbm{1} \{ y_\numS \leq 0.75 \} \right) \stackrel{(a)}{=} \kappa^{\pi}(\mathbf{b'}) = 0,
\end{equation*}
where $(a)$ follows again from the fact that by the construction of $(y_i)_{i \in \{1,\ldots,\numS\}}$, $y_i \le 0.75$ if and only if $b_i' = 1$. This implies that $\pi(\mathbf{y})$ is greater than $0.75$, but this  contradicts \eqref{eq:to_contradict}. This concludes the proof.
\end{proof}

\section{Improving ERM with the $\LERM$ Policy}
\label{sec:apx_improve_mix}
We observed in \Cref{sec:shape} that, when all past dissimilarities are identical, i.e., $d(x_1,x_0) = \cdots =d(x_n,x_0)=\zeta$ for a fixed $\zeta \geq 0$, the worst-case regret of the ERM policy is non-monotone as a function of the sample size. We distinguished in \Cref{fig:LERM_tilde} two forms of non-monotonicity for the ERM policy. On the one hand, ERM suffers from a local non-monotonicity, which materializes when the performance of the policy may deteriorate when adding a \textit{single} sample. On the other hand, ERM also suffers from a more acute form of non-monotonicity which we call global because the policy achieves its best performance for a finite sample size and then its performance deteriorates non-trivially with more data. 

In this section, we progressively design an alternative policy which we dub $\LERM$ which overcomes the shortcomings observed for ERM and considerably improves its performance. $\LERM$ is a randomized policy over the set of order statistic policies, a class of policies defined in \Cref{sec:define_OS}.
 We refer to these randomized policies as mixture of order statistic policies and we define them formally as follows.
\begin{definition}[Mixture of order statistics policies]
\label{def:mix_pol}
Fix $\numS \geq 1$ and denote by $\Lambda$ the set of collections of matrix coefficients $\bm{\lambda} = (\lambda_{S,r})_{S \subset \{1,\ldots,\numS\}, r \in \{0,\ldots,\numS\}}$ such that $\sum_{S,r} \lambda_{S,r} =1$. We note that the rows of the matrix are indexed by subsets and the columns by indices.
Then, for every $\bm{\lambda} \in \Lambda$, we say that $\pi^{\bm{\lambda}}$ is a mixture of order statistic policies if and only if, for any realization of past outcomes $\mathbf{y} \in [0,1]^{\numS}$, we have that,
\begin{equation*}
    \pi^{\bm{\lambda}}(\mathbf{y}) = y_{(r),S} \qquad \text{with probability $\lambda_{S,r}$},
\end{equation*}
where $y_{(r),S}$ is an order statistic as defined in \Cref{sec:define_OS}.
\end{definition}
Given that ERM is an element of the set, a natural improvement over ERM would be to consider the \textit{best} possible mixture of order statistic policies defined for every $\numS \geq 1$ and every context configuration  $\mathbf{x} \in \X^{\numS+1}$ as, $\pi^{\bm{\lambda}^*}$, where $\bm{\lambda}^*$ satisfies,
\begin{equation}
\label{eq:best_mix_main}
    \bm{\lambda}^* \in \argmin_{ \bm{\lambda} \in \Lambda} \mathsf{Reg}_n(\pi^{\bm{\lambda}},\mathbf{x}).
\end{equation}
However, solving \eqref{eq:best_mix_main} is in general computationally challenging given the size of the space $\Lambda$. Instead, we suggest to use a policy which searches over a restricted set of possible weights. Inspired, by 
\cite{besbes2021big}, we first formally define $\ERMt$, a policy which uses the subset $S = \{1,\ldots,\numS\}$ and adequately randomizes over ranks $r$. 

Assume without loss of generality that the context vector $(x_i)_{i\in \{1,\ldots,\numS\}}$ are ordered such that, $d(x_1,x_0) \leq \ldots \leq d(x_\numS,x_0)$. For every $k \in \{1,\ldots,\numS\}$, let $\tilde{\Lambda}_k$ be a subset of weights which satisfies,
\begin{equation*}
    \tilde{\Lambda}_k = \{ \bm{\lambda} \in \Lambda \text{ s.t. } \lambda_{S,(r)} =0 \text{ if } S \neq \{1,\ldots,k\} \}.
\end{equation*}
$\ERMt$ is therefore defined as the mixture of order statistics policy $\pi^{\bm{\lambda}}$ such that,
\begin{equation*}
    \bm{\lambda} = \argmin_{ \bm{\lambda} \in \tilde{\Lambda}_\numS} \mathsf{Reg}_n(\pi^{\bm{\lambda}},\mathbf{x}).
\end{equation*}
We compare in \Cref{fig:SAA_vs_mix} the performance of ERM and the one of $\ERMt$ in the setting where all past contexts have the same dissimilarity $\zeta$ with the new out-of-sample context.
\begin{figure}[h]
\centering
\begin{tikzpicture}[scale=.8][h]
\begin{axis}[
            title={},
            xmin=0,xmax=200,
            ymin=0.0,ymax=0.15,
            scaled y ticks={base 10:2},
            width=10cm,
            height=8cm,
            table/col sep=comma,
            xlabel = number of samples $\numS$,
            ylabel = worst-case regret,
            grid=both,
            skip coords between index={0}{1},
            legend pos=north east]
    
\addplot [blue,mark=square,mark options={scale=.2}] table[x={Ns},y={epsilon=0.1}] {Data/paper_all_data_SAA_q9.csv};
    \addlegendentry{ERM}
     \addplot [red,mark=square,mark options={scale=.2}] table[x={Ns},y={epsilon=0.1}] {Data/paper_all_data_mix_q9.csv};
    \addlegendentry{$\ERMt$}                    
\end{axis}
\end{tikzpicture}
\caption{\textbf{Comparison of ERM and $\ERMt$.} The figure depicts the worst-case regret of the Empirical Risk Minimization and $\ERMt$ policies for $\zeta = .1$ and as a function of the sample size $\numS$ (q=.9).}\label{fig:SAA_vs_mix}
\end{figure}
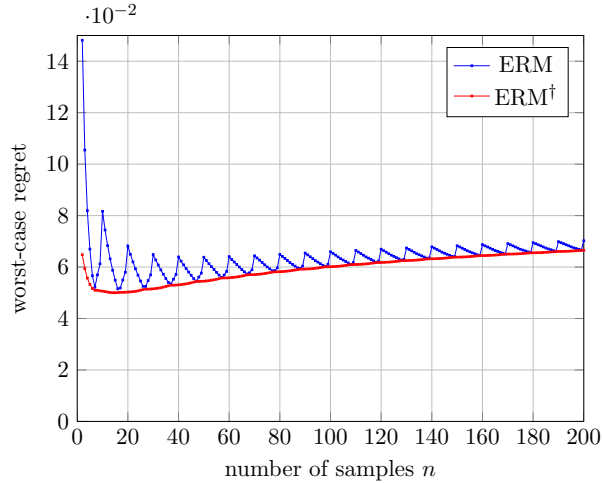
We observe that $\ERMt$ alleviates the local non-monotonicity by randomizing over the ranks of order statistics. However, this randomization does not yet resolve the global non-monotonicity behavior and the worst-case regret still deteriorates with more samples after achieving a minimal value.

A natural way to avoid the global non-monotonicity of $\ERMt$ is to consider the policy which uses $\ERMt$ on a subset of the total samples which achieves the lowest worst-case regret even when having more samples available. We refer to this policy as $\LERM$ and formally define it as follows.
The $\LERM$ policy is a mixture of order statistics policy $\pi^{\bm{\lambda}}$ such that,
\begin{equation}
\label{eq:def_LERM}
    \bm{\lambda} \in  \argmin_{ \bm{\lambda} \in \cup_{k=1}^\numS \tilde{\Lambda}_k} \mathsf{Reg}_n(\pi^{\bm{\lambda}},\mathbf{x}).
\end{equation}
A key difference between $\LERM$ and $\ERMt$ is that the latter restricts attention to mixture of order statistics using all samples, whereas the former potentially discards some samples when considered necessary.
Indeed, for every $ k,k' \in \{1,\ldots,\numS\}$ such that $k \neq k'$ we have that $\tilde{\Lambda}_k \cap \tilde{\Lambda}_{k'} = \emptyset$ by definition. Therefore, a weight which minimizes \eqref{eq:def_LERM} is included in a certain set $\tilde{\Lambda}_{k^*}$. We refer to $k^*$ as the ``effective sample size''. 

\section{Alternative Tie-breaking for WERM}
\label{sec:tie-break}
In this section we show that \Cref{cor:regret_WERM} holds for alternative tie-breaking rules for Weighted ERM policies. 

Fix $\numS \geq 1$. For every non-negative sequence of weights $\mathbf{w} = (w_i)_{i \in \{1,\ldots,\numS\}}$ and for every historical observations $\bm{y} \in \Y^{\numS}$, we note that the weighted empirical loss defined for every $\actVr \in \Y$ as, $\hat{L}(a) = {\sum_{i=1}^{\numS} w_{i} \cdot \ell(a,y_{i})}$ is convex as it is a non-negative linear combination of convex functions.
Consequently, the set $\argmin_{a} \sum_{i=1}^{\numS} \hat{L}(a)$, is a non-empty interval of $\Y$.

Furthermore, for every $a \in \Y$, we have that,
\begin{equation*}
\hat{L}(a) = \sum_{i =1}^{\numS} w_i \cdot \left[ \co \cdot ( \actVr - y_i)^{+} + \cu \cdot (y_i - \actVr)^{+} \right] = \sum_{\substack{i=1\\ y_i \leq a} }^\numS w_i \cdot \co \cdot ( \actVr - y_i) + \sum_{\substack{i=1\\ y_i \geq a} }^\numS w_i \cdot \cu \cdot ( y_i - \actVr).
\end{equation*}
Therefore, the sub-gradient of $\hat{L}$ satisfies that for every $a \in \Y$,
\begin{equation*}
\partial \hat{L}(a) = \left[ \sum_{\substack{i=1\\ y_i < a}}^{\numS} \co \cdot w_{i}  - \sum_{\substack{i=1\\ y_i \geq a}}^{\numS} \cu \cdot w_{i} ; \sum_{\substack{i=1\\ y_i \leq a}}^{\numS} \co \cdot w_{i}  - \sum_{\substack{i=1\\ y_i > a}}^{\numS} \cu \cdot w_{i} \right].
\end{equation*}
Hence, $0$ is in  $\partial  \hat{L}(a)$ if and only if,
\begin{equation*}
\frac{\sum_{i = 1}^{\numS} w_{i} \cdot \mathbbm{1} \left\{ y_{i} < a \right\} }{\sum_{i=1}^{\numS} w_{i}} \leq \frac{\cu}{\cu + \co} \leq \frac{\sum_{i = 1}^{\numS} w_{i} \cdot \mathbbm{1} \left\{ y_{i} \leq a \right\} }{\sum_{i=1}^{\numS} w_{i}}. 
\end{equation*}
This implies that $\actVr$ is a minimizer of $\hat{L}$ if and only if, $\actVr \in [a_{min}; a_{max}]$, where,
\begin{align*}
a_{min}(\bm{y}) &= \inf \left \{ a \text{ s.t. }  \frac{\sum_{i=1}^\numS w_i \cdot \mathbbm{1} \left \{ y_i \leq a \right\}}{\sum_{j=1}^\numS w_{j}} \geq \frac{\cu}{\cu+\co}  \right \},\\
a_{max}(\bm{y}) &= \sup \left \{ a \text{ s.t. }  \frac{\sum_{i=1}^\numS w_i  \cdot \mathbbm{1} \left \{ y_i < a \right\}}{\sum_{j=1}^\numS w_j}  \leq \frac{\cu}{\cu+\co}  \right \}. 
\end{align*}
For instance, when $\numS=2$, $y_1=0$, $y_2=1$, $w_1=w_2=1$, and $\cu=\co=1$, we obtain $a_{\min}(\bm{y})=0$ and $a_{\max}(\bm{y})=1$.

In what follows, we will consider more general Weighted ERM policies which are defined by a sequence of non-negative weights $\bm{w}$ and by a tie-breaking parameter $\lambda \in [0,1]$. Specifically, these more general Weighted ERM policies selects for every $\bm{y}$ the action $\lambda \cdot a_{min}(\bm{y}) + (1-\lambda) \cdot a_{max}(\bm{y})$. We denote this policy by $\pi^{\bm{w}}_{\lambda}$. Note that we do not allow the convex combination parameter $\lambda$ to depend on the historical observations $\bm{y}$.  

We next prove the following generalization of \Cref{cor:regret_WERM}.

\begin{theorem}\label{thm:general_WERM}
Let  $\numS \geq 1$. Let $\bm{w}$ be a sequence of non-negative weights and let $\lambda \in [0,1]$. Then, for any sequence of contexts $\mathbf{x} = (x_i)_{i \in \{0,\ldots,\numS\}} \in \X^{\numS+1}$, the general Weighted ERM policy $\pi^{\bm{w}}_{\lambda}$ satisfies,
\begin{align*}
\wreg[\pi^{\bm{w}}_{\lambda}]{\mathbf{x}}  &= \max \Big\{ \sup_{\mu_0 \in [0,1-q]} \mathbb{E}_{\mathbf{y} \sim \mathcal{B}(\mu_0 + d(x_0,x_1)) \times \ldots \times \mathcal{B}\left( \mu_0 + d(x_0,x_\numS) \right)} \left[  R \left(\pi^{\bm{w}}_{\lambda}(\mathbf{x},\mathbf{y}),\mathcal{B}(\mu_0) \right) \right],\\
&\quad \sup_{\mu_0 \in [1-q,1]} \mathbb{E}_{\mathbf{y} \sim \mathcal{B}(\mu_0 - d(x_0,x_1)) \times \ldots \times \mathcal{B}\left( \mu_0 - d(x_0,x_\numS) \right)} \left[  R \left(\pi^{\bm{w}}_{\lambda}(\mathbf{x},\mathbf{y}),\mathcal{B}(\mu_0) \right) \right] \Big \}.
\end{align*}
\end{theorem}
\begin{proof}[\textbf{Proof of \Cref{thm:general_WERM}}]
For the sake of simple notations, we are not marking the dependence in $\mathbf{x}$ when not necessary.

Remark that the policy $\pi^{\bm{w}}_{\lambda}$ selects the convex combination of the actions prescribed by the policies $\tilde{\pi}^{\bm{w}}_{0}$ and $\tilde{\pi}^{\bm{w}}_{1}$, which respectively selects $a_{max}(\bm{y})$ and $a_{min}(\bm{y})$. We first establish the following result on the worst-case regret of convex combinations.
\begin{lemma}
\label{lem:cvx_to_random}
Fix $\numS \geq 1$ and $\bm{x} \in \mathcal{X}^{\numS+1}$. Let $\pi_{0}$ and $\pi_{1}$ be two non-decreasing separable policies. Let $\lambda \in [0,1]$ and, consider the policy $\pi_{\lambda}$  defined for every $\bm{y} \in [0,1]^{\numS}$ as,
\begin{equation*}
\pi_{\lambda}(\mathbf{x},\bm{y}) = \lambda \cdot \pi_{1}(\mathbf{x},\bm{y}) + (1-\lambda) \pi_{0}(\mathbf{x},\bm{y}) 
\end{equation*}
Then we have that,
\begin{align*}
\wreg[\pi_{\lambda}]{\mathbf{x}}  &= \max \Big\{ \sup_{\mu_0 \in [0,1-q]} \mathbb{E}_{\mathbf{y} \sim \mathcal{B}(\mu_0 + d(x_0,x_1)) \times \ldots \times \mathcal{B}\left( \mu_0 + d(x_0,x_\numS) \right)} \left[  R \left(\pi_{\lambda}(\mathbf{x},\mathbf{y}),\mathcal{B}(\mu_0) \right) \right],\\
&\quad \sup_{\mu_0 \in [1-q,1]} \mathbb{E}_{\mathbf{y} \sim \mathcal{B}(\mu_0 - d(x_0,x_1)) \times \ldots \times \mathcal{B}\left( \mu_0 - d(x_0,x_\numS) \right)} \left[  R \left(\pi_{\lambda}(\mathbf{x},\mathbf{y}),\mathcal{B}(\mu_0) \right) \right] \Big \}.
\end{align*}
\end{lemma}
Hence, to obtain the desired result it suffices to establish that $\tilde{\pi}^{\bm{w}}_{0}$ and $\tilde{\pi}^{\bm{w}}_{1}$ are non-decreasing separable policies.

The policy $\tilde{\pi}^{\bm{w}}_{1}$ corresponds to our definition of Weighted ERM policies (see \Cref{def:WERM}) and is therefore a counting policy (by \Cref{prop:WEM_to_counting}). Hence, \Cref{prop:counting_to_sep} implies that it is a non-decreasing separable policy.
We next establish that $\tilde{\pi}^{\bm{w}}_{0}$ is also a counting policy.

We first argue that for every $\bm{y} \in \Y^{\numS}$, we have that,
\begin{equation}
\label{eq:amax_inf}
a_{max}(\bm{y}) = \sup \left \{ a \text{ s.t. }  \frac{\sum_{i=1}^\numS w_i  \cdot \mathbbm{1} \left \{ y_i < a \right\}}{\sum_{j=1}^\numS w_j}  \leq \frac{\cu}{\cu+\co}  \right \} = 
\inf \left \{ a \text{ s.t. }  \frac{\sum_{i=1}^\numS w_i  \cdot \mathbbm{1} \left \{ y_i \leq a \right\}}{\sum_{j=1}^\numS w_j}  > \frac{\cu}{\cu+\co}  \right \}
\end{equation}
Denote by $A_{1}$ the supremum and by $A_{2}$ the infimum. Let $\epsilon >0.$ We have that,
\begin{equation*}
\frac{\cu}{\cu+\co} < \frac{\sum_{i=1}^\numS w_i  \cdot \mathbbm{1} \left \{ y_i < A_{1} + \epsilon \right\}}{\sum_{j=1}^\numS w_j}  \leq  \frac{\sum_{i=1}^\numS w_i  \cdot \mathbbm{1} \left \{ y_i \leq A_{1} + \epsilon \right\}}{\sum_{j=1}^\numS w_j}.
\end{equation*}
This implies that $A_{2} \leq A_{1} + \epsilon$. As this hold for every $\epsilon$, we conclude that $A_{2} \leq A_{1}$.

Assume for the sake of contradiction that $A_{2} < A_{1}$. There exists $\epsilon > 0$ and $A \in (A_{2},A_{1})$  such that $A + \epsilon < A_{1}$. We then have that,
\begin{equation}
\label{eq:contradiction}
\frac{\cu}{\cu+\co} \stackrel{(a)}{<} \frac{\sum_{i=1}^\numS w_i  \cdot \mathbbm{1} \left \{ y_i \leq A \right\}}{\sum_{j=1}^\numS w_j} \stackrel{(b)}{\leq}  \frac{\sum_{i=1}^\numS w_i  \cdot \mathbbm{1} \left \{ y_i < A + \epsilon \right\}}{\sum_{j=1}^\numS w_j} \stackrel{(c)}{\leq} \frac{\cu}{\cu+\co},
\end{equation}
where $(a)$ and $(c)$ holds respectively because $A > A_{2}$ and $A+\epsilon < A_{1}$, and $(b)$ follows from the fact that for every $y,a,a' \in Y$ such that $a < a'$, we have that  $\mathbbm{1} \left \{ y \leq a \right\} \leq \mathbbm{1} \left \{ y < a' \right\}$.

The equation \eqref{eq:contradiction} leads to a contradiction which implies that $A_{2} \geq A_{1}$. We thus conclude that both quantities are equal, and that \eqref{eq:amax_inf} holds.

Furthermore we note that $\inf \left \{ a \text{ s.t. }  \frac{\sum_{i=1}^\numS w_i  \cdot \mathbbm{1} \left \{ y_i \leq a \right\}}{\sum_{j=1}^\numS w_j}  > \frac{\cu}{\cu+\co}  \right \}$ is achieved. For the sake of contradiction, assume it is not, and denote by $A$ the infimum. We have that, $\frac{\sum_{i=1}^\numS w_i  \cdot \mathbbm{1} \left \{ y_i \leq A \right\}}{\sum_{j=1}^\numS w_j} \leq  \frac{\cu}{\cu+\co}$. Moreover, $a \mapsto \frac{\sum_{i=1}^\numS w_i  \cdot \mathbbm{1} \left \{ y_i \leq a \right\}}{\sum_{j=1}^\numS w_j}$ is right-continuous and piecewise constant. Hence there exists $\epsilon > 0$ such that,
\begin{equation*}
\frac{\cu}{\cu+\co} \geq \frac{\sum_{i=1}^\numS w_i  \cdot \mathbbm{1} \left \{ y_i \leq A \right\}}{\sum_{j=1}^\numS w_j} = \frac{\sum_{i=1}^\numS w_i  \cdot \mathbbm{1} \left \{ y_i \leq A + \epsilon \right\}}{\sum_{j=1}^\numS w_j}.
\end{equation*}
This contradicts the fact that $A = \inf \left \{ a \text{ s.t. }  \frac{\sum_{i=1}^\numS w_i  \cdot \mathbbm{1} \left \{ y_i \leq a \right\}}{\sum_{j=1}^\numS w_j}  > \frac{\cu}{\cu+\co}  \right \}$. Hence, the infimum must be achieved.
 
 Thus, we have established that, for every $\bm{y} \in \Y^{\numS}$,
 \begin{equation*}
 \tilde{\pi}^{\bm{w}}_{0}(\bm{y}) = \min \left \{ a \text{ s.t. }  \frac{\sum_{i=1}^\numS w_i  \cdot \mathbbm{1} \left \{ y_i \leq a \right\}}{\sum_{j=1}^\numS w_j}  > \frac{\cu}{\cu+\co}  \right \} = \min \left \{ a \text{ s.t. }  \kappa(\mathbbm{1} \left\{ y_1 \leq \actVr \right\}, \ldots, \mathbbm{1} \left\{ y_\numS \leq \actVr \right\}) = 1 \right \},
 \end{equation*}
where the function $\kappa$ is defined for every $\mathbf{b} \in \{0,1\}^{\numS}$ as
\begin{equation*}
\kappa(\mathbf{b}) =  \begin{cases}
1 \quad \text{ if $\frac{\sum_{i=1}^\numS w_i \cdot b_{i} }{\sum_{i=1}^{\numS} w_{i}} > \frac{\cu}{\cu+\co}$,}\\
0 \quad \text{otherwise}.
\end{cases}
\end{equation*}
By applying the same argument as in the proof of \Cref{prop:WEM_to_counting} we conclude that $ \tilde{\pi}^{\bm{w}}_{0}$ is a counting policy. \Cref{prop:counting_to_sep} implies that it is a non-decreasing separable policy.
\end{proof}

\begin{proof}[\textbf{Proof of \Cref{lem:cvx_to_random}}]
For the sake of simple notations, we are not marking the dependence in $\mathbf{x}$ when not necessary.
Consider the \textit{randomized} policy $\tilde{\pi}_{\lambda}$ defined for every $\bm{y} \in [0,1]^{\numS}$ as,
\begin{equation*}
\tilde{\pi}_{\lambda}(\bm{y}) = 
\begin{cases}
 \pi_{1}(\bm{y}) \text{ with probability $\lambda$,}\\
  \pi_{0}(\bm{y}) \text{ with probability $1-\lambda$.}
\end{cases}
\end{equation*}
Remark that $\tilde{\pi}_{\lambda}(\bm{y})$ is a mixture of non-decreasing separable policies. Hence, \Cref{lem:mixture_closed} implies that it is a non-decreasing separable policy, and by \Cref{thm:regret_MOS} we have that,
\begin{align*}
\wreg[\tilde{\pi}_{\lambda}]{\mathbf{x}}  &= \max \Big\{ \sup_{\mu_0 \in [0,1-q]} \mathbb{E}_{\mathbf{y} \sim \mathcal{B}(\mu_0 + d(x_0,x_1)) \times \ldots \times \mathcal{B}\left( \mu_0 + d(x_0,x_\numS) \right)} \left[  R \left(\tilde{\pi}_{\lambda}(\mathbf{x},\mathbf{y}),\mathcal{B}(\mu_0) \right) \right],\\
&\quad \sup_{\mu_0 \in [1-q,1]} \mathbb{E}_{\mathbf{y} \sim \mathcal{B}(\mu_0 - d(x_0,x_1)) \times \ldots \times \mathcal{B}\left( \mu_0 - d(x_0,x_\numS) \right)} \left[  R \left(\tilde{\pi}_{\lambda}(\mathbf{x},\mathbf{y}),\mathcal{B}(\mu_0) \right) \right] \Big \}.
\end{align*}
Furthermore, we note that for every 
Let $\mesOut, \mesIn{1},\ldots, \mesIn{\numS} \in \Delta \left([0,1] \right)$, 
\begin{align*}
\mathbb{E}_{ \mathbf{y} \sim \mesIn{1} \times \ldots \times \mesIn{\numS}} \big[L(\pi_{\lambda}(\mathbf{y}),\mesOut) \big] 
&\stackrel{(a)}{\leq}  \mathbb{E}_{ \mathbf{y} \sim \mesIn{1} \times \ldots \times \mesIn{\numS}} \big[\lambda \cdot L(\pi_{1}(\mathbf{y}),\mesOut) + (1-\lambda)\cdot L(\pi_{0}(\mathbf{y}),\mesOut)  \big] \nonumber \\
&=\mathbb{E}_{ \mathbf{y} \sim \mesIn{1} \times \ldots \times \mesIn{\numS}} \big[L(\tilde{\pi}_{\lambda}(\mathbf{y}),\mesOut) \big], 
\end{align*}
where $(a)$ follows from the convexity of $L$. Importantly, we note that this inequality is an equality when $\mesOut$ is a Bernoulli distribution as $a \mapsto L(a,\mathcal{B}(\mu))$ is a linear function in $[0,1]$ for every $\mu \in [0,1]$.

This implies that, 
\begin{equation}
\label{eq:cvx_lower_random}
\mathbb{E}_{ \mathbf{y} \sim \mesIn{1} \times \ldots \times \mesIn{\numS}} \big[R(\pi_{\lambda}(\mathbf{y}),\mesOut) \big] \leq \mathbb{E}_{ \mathbf{y} \sim \mesIn{1} \times \ldots \times \mesIn{\numS}} \big[R(\tilde{\pi}_{\lambda}(\mathbf{y}),\mesOut) \big],
\end{equation}
with equality when $F_0$ is a Bernoulli distribution.
By taking a supremum we obtain that,
\begin{equation*}
\wreg[\pi_{\lambda}]{\mathbf{x}}  \leq \wreg[\tilde{\pi}_{\lambda}]{\mathbf{x}}.
\end{equation*}
We conclude that,
\begin{align*}
\wreg[\pi_{\lambda}]{\mathbf{x}}  &\leq \wreg[\tilde{\pi}_{\lambda}]{\mathbf{x}}\\
&= \max \Big\{ \sup_{\mu_0 \in [0,1-q]} \mathbb{E}_{\mathbf{y} \sim \mathcal{B}(\mu_0 + d(x_0,x_1)) \times \ldots \times \mathcal{B}\left( \mu_0 + d(x_0,x_\numS) \right)} \left[  R \left(\tilde{\pi}_{\lambda}(\mathbf{x},\mathbf{y}),\mathcal{B}(\mu_0) \right) \right],\\
&\quad \sup_{\mu_0 \in [1-q,1]} \mathbb{E}_{\mathbf{y} \sim \mathcal{B}(\mu_0 - d(x_0,x_1)) \times \ldots \times \mathcal{B}\left( \mu_0 - d(x_0,x_\numS) \right)} \left[  R \left(\tilde{\pi}_{\lambda}(\mathbf{x},\mathbf{y}),\mathcal{B}(\mu_0) \right) \right] \Big \} \\
&\stackrel{(a)}{=}  \max \Big\{ \sup_{\mu_0 \in [0,1-q]} \mathbb{E}_{\mathbf{y} \sim \mathcal{B}(\mu_0 + d(x_0,x_1)) \times \ldots \times \mathcal{B}\left( \mu_0 + d(x_0,x_\numS) \right)} \left[  R \left(\pi_{\lambda}(\mathbf{x},\mathbf{y}),\mathcal{B}(\mu_0) \right) \right],\\
&\quad \sup_{\mu_0 \in [1-q,1]} \mathbb{E}_{\mathbf{y} \sim \mathcal{B}(\mu_0 - d(x_0,x_1)) \times \ldots \times \mathcal{B}\left( \mu_0 - d(x_0,x_\numS) \right)} \left[  R \left(\pi_{\lambda}(\mathbf{x},\mathbf{y}),\mathcal{B}(\mu_0) \right) \right] \Big \}\\
&\leq \wreg[\pi_{\lambda}]{\mathbf{x}},
\end{align*}
where $(a)$ follows from the equality case of \eqref{eq:cvx_lower_random}.
Therefore all the inequalities are in fact equalities and we have established that,
\begin{align*}
\wreg[\pi_{\lambda}]{\mathbf{x}}  &= \max \Big\{ \sup_{\mu_0 \in [0,1-q]} \mathbb{E}_{\mathbf{y} \sim \mathcal{B}(\mu_0 + d(x_0,x_1)) \times \ldots \times \mathcal{B}\left( \mu_0 + F_0d(x_0,x_\numS) \right)} \left[  R \left(\pi_{\lambda}(\mathbf{x},\mathbf{y}),\mathcal{B}(\mu_0) \right) \right],\\
&\quad \sup_{\mu_0 \in [1-q,1]} \mathbb{E}_{\mathbf{y} \sim \mathcal{B}(\mu_0 - d(x_0,x_1)) \times \ldots \times \mathcal{B}\left( \mu_0 - d(x_0,x_\numS) \right)} \left[  R \left(\pi_{\lambda}(\mathbf{x},\mathbf{y}),\mathcal{B}(\mu_0) \right) \right] \Big \}.
\end{align*}
\end{proof}

\section{Translation of Previous State-of-the-art Bounds to our Setting}
\label{sec:apx_Mohri}
In this section, we provide a self-contained explanation of how to translate the bound in \cite{mohri2012new} to our setting. For any fixed out-of-sample distribution $F_{0}$ and for $\mathbf{y} \in [0,1]^{\numS}$, let 
$\tilde{R}_n(\mathbf{y}) = R(\pi^{\mathrm{ERM}}(\mathbf{y}),F_{0}).$
\citet[eq.~(7)]{mohri2012new}\footnote{We believe \citet[eq.~(7)]{mohri2012new} missed a factor of $2$ in the third term.} show that for $\delta \in (0,1)$,
\begin{equation}
\label{eq:mohri_bound}
    \mathbb{P}\left( \tilde{R}_n(\mathbf{y}) \geq 4 \cdot \mathfrak{comp}_{\numS} + \frac{2}{\numS} \sum_{i=1}^{\numS} \sup_{\actVr \in \Y}|L(\actVr,F_i) - L(\actVr,F_0)| + \max(q,1-q) \cdot \sqrt{\frac{8\log(2\delta^{-1})}{\numS}} \right) \leq  \delta,
\end{equation}
where the probability is taken with respect to outcomes sampled from historical distributions $\mathbf{y} \sim F_1 \times \ldots \times F_{\numS}$ and $\mathfrak{comp}_{\numS}$ is a notion of sequential Rademacher complexity defined as,
\begin{equation*}
    \mathfrak{comp}_{\numS} = \mathbb{E}_{\mathbf{y} \sim F_1 \times \ldots \times F_{\numS}} \left[ \mathbb{E}_{\bm{\sigma}} \left[ \sup_{\actVr \in \Y} \sum_{i=1}^\numS \sigma_i \cdot \ell(a,y_i) \right] \right],
\end{equation*}
with $\bm{\sigma}$ being a uniform variable sampled from $\{-1,1\}^\numS$.

Let $\beta_\numS = 4 \cdot \mathfrak{comp}_{\numS} + \frac{2}{\numS} \sum_{i=1}^{\numS} \sup_{\actVr \in \Y}|L(\actVr,F_i) - L(\actVr,F_0)|$ and note that for now, we let the dependence in $F_0,\ldots,F_\numS$ be implicit.

We next convert this probabilistic bound into a bound on the expected regret. By applying the change of variable $\eta = \max(q,1-q) \cdot \sqrt{\frac{8\log(2\delta^{-1})}{\numS}}$, one can rewrite \eqref{eq:mohri_bound} as,
\begin{equation*}
    \mathbb{P}\left( \tilde{R}_n(\mathbf{y}) - \beta_\numS \geq \eta \right) \leq 2 \exp \left(- \frac{\eta^2 \cdot \numS}{8\max(q,1-q)^2} \right).
\end{equation*}
Therefore,
\begin{align*}
\label{eq:converting_to_expectation}
    \mathbb{E}\left[ \tilde{R}_n(\mathbf{y})\right] - \beta_\numS  
    &= \int_{0}^\infty \mathbb{P} \left( \tilde{R}_n(\mathbf{y}) - \beta_\numS  > \eta \right) d \eta - \int_{-\infty}^0\mathbb{P} \left( \tilde{R}_n(\mathbf{y}) - \beta_\numS  < \eta \right) d\eta \nonumber \\
    &\leq2 \int_{0}^1 \exp \left(- \frac{\eta^2 \cdot \numS}{8\max(q,1-q)^2} \right) d\eta \nonumber \\
    &=4\max(q,1-q) \sqrt{\frac{2\pi}{\numS}}   \int_{0}^1 \frac{\sqrt{\numS}}{2\max(q,1-q)} \cdot \frac{1}{\sqrt{2\pi}} \exp \left(- \frac{\eta^2 \cdot \numS}{8\max(q,1-q)^2} \right) d\eta \nonumber \\
    &\stackrel{(a)}{=} 4\max(q,1-q) \sqrt{\frac{2\pi}{\numS}} \cdot \left \{ \Phi \left(\frac{2\max(q,1-q)}{\sqrt{\numS}} \right) - \Phi(0) \right \}, 
\end{align*}
where $(a)$ follows by remarking that the integrand is the pdf of a normal with mean $0$ and standard deviation $\frac{\sqrt{\numS}}{2\max(q,1-q)}$ and $\Phi$ denotes the cdf of the standard Gaussian distribution.

By explicitly marking the dependence in all distributions, we have established that an upper bound on the expected regret of ERM derived using the probabilistic bound of \cite{mohri2012new} takes the form,
\begin{align*}
\mathbb{E}_{\bm{y} \sim F_1\times \ldots \times F_\numS}\left[ \tilde{R}_n(\mathbf{y})\right] 
&\leq \beta_\numS(F_0,\ldots,F_\numS) + 4\max(q,1-q) \sqrt{\frac{2\pi}{\numS}} \cdot \left \{ \Phi \left(\frac{2\max(q,1-q)}{\sqrt{\numS}} \right) - \Phi(0) \right \}\\
&\stackrel{(a)}{\leq}4\max(q,1-q) \sqrt{\frac{2\pi}{\numS}} \cdot \left \{ \Phi \left(\frac{2\max(q,1-q)}{\sqrt{\numS}} \right) - \Phi(0) \right \}\\
&\qquad + \sup_{F_0 \in \Delta(\Y)} \sup_{ \substack{F_1,\ldots, F_n \in \Delta(\Y)\\ \|F_i - F_0\|_K \leq \zeta \; \forall i}} \beta_\numS(F_0,\ldots,F_\numS),
\end{align*}
where $(a)$ is necessary to derive an upper bound which does not depend on  $F_0,\ldots,F_\numS$. 

Let $\bar{\beta}_\numS = \sup_{F_0 \in \Delta(\Y)} \sup_{ \substack{F_1,\ldots, F_n \in \Delta(\Y)\\ \|F_i - F_0\|_K \leq \zeta \; \forall i}} \beta_\numS(F_0,\ldots,F_\numS)$. Evaluating this quantity is challenging even in the i.i.d. case where we impose that $F_0 = F_1 \ldots = F_\numS$. The main challenge comes from the fact that $\beta_\numS$ involves the Rademacher complexity defined previously. The common approach in statistical learning is to upper bound $\beta_\numS$ using combinatorial arguments and the VC-dimension.  We refer the reader to \cite{mohri2012new} for an example of such derivations.

In this paper, we propose to use a \textit{lower bound} on $\bar{\beta}_\numS$. By doing so, we compare our results to bounds that are better than ones that could be derived from the literature.

We note that,
\begin{align*}
\bar{\beta}_\numS &\geq \beta_\numS(\mathcal{B}(\zeta),\mathcal{B}(0),\ldots,\mathcal{B}(0))\\
 &= 4  \mathbb{E}_{\mathbf{y} \sim \mathcal{B}(0) \times \ldots \times \mathcal{B}(0)} \left[ \mathbb{E}_{\bm{\sigma}} \left[ \sup_{\actVr \in \Y} \sum_{i=1}^\numS \sigma_i \cdot \ell(a,y_i) \right] \right] + 2 \sup_{\actVr' \in \Y}|L(\actVr',\mathcal{B}(0)) - L(\actVr',\mathcal{B}(\zeta))|\\
 &\stackrel{(a)}{=} 4 \mathbb{E}_{\bm{\sigma}} \left[ \sup_{\actVr \in \Y} \sum_{i=1}^\numS \sigma_i \cdot \ell(a,0) \right] + 2 \max(q,1-q) \cdot \zeta,
\end{align*}
where the second term in $(a)$ follows by noting that the supremum is achieved for $\actVr' = 0$ or $1$.

In \Cref{sec:mmBound}, we compare our result to \cite{mohri2012new} by using the favorable bound,
\begin{equation*}
4 \mathbb{E}_{\bm{\sigma}} \left[ \sup_{\actVr \in \Y} \sum_{i=1}^\numS \sigma_i \cdot \ell(a,0) \right] + 2 \max(q,1-q) \cdot \zeta + 4\max(q,1-q) \sqrt{\frac{2\pi}{\numS}} \cdot \left \{ \Phi \left(\frac{2\max(q,1-q)}{\sqrt{\numS}} \right) - \Phi(0) \right \}.
\end{equation*}

\section{ERM Performance for ``Mild'' Instances}
\label{sec:beyond_worst_case}

In \Cref{sec:mmBound}, we showed that our exact characterization of the worst-case performance of data-driven policies demonstrates that the achievable regret with a small number of samples is much lower than suggested by state-of-the-art upper bounds. Our exact analysis also allowed us to uncover new insights on the \textit{shape} of the learning curve of ERM. In \Cref{sec:shape}, we observed three salient features of the worst-case performance of ERM: $i)$ the performance improves dramatically after the first few samples, $ii)$ ERM may exhibits a ``local'' non-monotonicity behavior which can be corrected by considering convex combination of order statistics and, $iii)$ ERM exhibits a ``global'' non-monotonicity suggesting that the decision-maker should use a smaller number of samples even when having access to many more.

In what follows, we investigate whether these insights hold in a less adversarial setting where the instance does not vary as a function of the number of samples and the demand distributions are not necessarily Bernoulli distributions. To numerically evaluate the regret of ERM for ``milder'' instances, we fix an out-of-sample distribution $F_{\text{future}}$ and given $\zeta > 0$, we compute the expected regret of ERM when accessing $\numS$ samples from the distribution with cumulative distribution function $F_{\text{past}}(y) = \min(F_{\text{future}}(y) + \zeta,1)$ for all $y \in [0,1]$. Our goal is to compute for every $\numS$ the quantity $\mathbb{E}_{y_1\sim F_{\text{past}},\ldots,y_n\sim F_{\text{past}} } \big[R(\ERM(\bm{y},F_{\text{future}}) \big].$

For every instance, we estimate this quantity as follows. We generate $K = 10^5$ samples $\{\tilde{d}_1,\ldots, \tilde{d}_K\}$ from $F_{\text{future}}$ to compute the out-of-sample cost. We then draw $M = 1000$ in-sample demand vectors such that for every $m \in \{1,\ldots,M\}$, we have that $\bm{y}^m$ is an $n$-dimensional vector where each component is sampled independently from  $F_{\text{past}}$. Our estimator of the expected regret of ERM is defined as,
\begin{equation*}
\frac{1}{K} \sum_{k = 1}^K \left[  \frac{1}{M} \sum_{m=1}^M\ell(\ERM(\bm{y}^m), d_k)  - \ell(a^*_{F_{\text{future}}}, d_k)  \right].
\end{equation*}

We plot in \Cref{fig:mild_instances} the regret of ERM for three distributions supported on $[0,1]$: truncated\footnote{For a distribution defined on $\mathbb{R}$ with pdf $f$ and cdf $F$ we define its truncated distribution on $[0,1]$ as the distribution with pdf $g(z) = \mathbbm{1}\left(0 \leq z \leq 1  \right) \cdot \frac{f(z)}{F(1)-F(0)}$.} normal, truncated exponential and uniform.

\begin{figure}[h!]
\centering
\subfigure[Truncated Normal $\mu = 0.6$, $\sigma = 0.3$.]{
\begin{tikzpicture}[scale=.7]
\begin{axis}[
            title={},
            xmin=0,xmax=1000,
            ymin=0.03,ymax=0.04,
            scaled y ticks={base 10:2},
            width=10cm,
            height=8cm,
            table/col sep=comma,
            xlabel = number of samples $\numS$,
            ylabel = regret,
            grid=both,
            skip coords between index={0}{1},
            legend pos=north east]
    
     \addplot [blue,mark=square,mark options={scale=.01}] table[x={N},y={min-eps=0.3}] {Data/res_norm_mu06_sigma03.csv};
    \addlegendentry{Cumulative Minimum}                   
     \addplot [red,mark=square,mark options={scale=.01}, very thin] table[x={N},y={mean-eps=0.3}] {Data/res_norm_mu06_sigma03.csv};
    \addlegendentry{ERM}                   
\end{axis}

\end{tikzpicture}
}
\subfigure[Truncated Exponential $\lambda = 3$]{
\begin{tikzpicture}[scale=.7]
\begin{axis}[
            title={},
            xmin=0,xmax=1000,
            ymin=0.045,ymax=0.055,
            scaled y ticks={base 10:2},
            width=10cm,
            height=8cm,
            table/col sep=comma,
            xlabel = number of samples $\numS$,
            ylabel = regret,
            grid=both,
            skip coords between index={0}{1},
            legend pos=north east]
    
     \addplot [blue,mark=square,mark options={scale=.01}] table[x={N},y={min-eps=0.3}] {Data/res_exp_lambda3.csv};
    \addlegendentry{Cumulative Minimum}                   
     \addplot [red,mark=square,mark options={scale=.01}, very thin] table[x={N},y={mean-eps=0.3}] {Data/res_exp_lambda3.csv};
    \addlegendentry{ERM}                   
\end{axis}
\end{tikzpicture}
}
\subfigure[Uniform]{
\begin{tikzpicture}[scale=.7]
\begin{axis}[
            title={},
            xmin=0,xmax=1000,
            ymin=0.044,ymax=0.054,
            scaled y ticks={base 10:2},
            width=10cm,
            height=8cm,
            table/col sep=comma,
            xlabel = number of samples $\numS$,
            ylabel = regret,
            grid=both,
            skip coords between index={0}{1},
            legend pos=north east]
    
     \addplot [blue,mark=square,mark options={scale=.01}] table[x={N},y={min-eps=0.3}] {Data/res_uniform.csv};
    \addlegendentry{Cumulative Minimum}                   
     \addplot [red,mark=square,mark options={scale=.01},very thin ] table[x={N},y={mean-eps=0.3}] {Data/res_uniform.csv};
    \addlegendentry{ERM}                   
\end{axis}
\end{tikzpicture}
}
\caption{\textbf{Performance of ERM for different instances $(\zeta = 0.3)$.} Each figure depicts the regret of ERM for a fixed out-of-sample distribution as a function of the number of samples. The cumulative minimum curve corresponds to the lowest regret achieved by ERM by using at most $n$ samples $(q = 0.9)$.}
\label{fig:mild_instances}
\end{figure}

We remark that most of the insights derived through the worst-case analysis are still widely applicable when the instance is \textit{fixed} across sample sizes and the demand distribution is not a Bernoulli distribution. In particular, the regret of ERM still decays sharply after tens of samples (see \Cref{fig:mild_different_scaling}   for a scaling which highlights more this behavior) and the  regret curve exhibits the local non-monotonicity across all distributions considered. We note that, while the global non-monotonicity still happens for the truncated normal and the truncated exponential distributions, it is much less marked than the one observed with Bernoulli distributions. The effective sample size is much larger than the one suggested by the worst-case analysis and the excess loss incurred by a decision-maker who uses all samples as opposed to the effective one is much smaller for these mild distributions that it is for the Bernoulli distribution. Finally and unsurprisingly, we remark that the regret achieved by ERM for these distributions is lower than suggested by the worst-case analysis.
\begin{figure}[h!]
\centering
\subfigure[Truncated Normal $\mu = 0.6$, $\sigma = 0.3$.]{
\begin{tikzpicture}[scale=.7]
\begin{axis}[
            title={},
            xmin=0,xmax=200,
            ymin=0,ymax=0.25,
            scaled y ticks={base 10:2},
            width=10cm,
            height=8cm,
            table/col sep=comma,
            xlabel = number of samples $\numS$,
            ylabel = regret,
            grid=both,
            legend pos=north east]
    
     \addplot [red,mark=square,mark options={scale=.2}, thick] table[x={N},y={mean-eps=0.3}] {Data/res_norm_mu06_sigma03.csv};
    \addlegendentry{ERM}                   
\end{axis}

\end{tikzpicture}
}
\subfigure[Truncated Exponential $\lambda = 3$]{
\begin{tikzpicture}[scale=.7]
\begin{axis}[
            title={},
            xmin=0,xmax=200,
            ymin=0,ymax=0.25,
            scaled y ticks={base 10:2},
            width=10cm,
            height=8cm,
            table/col sep=comma,
            xlabel = number of samples $\numS$,
            ylabel = regret,
            grid=both,
            legend pos=north east]
    
     \addplot [red,mark=square,mark options={scale=.2}, thick] table[x={N},y={mean-eps=0.3}] {Data/res_exp_lambda3.csv};
    \addlegendentry{ERM}                   
\end{axis}
\end{tikzpicture}
}
\subfigure[Uniform]{
\begin{tikzpicture}[scale=.7]
\begin{axis}[
            title={},
            xmin=0,xmax=200,
            ymin=0,ymax=0.25,
            scaled y ticks={base 10:2},
            width=10cm,
            height=8cm,
            table/col sep=comma,
            xlabel = number of samples $\numS$,
            ylabel = regret,
            grid=both,
            legend pos=north east]
    
     \addplot [red,mark=square,mark options={scale=.2}, thick] table[x={N},y={mean-eps=0.3}] {Data/res_uniform.csv};
    \addlegendentry{ERM}                   
\end{axis}
\end{tikzpicture}
}
\caption{\textbf{Performance of ERM for different instances $(\zeta = 0.3)$.} Each figure depicts the regret of ERM for a fixed out-of-sample distribution as a function of the number of samples. The cumulative minimum curve corresponds to the lowest regret achieved by ERM by using at most $n$ samples $(q = 0.9)$}
\label{fig:mild_different_scaling}
\end{figure}

\subsection{Drifting environment}
\label{sec:beyond_worstcase_drfit}
We now explore the performance of weighted ERM policies for settings where the demand distribution is drifting over time as in \Cref{sec:many_dissimilarities} but when the distributions are not necessarily Bernoulli distributions. Given a drift parameter $\Delta$ and an out-of-sample distribution $F_0$, we define the sequence of historical distributions $(F_i)_{i \in \{1,\ldots \numS\}}$ such that for every $i \in \{1,\ldots, \numS\}$ and every $y \in [0,1]$,   $F_i(y) = \min(F_0(y) + i \cdot \Delta, 1)$. 

In \Cref{tab:mild_instance_misspecification} we report the performance of $k$-NN-ERM (formally defined in \Cref{sec:many_dissimilarities}) for various values of the $k$ parameter and the drift $\Delta$.

\begin{table}[h!]
\centering
\begin{tabular}{cccc|c}
  & \multicolumn{4}{c}{expected regret}   \\ 
 \cline{2-5}
 $\Delta$ & $k = 19$ & $k = 29 $ & $k = 59$ & Robust $k$ \\
\hline
\hline
$0.0010$&   $1.8 \cdot 10^{-3}$ & $1.3 \cdot 10^{-3}$   & $\mathbf{8.9 \cdot 10^{-4}}$ & $\mathit{1.5 \cdot 10^{-3}}$ ($k=27$)  \\
\hline
$0.0025$ &   $1.8 \cdot 10^{-3}$ &  $\mathbf{1.6 \cdot 10^{-3}}$  &  $2.7 \cdot 10^{-3}$ & $\mathit{2.3 \cdot 10^{-3}}$ ($k=17$) \\
\hline
$0.0050$ & $\mathbf{3.1 \cdot 10^{-3}}$ &   $3.8 \cdot 10^{-3}$ & $9.1 \cdot 10^{-3}$ & $\mathit{4.7 \cdot 10^{-3}}$ ($k=8$) \\
\end{tabular}
\caption{\textbf{Average regret of $k$-NN-ERM.} The table reports the average regret for different values of $k$ and $\Delta$ when $F_0$ is a truncated normal with parameter $\mu = 0.6$ and $\sigma = 0.3$. The bolded numbers correspond to the best average regret across all values of $k \in \{1,\ldots,100\}$ and the italicized numbers corresponds to the regret incurred by using the $k$ prescribed by our worst-case analysis (see \Cref{sec:many_dissimilarities}) ($\numS = 100$ and $q=0.9$).}
\label{tab:mild_instance_misspecification}
\end{table}

We first note that in contrast with the results in \Cref{tab:misspecification} the decision-maker cannot compute a priori the values in \Cref{tab:mild_instance_misspecification} as this requires to know the out-of-sample distribution $F_0$ (recall that, \Cref{tab:misspecification} was computed using worst-case distributions which may not be the true ones). Therefore, the bolded number is the best regret achievable with $k$-NN-ERM in the idealized scenario where the decision-maker is able to compute these values. 

The key takeaways from \Cref{tab:mild_instance_misspecification} is that  $i)$ the  $k$ prescribed by our worst-case provides the right order of magnitude for the number of historical samples that should be used with drifting distributions and $ii)$ the regret achieved by our worst-case $k$ allows to achieve a performance relatively close to the one of the idealized scenario. In fact, not knowing the shape of the demand distribution implies a performance deterioration of the same magnitude as using a misspecified value of $\Delta$. We see for instance that when $\Delta = 0.001$, the performance of our robust choice of $k =17$ yields a regret of $1.5 \cdot 10^{-3}$ as opposed to the ideal regret of $8.9 \cdot 10^{-4}$ when knowing all the distributions and committing to $k$-NN-ERM. However, we also remark that even when the decision-maker knows the shape and parameters of the out-of-sample distribution but wrongfully believe that $\Delta  = 0.0025$ they would incur a regret of $1.3 \cdot 10^{-3}$ by using $k=29$. Consequently, our exact characterization of the worst-case performance provides a relatively robust way to derive algorithmic insights about the choice of weights even in settings where the demand distributions are not Bernoulli distributions.

Finally, we would like to mention that the value of the achieved regret is much lower than the one suggested in the worst-case which leaves open the interesting question of deriving a characterization of the worst-case performance of WERM policies for a subclass of distributions.

\section{Extension to the Wasserstein distance case}
\label{sec:apx_Wasserstein}

In this section, we provide results showing how our methodology can be applied without leading to an exact characterization. We consider the setting in which the distance between distribution in the local condition (\Cref{def:local_const}) is measured with respect to the Wasserstein distance defined for any $F,H \in \Delta(\Y)$ as
\begin{equation}
\label{eq:W_def}
\|F - H \|_{W} = \int_{0}^1 |F(y) - H(y)| dy. 
\end{equation}

Our next result shows that by using a Lagrangian relaxation, one can bound the worst-case regret for any separable policy by finite dimensional optimization problem.

\begin{theorem}
\label{thm:upper_bound_bern_Wasserstein}
In what follows, assume that the distance between distributions is measured with respect to the Wasserstein distance.
For every $\numS \geq 1$, any sequence of contexts $\mathbf{x} = (x_i)_{i \in \{0,\ldots,\numS\}} \in \X^{\numS+1}$  and any separable policy $\pi$ we have,
\begin{align*}
\sup_{\mesOut \in \Delta(\Y)} \sup_{\substack{\mesIn{1},\ldots, \mesIn{\numS} \in \Delta(\Y)\\ \|\mesOut - \mesIn{i} \|_{W} \leq d(x_0,x_i) \, \forall i}} \mathbb{E}_{ \mathbf{y} \sim \mesIn{1} \times \ldots \times \mesIn{\numS}} \big[R(\pi(\mathbf{y}),\mesOut) \big]  
&\leq \inf_{\substack{\bm{\lambda} \in \mathbb{R}_{+}^{\numS}}} \sup_{\mu_0,\ldots,\mu_\numS \in [0,1]}  \mathbb{E}_{ \mathbf{y} \sim \mathcal{B}(\mu_1) \times \ldots \times \mathcal{B}(\mu_{\numS})} \big[R(\pi(\mathbf{x},\mathbf{y}),\mathcal{B}(\mu_{0})) \big] \\
&\quad+ \sum_{i=1}^\numS \lambda_i  \left( d(x_0,x_i) -  |\mu_0 -  \mu_i | \right) .
\end{align*}
\end{theorem}
The proof of this result is presented in \Cref{sec:apx_proof_Wasserstein}.

We note that \Cref{thm:upper_bound_bern_Wasserstein} allows to bound the initial infinite dimensional optimization problem by a $2\numS+1$ dimensional min max optimization problem. While this new problem can still be computationally challenging in general, we next illustrate how it can be used to derive upper bounds for certain instances. Our next result simplifies the inner maximization problem.
\begin{proposition}
\label{prop:ERM_Wasserstein_inner}
For any $\numS \geq 1$, any $\bm{\lambda} \in \mathbb{R}_{+}^\numS$ any $\bm{x} \in \X^{\numS+1}$ and any counting policy $\pi$, we have that,
\begin{equation*}
\sup_{(\mu_1,\ldots,\mu_\numS) \in [0,1]^\numS} \mathcal{L}(\bm{\mu},\bm{\lambda}) = \begin{cases}
\sup_{(\mu_1,\ldots,\mu_\numS) \in \{\mu_0,1\}^\numS} \mathcal{L}(\bm{\mu},\bm{\lambda}) \quad \text{if $\mu_0 \leq 1-q$}\\
\sup_{(\mu_1,\ldots,\mu_\numS) \in \{0,\mu_0\}^\numS} \mathcal{L}(\bm{\mu},\bm{\lambda}) \quad \text{o.w.}
\end{cases}
\end{equation*}
where
$\mathcal{L}(\bm{\mu},\bm{\lambda}) = \mathbb{E}_{ \mathbf{y} \sim \mathcal{B}(\mu_1) \times \ldots \times \mathcal{B}(\mu_{\numS})} \big[R(\pi(\mathbf{x},\mathbf{y}),\mathcal{B}(\mu_{0})) \big] + \sum_{i=1}^\numS \lambda_i  \left( d(x_0,x_i) -  |\mu_0 -  \mu_i | \right).$
\end{proposition}
The proof of this result is presented in \Cref{sec:apx_proof_Wasserstein}.

We next evaluate the worst-case performance of ERM when the local condition uses the Wasserstein distance by applying \Cref{thm:upper_bound_bern_Wasserstein} and \Cref{prop:ERM_Wasserstein_inner}. To illustrate this bound, we consider the setting studied in \Cref{sec:mmBound} where $d(x_0,x_i) = \zeta$ for all $i \in \{1,\ldots,\numS\}$. Substituting $\pi=\ERM$ and $d(x_0,x_i)=\zeta$ into the definition of $\mathcal{L}(\bm{\mu},\bm{\lambda})$, we have 
\begin{equation*}
\mathcal{L}(\bm{\mu},\bm{\lambda}) = \mathbb{E}_{ \mathbf{y} \sim \mathcal{B}(\mu_1) \times \ldots \times \mathcal{B}(\mu_{\numS})} \big[R(\ERM(\mathbf{x},\mathbf{y}),\mathcal{B}(\mu_{0})) \big] + \sum_{i=1}^\numS \lambda_i  \left( \zeta -  |\mu_0 -  \mu_i | \right).
\end{equation*}
By combining \Cref{thm:upper_bound_bern_Wasserstein} and \Cref{prop:ERM_Wasserstein_inner} we have established that,
\begin{align*}
\sup_{\mesOut \in \Delta(\Y)} &\sup_{\substack{\mesIn{1},\ldots, \mesIn{\numS} \in \Delta(\Y)\\ \|\mesOut - \mesIn{i} \|_{W} \leq d(x_0,x_i) \, \forall i}}  \mathbb{E}_{ \mathbf{y} \sim \mesIn{1} \times \ldots \times \mesIn{\numS}} \big[R(\ERM(\mathbf{y}),\mesOut) \big]  \\
&\qquad \leq \inf_{\substack{\bm{\lambda} \in \mathbb{R}_{+}^{\numS}}} \max \left( \sup_{\mu_0 \in [0,1-q]} \sup_{(\mu_1,\ldots,\mu_\numS) \in \{\mu_0,1\}^\numS} \mathcal{L}(\bm{\mu},\bm{\lambda}) , \sup_{\mu_0 \in [1-q,1]}  \sup_{(\mu_1,\ldots,\mu_\numS) \in \{0,\mu_0\}^\numS} \mathcal{L}(\bm{\mu},\bm{\lambda})  \right)\\
&\qquad \leq \inf_{\lambda \in \mathbb{R}_{+}} \max \left( \sup_{\mu_0 \in [0,1-q]} \sup_{(\mu_1,\ldots,\mu_\numS) \in \{\mu_0,1\}^\numS} \mathcal{L}(\bm{\mu},\lambda \cdot \bm{1}) , \sup_{\mu_0 \in [1-q,1]}  \sup_{(\mu_1,\ldots,\mu_\numS) \in \{0,\mu_0\}^\numS} \mathcal{L}(\bm{\mu},\lambda \cdot \bm{1})  \right),
\end{align*}
where $\bm{1}$ is an $\numS$-dimensional vector where all components are $1$.

Furthermore, given that the ERM policy is symmetric in the samples one can note that $\mathcal{L}(\bm{\mu},\lambda \cdot \bm{1}) $ is symmetric in $\bm{\mu}$. This allows to show that for every $\{a,b\} \in [0,1]$, 
\begin{equation*}
 \sup_{(\mu_1,\ldots,\mu_\numS) \in \{a,b\}^\numS} \mathcal{L}(\bm{\mu},\lambda \cdot \bm{1}) =  \sup_{(\mu_1,\ldots,\mu_\numS) \in M(a,b)} \mathcal{L}(\bm{\mu},\lambda \cdot \bm{1}),
\end{equation*}
where $M(a,b) = \{ \sum_{j=1}^i a \cdot \bm{e_j} + \sum_{j={i+1}}^\numS b \cdot \bm{e_j} \vert \text{for all $i \in \{0,\ldots,\numS\}$}\}.$ and $\bm{e_i}$ is the $i^{th}$ canonical vector in dimension $\numS$ whose coordinates are equal to $0$ except the $i^{th}$ one which is equal to $1$.

We have thus established that,
\begin{align*}	
\sup_{\mesOut \in \Delta(\Y)} & \sup_{\substack{\mesIn{1},\ldots, \mesIn{\numS} \in \Delta(\Y)\\ \|\mesOut - \mesIn{i} \|_{W} \leq d(x_0,x_i) \, \forall i}}  \mathbb{E}_{ \mathbf{y} \sim \mesIn{1} \times \ldots \times \mesIn{\numS}} \big[R(\ERM(\mathbf{y}),\mesOut) \big]  \\
&\qquad \leq \inf_{\lambda \in \mathbb{R}_{+}} \max \left( \sup_{\mu_0 \in [0,1-q]} \sup_{(\mu_1,\ldots,\mu_\numS) \in M(\mu_0,1)} \mathcal{L}(\bm{\mu},\lambda \cdot \bm{1}) , \sup_{\mu_0 \in [1-q,1]}  \sup_{(\mu_1,\ldots,\mu_\numS) \in M(0,\mu_0)} \mathcal{L}(\bm{\mu},\lambda \cdot \bm{1})  \right).
\end{align*}
This implies that that the worst-case performance of ERM can be upper bounded by evaluating $\numS$ functions on a two-dimensional grid.

We present in \Cref{fig:SAA_Wasserstein} the upper bound on the worst-case regret of ERM under the Wasserstein local condition that we obtain using our Lagrangian relaxation. We also show a lower bound implied by the worst-case regret under the Kolmogorov distance (see Remark 1 in \cite{besbes2022beyond}). Finally, we add for reference the evaluation of the bound \cite{mohri2012new} for Bernoulli distributions satisfying the local condition both for the Kolmogorov and the Wasserstein distance.
\begin{figure}[h]
\centering
\begin{tikzpicture}[scale=1][h]
\begin{axis}[
            title={},
            xmin=0,xmax=70,
            ymin=0.0,ymax=1,
            width=10cm,
            height=8cm,
            table/col sep=comma,
            xlabel = number of samples $\numS$,
            ylabel = worst-case regret,
            grid=both,
            skip coords between index={0}{1},
            legend pos=north east]
    \addplot [gray,thick,mark=square,mark options={scale=.4}] table[x={Ns},y={eps = 0.1}] {Data/mohri_bound_correct_eps_e-1_q9.csv};
    \addlegendentry{Upper bound (M \&MM12)}
\addplot [blue,mark=square,mark options={scale=.4}] table[x={Ns},y={epsilon=0.1}] {Data/paper_all_data_SAA_q9.csv};
    \addlegendentry{Lower bound}
     \addplot [red,mark=square,mark options={scale=.4}] table[x={N},y={upper_bound}] {Data/lagrangian_relaxation_bound_eps1e-1.csv};
    \addlegendentry{Upper bound (Our analysis)}                    
\end{axis}
\end{tikzpicture}
\caption{\textbf{Bounds on the worst-case regret of ERM under the Wasserstein distance}. The figure depicts bounds on the worst-case regret of the Empirical Risk Minimization for the Wasserstein distance for $\zeta = .1$ as a function of the sample size $\numS$. The lower bound corresponds to the worst-case regret under the Kolmogorov distance (see Remark 1 in \cite{besbes2022beyond}) (q=.9).}\label{fig:SAA_Wasserstein}
\end{figure}
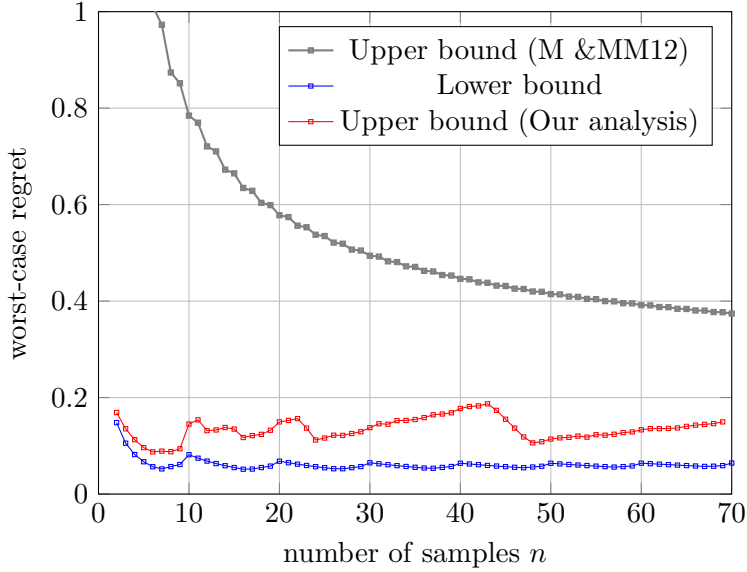

\Cref{fig:SAA_Wasserstein} shows that our Lagrangian relaxation approach provides much tighter bounds than the concentration-based ones derived in the literature. This highlights that our optimization-based approach can lead to a meaningful characterization of the worst-case regret of central policies even without resorting to an exact characterization of the worst-case distribution.

\subsection{Proofs}
\label{sec:apx_proof_Wasserstein}

\begin{proof}[\textbf{Proof of \Cref{thm:upper_bound_bern_Wasserstein}}]
For any $\mesOut, \mesIn{1},\ldots,\mesIn{\numS} \in \Delta(\Y)$ and for every $\bm{\lambda} \in \mathbb{R}_{+}^\numS$, we define the Lagrangian operator,
\begin{equation*}
\mathcal{L}( \mesOut, \bm{H}, \bm{\lambda}) = \mathbb{E}_{ \mathbf{y} \sim \mesIn{1} \times \ldots \times \mesIn{\numS}} \big[R(\pi(\mathbf{x},\mathbf{y}),\mesOut) \big] + \sum_{i=1}^\numS \lambda_i  \left( d(x_0,x_i) -  \|\mesOut -  \mesIn{i} \|_{W} \right), 
\end{equation*}
where $\bm{H} = (\mesIn{i})_{i \in \{1,\ldots,\numS\}}$. 
We note that,
\begin{align*}
\sup_{\mesOut \in \Delta(\Y)} \sup_{\substack{\mesIn{1},\ldots, \mesIn{\numS} \in \Delta(\Y)\\ \|\mesOut - \mesIn{i} \|_{W} \leq d(x_0,x_i) \, \forall i}} \mathbb{E}_{ \mathbf{y} \sim \mesIn{1} \times \ldots \times \mesIn{\numS}} \big[R(\pi(\mathbf{y}),\mesOut) \big]
&= \sup_{\mesOut, \mesIn{1},\ldots, \mesIn{\numS}  \in \Delta(\Y)}  \inf_{\bm{\lambda} \in\mathbb{R}_{+}^{\numS}} \mathcal{L}( \mesOut, \bm{H}, \bm{\lambda}) \\
&\stackrel{(a)}{\leq} \inf_{\bm{\lambda} \in\mathbb{R}_{+}^{\numS}} \sup_{\mesOut, \mesIn{1},\ldots, \mesIn{\numS}  \in \Delta(\Y)} \mathcal{L}( \mesOut, \bm{H}, \bm{\lambda}) 
\end{align*}
where $(a)$ holds by weak duality.

We next show that, for every $\bm{\lambda} \in \mathbb{R}_{+}^\numS$, we have that,
\begin{equation}
\label{eq:reduction_Bernoulli_W}
\sup_{\mesOut, \mesIn{1}, \ldots, \mesIn{\numS} \in \Delta(\Y)} \mathcal{L}( \mesOut, \bm{H}, \bm{\lambda}) = \sup_{\mu_0,\ldots,\mu_{\numS} \in [0,1]} \mathcal{L}( \mathcal{B}(\mu_0), \ldots,  \mathcal{B}(\mu_\numS), \bm{\lambda}).
\end{equation}
Therefore, by applying \Cref{lem:cdf_integral_reduction} and using \eqref{eq:W_def} we obtain that the Lagrangian operator can be rewritten for every $\mesOut, \mesIn{1}, \ldots, \mesIn{\numS} \in \Delta(\Y)$, as
\begin{equation}
\label{eq:integral_W}
\mathcal{L}( \mesOut, \bm{H}, \bm{\lambda}) = \int_0^1 \Psi_{W}^\pi(\mesOut(y),\mesIn{1}(y), \ldots, \mesIn{\numS}(y)) dy,
\end{equation}
where for any $(z_0,\ldots,z_\numS) \in [0,1]^{\numS+1}$, 
\begin{align*}
\Psi_W^{\pi}(z_0,\ldots,z_\numS) = (\cu + \co) \cdot \left[ P^{\pi} (z_1,\ldots,z_\numS)  \cdot(q-z_0)  + \max \{ z_0 -q,0 \} \right] + \sum_{i=1} \lambda_i \cdot (d(x_0,x_i) - |z_0 - z_i|),
\end{align*}
and $P^\pi$ is the function associated to the separable policy $\pi$. 

We note that \eqref{eq:integral_W} implies that,
\begin{align*}
\sup_{\mesOut, \mesIn{1}, \ldots, \mesIn{\numS} \in \Delta(\Y)} \mathcal{L}( \mesOut, \bm{H}, \bm{\lambda}) 
&= \sup_{\mesOut, \mesIn{1}, \ldots, \mesIn{\numS} \in \Delta(\Y)}  \int_0^1 \Psi_{W}^\pi(\mesOut(y),\mesIn{1}(y), \ldots, \mesIn{\numS}(y)) dy\\
&\leq \int_0^1 \sup_{(\alpha_0,\ldots,\alpha_\numS) \in [0,1]^{\numS+1}} \Psi_{W}^\pi(\alpha_0,\ldots,\alpha_\numS) dy\\
&= \sup_{(\alpha_0,\ldots,\alpha_\numS) \in [0,1]^{\numS+1}} \Psi_{W}^\pi(\alpha_0,\ldots,\alpha_\numS) = \sup_{\mu_0,\ldots,\mu_{\numS} \in [0,1]} \mathcal{L}( \mathcal{B}(\mu_0), \ldots,  \mathcal{B}(\mu_\numS), \bm{\lambda}),
\end{align*}
where the last equality follows from \eqref{eq:psi_bern}. 

We note that the reverse equality holds as the set of Bernoulli distributions is included in the set of all distributions. This implies \eqref{eq:reduction_Bernoulli_W}. We finally conclude the proof by taking a supremum over $\bm{\lambda}$.

\end{proof}

\begin{proof}[\textbf{Proof of \Cref{prop:ERM_Wasserstein_inner}}]
For every $\bm{\lambda} \in \mathbb{R}_{+}^\numS$ and $\bm{\mu} = (\mu_i)_{i \in \{0,\ldots,\numS\}} \in [0,1]^\numS$, recall that we defined
\begin{equation*}
\mathcal{L}(\bm{\mu},\bm{\lambda}) = \mathbb{E}_{ \mathbf{y} \sim \mathcal{B}(\mu_1) \times \ldots \times \mathcal{B}(\mu_{\numS})} \big[R(\pi(\mathbf{x},\mathbf{y}),\mathcal{B}(\mu_{0})) \big] + \sum_{i=1}^\numS \lambda_i  \left( d(x_0,x_i) -  |\mu_0 -  \mu_i | \right).
\end{equation*}

Fix $i \in \{1,\ldots,\numS\}$ and $\bm{\lambda} \in \mathbb{R}_{+}^\numS$. For every $(\mu_0,\ldots,\mu_{i-1},\mu_{i+1}, \ldots, \mu_{\numS})$ define the function,
\begin{equation*}
\mathcal{L}_i : \begin{cases}
[0,1] \to \mathbb{R}\\
\mu \mapsto \mathcal{L}((\mu_0,\ldots,\mu_{i-1}, \mu, \mu_{i+1}, \ldots, \mu_{\numS}); \bm{\lambda}).
\end{cases}
\end{equation*}
\textit{Step 1:} We next show that,
\begin{equation}
\label{eq:optimization_single_mu}
\sup_{\mu \in [0,1]}  \mathcal{L}_{i}(\mu) = \begin{cases}
\max(\mathcal{L}_{i}(\mu_0), \mathcal{L}_{i}(1)) \quad \text{when $\mu_0 \leq 1-q$}\\
\max(\mathcal{L}_{i}(0), \mathcal{L}_{i}(\mu_0)) \quad \text{o.w.}
\end{cases}
\end{equation}
 
Let,
\begin{equation*}
\rho_i: \mu \mapsto
\mathbb{E}_{ \mathbf{y} \sim \mathcal{B}\left( \mu_1\right) \times \ldots \times \mathcal{B}\left( \mu_{i-1}\right) \times \mathcal{B}\left( \mu\right) \times \mathcal{B}\left( \mu_{i+1}\right) \times \ldots \times \mathcal{B}\left( \mu_{\numS}\right)}  \big[R(\pi(\mathbf{y}),\mathcal{B}\left( \mu_0\right)) \big].
\end{equation*}
Then, $\mathcal{L}_i(\mu) = \rho_i(\mu) - \lambda_i \cdot | \mu_0 - \mu| + C_i$, where $C_i$ is a constant independent of $\mu$.
 
First, assume that $\mu_0 \leq 1-q$.
In the proof of \Cref{prop:worst_history} we show that $\rho_i$ is non-decreasing. Furthermore, one can see that for any counting policy $\pi$, $\rho_i$ is in fact an affine function of $\mu$. Therefore, there exists $\alpha \geq 0$ and $C \in \mathbb{R}$ such that $\rho_{i}(\mu) = \alpha \cdot \mu + C$ for every $\mu \in [0,1]$. This implies that,
\begin{equation*}
\mathcal{L}_i(\mu) = \begin{cases}
\left( \alpha + \lambda_i \right)  + C'_i  \quad \text{if $\mu \leq \mu_0$}\\
\left( \alpha - \lambda_i \right)  + C''_i  \quad \text{if $\mu > \mu_0$}.
\end{cases}
\end{equation*}
Hence,
\begin{equation*}
\sup_{\mu \in [0,1]} \mathcal{L}_i(\mu)  \stackrel{(a)}{=} \sup_{\mu \in [\mu_0,1]} \mathcal{L}_i(\mu) \stackrel{(b)}{=} \max(\mathcal{L}_i(\mu_0),\mathcal{L}_i(1)), 
\end{equation*}
where $(a)$ holds because $\alpha$ and $\lambda_i$ are non-negative and $(b)$ holds because $\mathcal{L}_i$ is affines on $[\mu_0,1]$. 

A similar argument can be used when $\mu > 1-q$ to prove that \eqref{eq:optimization_single_mu} holds.
 
\textit{Step 2:} By successively applying \Cref{eq:optimization_single_mu} we obtain that,
\begin{equation*}
\sup_{(\mu_1,\ldots,\mu_\numS) \in [0,1]^\numS} \mathcal{L}(\bm{\mu},\bm{\lambda}) = \begin{cases}
\sup_{(\mu_1,\ldots,\mu_\numS) \in \{\mu_0,1\}^\numS} \mathcal{L}(\bm{\mu},\bm{\lambda}) \quad \text{if $\mu_0 \leq 1-q$}\\
\sup_{(\mu_1,\ldots,\mu_\numS) \in \{0,\mu_0\}^\numS} \mathcal{L}(\bm{\mu},\bm{\lambda}) \quad \text{o.w.}
\end{cases}
\end{equation*}

\end{proof}

\section{Estimation of the dissimilarity: Illustrative Examples}\label{sec:apx_estimation}

In this section, we provide a brief proof of concept of a method that could be used to estimate the dissimilarity in practice. We note that tackling this question in depth, which is beyond the scope of the current work, is itself a very interesting avenue for future research.

\subsection{Single dissimilarity estimation}\label{sec:estimate_single_diss}
Consider the simplified scenario in which a decision-maker is selling a new white T-shirt, and let  $x_{0}$ denote the feature vector of this product. Suppose the seller has historical data available from a similar black T-shirt with feature vector $x_{1}$. Furthermore, assume the seller has historically sold a white shirt with feature $x_{0}'$ and a similar black shirt with feature $x_{1}'$. The seller may reasonably assume that $d(x_{0}, x_{1}) = d(x_{0}', x_{1}')$, implying that the dissimilarity between white and black T-shirts equals the dissimilarity between white and black shirts. Consequently, the seller can leverage historical shirt sales data to estimate the dissimilarity for the T-shirts. We emphasize in this simplified example that the seller does not use shirt sales data directly for T-shirt inventory decisions, implicitly assuming the dissimilarity between shirts and T-shirts is large, while the effect of color remains consistent across product categories. That is, $x_1$ is part of our historical data points to use but $x'_0,x'_1$ are not---$x'_0,x'_1$ are only used to help estimate the heterogeneity between $x_1$ and the new product of interest.

Formally, let $m$ be the number of past samples observed for each shirt. and let $\bm{Y}^{\mathrm{white}}, \bm{Y}^{\mathrm{black}} \in \Y^{m}$, be the vectors of sample for the white and black shirts. We assume these vectors are independently and identically distributed (i.i.d.) samples drawn from their respective demand distributions. Given these assumptions, we define the following estimator for $d(x_{0},x_{1})$,
\begin{equation*}
\hat{d}_{m}(\bm{Y}^{\mathrm{white}}, \bm{Y}^{\mathrm{black}}) = \| \hat{F}_{\mathrm{white}} - \hat{F}_{\mathrm{black}} \|_{\infty},
\end{equation*}
where $\hat{F}_{\mathrm{white}}$ (resp. $\hat{F}_{\mathrm{black}}$) is the empirical distribution of the samples $\bm{Y}^{\mathrm{white}}$ (resp. $\bm{Y}^{\mathrm{black}}$).

We next numerically illustrate the dissimilarity estimates obtained as a function of the sample size $m$. 
For each instance, we take a base distribution $F_{\mathrm{white}}$ (examples shown in \Cref{fig:estimate_zeta}) and define its shifted counterpart
$F_{\mathrm{black}}(y) = \min\!\big(F_{\mathrm{white}}(y)+\zeta,\,1\big)$, for some $\zeta>0.$
Given independent samples of size $m$ from both distributions, we compute the empirical dissimilarity $\hat{d}_{m}(\bm{Y}^{\mathrm{white}},\bm{Y}^{\mathrm{black}})$ and examine how these estimates vary with $m$ across different distributional instances.

In \Cref{fig:estimate_zeta}, we plot the average dissimilarity together with the 95\% empirical region (between the 2.5th and 97.5th percentiles) of the estimator, computed over $K = 1000$ independent instances, for three distributions supported on $[0,1]$: truncated normal, truncated exponential, and uniform.

\begin{figure}[h!]
\centering
\subfigure[Truncated Normal $\mu=0.6,\ \sigma=0.3$]{
\begin{tikzpicture}
\begin{axis}[
  scale only axis=true,
  width=6cm, height=4.5cm,
  legend pos=north east,
  grid=both,
  tick label style={font=\small},
  label style={font=\small},
  legend style={font=\small, fill=none, draw=none},
  scaled y ticks=false,
  y tick label style={/pgf/number format/fixed},
  xmin=0, xmax=2000,
  ymin=0, ymax=0.3,
  xtick distance=400,
  table/col sep=comma,
  xlabel=number of samples,
  ylabel=dissimilarity,
]

\addplot[name path=Ublue, draw=none, forget plot]
  table[x={n}, y={p97p5_truncnormmu0.6sigma0.3_d0p05}] {Data/kolmogorov_estimates_all_deltas.csv};
\addplot[name path=Lblue, draw=none, forget plot]
  table[x={n}, y={p2p5_truncnormmu0.6sigma0.3_d0p05}] {Data/kolmogorov_estimates_all_deltas.csv};
\addplot[blue, fill=blue, fill opacity=0.12, forget plot]
  fill between[of=Ublue and Lblue];
\addplot [blue, mark=square, mark options={scale=0.3}, line width=0.4mm]
  table[x={n}, y={mean_truncnormmu0.6sigma0.3_d0p05}] {Data/kolmogorov_estimates_all_deltas.csv};
\addlegendentry{$\zeta = 0.05$}

\addplot[name path=Ured, draw=none, forget plot]
  table[x={n}, y={p97p5_truncnormmu0.6sigma0.3_d0p1}] {Data/kolmogorov_estimates_all_deltas.csv};
\addplot[name path=Lred, draw=none, forget plot]
  table[x={n}, y={p2p5_truncnormmu0.6sigma0.3_d0p1}] {Data/kolmogorov_estimates_all_deltas.csv};
\addplot[red, fill=red, fill opacity=0.12, forget plot]
  fill between[of=Ured and Lred];
\addplot [red, mark=square, mark options={scale=0.3}, line width=0.4mm]
  table[x={n}, y={mean_truncnormmu0.6sigma0.3_d0p1}] {Data/kolmogorov_estimates_all_deltas.csv};
\addlegendentry{$\zeta = 0.1$}

\addplot [blue, dashed, domain=0:2000, line width=0.4mm] {0.05};
\addplot [red,  dashed, domain=0:2000, line width=0.4mm] {0.1};

\end{axis}
\end{tikzpicture}
}
\subfigure[Truncated Exponential $\lambda = 3$]{
\begin{tikzpicture}
\begin{axis}[
  scale only axis=true,
  width=6cm, height=4.5cm,
  legend pos=north east,
  grid=both,
  tick label style={font=\small},
  label style={font=\small},
  legend style={font=\small, fill=none, draw=none},
  scaled y ticks=false,
  y tick label style={/pgf/number format/fixed},
  xmin=0, xmax=2000,
  ymin=0, ymax=0.3,
  xtick distance=400,
  table/col sep=comma,
  xlabel=number of samples,
  ylabel=dissimilarity,
]

\addplot[name path=Ublue, draw=none, forget plot]
  table[x={n}, y={p97p5_truncexponlambda3_d0p05}] {Data/kolmogorov_estimates_all_deltas.csv};
\addplot[name path=Lblue, draw=none, forget plot]
  table[x={n}, y={p2p5_truncexponlambda3_d0p05}] {Data/kolmogorov_estimates_all_deltas.csv};
\addplot[blue, fill=blue, fill opacity=0.12, forget plot]
  fill between[of=Ublue and Lblue];
\addplot [blue, mark=square, mark options={scale=0.3}, line width=0.4mm]
  table[x={n}, y={mean_truncexponlambda3_d0p05}] {Data/kolmogorov_estimates_all_deltas.csv};
\addlegendentry{$\zeta = 0.05$}

\addplot[name path=Ured, draw=none, forget plot]
  table[x={n}, y={p97p5_truncexponlambda3_d0p1}] {Data/kolmogorov_estimates_all_deltas.csv};
\addplot[name path=Lred, draw=none, forget plot]
  table[x={n}, y={p2p5_truncexponlambda3_d0p1}] {Data/kolmogorov_estimates_all_deltas.csv};
\addplot[red, fill=red, fill opacity=0.12, forget plot]
  fill between[of=Ured and Lred];
\addplot [red, mark=square, mark options={scale=0.3}, line width=0.4mm]
  table[x={n}, y={mean_truncexponlambda3_d0p1}] {Data/kolmogorov_estimates_all_deltas.csv};
\addlegendentry{$\zeta = 0.1$}

\addplot [blue, dashed, domain=0:2000, line width=0.4mm] {0.05};
\addplot [red,  dashed, domain=0:2000, line width=0.4mm] {0.1};

\end{axis}
\end{tikzpicture}
}
\subfigure[Uniform]{
\begin{tikzpicture}
\begin{axis}[
  scale only axis=true,
  width=6cm, height=4.5cm,
  legend pos=north east,
  grid=both,
  tick label style={font=\small},
  label style={font=\small},
  legend style={font=\small, fill=none, draw=none},
  scaled y ticks=false,
  y tick label style={/pgf/number format/fixed},
  xmin=0, xmax=2000,
  ymin=0, ymax=0.3,
  xtick distance=400,
  table/col sep=comma,
  xlabel=number of samples,
  ylabel=dissimilarity,
]

\addplot[name path=Ublue, draw=none, forget plot] table[x={n}, y={p97p5_uniform01_d0p05}] {Data/kolmogorov_estimates_all_deltas.csv};
\addplot[name path=Lblue, draw=none, forget plot] table[x={n}, y={p2p5_uniform01_d0p05}] {Data/kolmogorov_estimates_all_deltas.csv};
\addplot[blue, fill=blue, fill opacity=0.12, forget plot] fill between[of=Ublue and Lblue];

\addplot [blue, mark=square, mark options={scale=0.3}, line width=0.4mm, skip coords between index={0}{1}] table[x={n}, y={mean_uniform01_d0p05}] {Data/kolmogorov_estimates_all_deltas.csv};
\addlegendentry{$\zeta = 0.05$}

\addplot[name path=Ured, draw=none, forget plot]
    table[x={n}, y={p97p5_uniform01_d0p1}] {Data/kolmogorov_estimates_all_deltas.csv};
\addplot[name path=Lred, draw=none, forget plot]
    table[x={n}, y={p2p5_uniform01_d0p1}] {Data/kolmogorov_estimates_all_deltas.csv};
\addplot[red, fill=red, fill opacity=0.12, forget plot]
    fill between[of=Ured and Lred];
\addplot [red, mark=square, mark options={scale=0.3}, line width=0.4mm,
          skip coords between index={0}{1}]
    table[x={n}, y={mean_uniform01_d0p1}] {Data/kolmogorov_estimates_all_deltas.csv};
\addlegendentry{$\zeta = 0.1$}

\addplot [blue, dashed, domain=0:2000, line width=0.4mm] {0.05};
\addplot [red, dashed, domain=0:2000, line width=0.4mm] {0.1};

\end{axis}
\end{tikzpicture}
}
\caption{\textbf{Estimation of dissimilarity as a function of the sample size.} Solid lines represent the average estimated dissimilarity, shaded regions show the 2.5th--97.5th percentiles across instances (an empirical 95\% interval), and dashed lines represent the corresponding true dissimilarities, for two values of the parameter~$\zeta$.}
\label{fig:estimate_zeta}
\end{figure}

\Cref{fig:estimate_zeta} presents the empirical dissimilarity between two products estimated from finite samples. 
Each plot shows how the estimated dissimilarity approaches the true dissimilarity (dashed lines) as the sample size increases. 
The estimates remain close to the truth even with moderate sample sizes: for example, with $m=400$ samples the estimated dissimilarity is within a factor of $2$ of the true value when $\zeta=0.1$, and within a factor of $3$ when $\zeta=0.05$, in about $95\%$ of the instances. 
We also note that our estimator is positively biased, so the estimated dissimilarity is more conservative than the actual value. 
Hence, regret guarantees derived from plugging in these estimates naturally provide an upper bound on the true performance.

\subsection{Estimating the time drift}\label{sec:drift_estimation}
We next consider a setting with time drift.
Formally, let $m$ be the number of samples observed from each distribution. Consider a base cumulative distribution function (cdf) $F_0$ defined on the interval $[0,1]$. Given a drift parameter $\Delta > 0$, we define a series of shifted distributions as follows: for each integer $i \in {1,2,3}$ and for every $x \in [0,1]$, the shifted cdf is defined by $F_i(x) = \min\left(F_0(x) + i \cdot \Delta, 1\right)$. We investigate numerically the behavior of an estimator for $\Delta$ based on the observed empirical Kolmogorov distances between these distributions.

Specifically, for each distributional instance (truncated normal with $\mu = 0.6, \sigma = 0.3$, truncated exponential with $\lambda = 3$, and uniform on $[0,1]$) and each true value of $\Delta \in \{0.01, 0.05, 0.1\}$, we independently generate $K=5000$ repetitions. In each repetition $k \in \{1,\ldots,K\}$, we draw samples $\bm{Y}^{(0)}_k, \ldots, \bm{Y}^{(3)}_{k}$, each consisting of $m$ i.i.d. realizations from their respective distributions $F_0, \ldots, F_3$. We then compute the empirical Kolmogorov distances
\begin{equation*}
\hat{d}_{i,k} = \|\hat{F}_0^{(k)} - \hat{F}_i^{(k)}\|_{\infty}, \quad i \in \{1,2,3\},
\end{equation*}
where $\hat{F}_i^{(k)}$ denotes the empirical distribution of sample $\bm{Y}^{(i)}_k$. 
Next, we estimate the shift parameter $\Delta$ by solving the following linear regression problem for each repetition $k$:
\begin{equation*}
(\hat{\alpha}_{k},\hat{\Delta}_{k}) = \argmin_{\alpha,\Delta} \sum_{i=1}^{3} \left(\hat{d}_{i,k} - (\alpha + i \cdot \Delta)\right)^2.
\end{equation*}
$\hat{\Delta}_{k}$ is the slope-based estimate of $\Delta$. 
 Finally, we report the average estimator across all repetitions, defined as,
$\hat{\Delta} = \frac{1}{K} \sum_{k=1}^{K} \hat{\Delta}_k.$
We report our results in \Cref{tab:drift-estimates}.

\begin{table}[h!]
\centering
\resizebox{\textwidth}{!}{%
\begin{tabular}{lccc||ccc}
  & \multicolumn{3}{c||}{Estimated Drift ($m = 20$)}
  & \multicolumn{3}{c}{Estimated Drift ($m = 200$)} \\ 
\cline{2-7}
Distribution & $\Delta=0.01$ & $\Delta=0.05$ & $\Delta=0.1$
             & $\Delta=0.01$ & $\Delta=0.05$ & $\Delta=0.1$ \\
\hline
Truncated Normal ($\mu=0.6,\sigma=0.3$)
    & 0.00 [--0.10, 0.10] & 0.03 [--0.08, 0.15] & 0.09 [--0.03, 0.20]
    & 0.01 [--0.03, 0.04] & 0.05 [0.02, 0.09] & 0.10 [0.06, 0.14] \\
Truncated Exponential ($\lambda=3$)
    & 0.00 [--0.10, 0.10] & 0.03 [--0.08, 0.15] & 0.09 [--0.03, 0.20]
    & 0.01 [--0.03, 0.04] & 0.05 [0.02, 0.09] & 0.10 [0.06, 0.14] \\
Uniform
    & 0.00 [--0.10, 0.10] & 0.03 [--0.08, 0.15] & 0.09 [--0.03, 0.20]
    & 0.01 [--0.03, 0.04] & 0.05 [0.02, 0.09] & 0.10 [0.06, 0.14] \\
\end{tabular}
}
\caption{\textbf{Performance of the slope-based drift estimator $\hat{\Delta}$.} 
We report the mean estimate with its empirical 95\% quantile interval [2.5th, 97.5th percentile] across repetitions.}
\label{tab:drift-estimates}
\end{table}
As shown in \Cref{tab:drift-estimates}, our numerical results indicate that even with a relatively small sample size for each distribution ($m = 20$), the estimated drift parameters are within a reasonable distance  when the true drift parameter is high enough (e.g. above $0.05$). For lower drift parameters, more samples are needed to obtain a precision within $50\%$ of the true value. This observation holds consistently across the three distributional classes we examined. In light of the robustness results we provide in \Cref{sec:misspecification} for scenarios with misspecified $\Delta$, these numerical findings underscore that even limited sample data can yield practically useful estimates for decision-making. We emphasize that further refinement of our ``naive'' Kolmogorov-distance-based estimator can also enhance accuracy.

\end{document}